\newtheorem{theorem}{Theorem}
\newtheorem{corollary}[theorem]{Corollary}
\newtheorem{proposition}[theorem]{Proposition}
\newtheorem{remark}[theorem]{Remark}
\newtheorem{definition}[theorem]{Definition}
\newcommand{\setword}[2]{%
  \phantomsection
  #1\def\@currentlabel{\unexpanded{#1}}\label{#2}%
}
\newenvironment{framework}[1][htb]{%
    \renewcommand{\ALG@name}{Framework}
   \begin{algorithm}[#1]%
  }{\end{algorithm}}
\def\set@curr@file#1{\def\@curr@file{#1}} 
\def\shownotes{1}  
\newcommand{\authnote}[2]{$\ll$\textsf{\footnotesize #1 notes: #2}$\gg$}
\newcommand{\authnote}[2]{}
\newcommand*\rel@kern[1]{\kern#1\dimexpr\macc@kerna}
\newcommand*\widebar[1]{%
  \begingroup
  \def\mathaccent##1##2{%
    \rel@kern{0.8}%
    \overline{\rel@kern{-0.8}\macc@nucleus\rel@kern{0.2}}%
    \rel@kern{-0.2}%
  }%
  \macc@depth\@ne
  \let\math@bgroup\@empty \let\math@egroup\macc@set@skewchar
  \mathsurround\z@ \frozen@everymath{\mathgroup\macc@group\relax}%
  \macc@set@skewchar\relax
  \let\mathaccentV\macc@nested@a
  \macc@nested@a\relax111{#1}%
  \endgroup
}
\newcommand{\argmax}{\mathop{\mathrm{argmax}}}
\def\cA{\mathcal{A}}
\def\cD{\mathcal{D}}
\def\cF{\mathcal{F}}
\def\cG{\mathcal{G}}
\def\cH{\mathcal{H}}
\def\cR{\mathcal{R}}
\def\cW{\mathcal{W}}
\def\cX{\mathcal{X}}
\DeclareMathSymbol{\mrq}{\mathord}{operators}{`'}
\begin{document}

%

%

\twocolumn[

\aistatstitle{Adapting to Online Distribution Shifts in Deep Learning: A Black-Box Approach}

\aistatsauthor{ 
Dheeraj Baby$^{1}$\hspace{5mm}
Boran Han$^{1}$\hspace{5mm}
Shuai Zhang$^{1}$\hspace{5mm}
Cuixiong Hu$^{1}$\hspace{5mm}
Yuyang Wang$^{1}$\hspace{5mm}
Yu-Xiang Wang$^{1,2}$ 
}

\vspace{2mm}
\aistatsaddress{ 
$^{1}$Amazon \\
$^{2}$University of California San Diego \\
\vspace{1mm}
\small{Correspondence to: \texttt{dheerajbaby@gmail.com}}
}

\runningauthor{Dheeraj Baby, Boran Han, Shuai Zhang, Cuixiong Hu, Yuyang (Bernie) Wang, Yu-Xiang Wang}
]

\begin{abstract}
We study the well-motivated problem of \emph{online distribution shift} in which the data arrive in batches and the distribution of each batch can change arbitrarily over time. Since the shifts can be large or small, abrupt or gradual, the length of the relevant historical data to learn from may vary over time, which poses a major challenge in designing algorithms that can automatically adapt to the best ``attention span'' while remaining computationally efficient. We propose a meta-algorithm that takes any network architecture and any Online Learner (OL) algorithm as input and produces a new algorithm which \emph{provably} enhances the performance of the given OL under non-stationarity. Our algorithm is efficient (it requires maintaining only $O(\log(T))$ OL instances) and adaptive (it automatically chooses OL instances with the ideal ``attention'' length at every timestamp). Experiments on various real-world datasets across text and image modalities show that our method \emph{consistently} improves the accuracy of user specified OL algorithms for classification tasks. Key novel algorithmic ingredients include a \emph{multi-resolution instance} design inspired by wavelet theory and a \emph{cross-validation-through-time} technique. Both could be of independent interest.
\end{abstract}

\section{Introduction} \label{sec:intro}
Many real-world Machine Learning (ML) problems can be cast into the framework of online learning where a model continuously learns from an online datastream. For example, consider the task of classifying the gender from high-school yearbook images. Suppose that the data is presented in an online manner where at each timestamp, the learner is asked to classify images from that timestamp. 
An online ML model will continuously adjust its parameters based on the data it received sequentially. Shift in the data distribution across the timestamps in a datastream constitutes a significant challenge in the design of online learning algorithms.
For instance, in the case of high-school gender classification, the appearance characteristics of a population, such as fashion style or racial diversity, can evolve slowly over time. Such distribution shifts can cause models learned using old data to yield poor performance on the most recent or relevant data distribution. On the other hand, one can leverage an adaptively selected portion of the old data if the distribution is smoothly evolving. An ideal goal is to maximise the accuracy attainable at each timestamp (rather than maximising the average across all timestamps). Effectively handling this problem poses a common challenge in practical applications.

On the other hand, modeling how the data distribution evolves over time requires one to make restrictive assumptions while designing an effective learning algorithm. Unfortunately, most often the distribution shifts are caused by complex confounders which are hard to model \cite{zhu2014}. Consequently such assumptions may not be satisfied or even verifiable in practice. This leads to the phenomenon where the strong assumptions about the evolution of distribution can only contribute to more noise than signal into the process of algorithm design.

\begin{figure}
\vspace{-7mm}
\begin{minipage}{0.45\textwidth}
  \begin{framework}[H]
      \begin{algorithmic}[1]
      \STATE Initialize learner's hypothesis $h_1: \cX \rightarrow \mathcal Y$.
        \FOR{each round $t \in [T] := \{1,\ldots,T \}$}
                    
	\STATE Nature samples $n_t > 1$ covariate-label pairs  $(x_1,y_1),\ldots,(x_{n_t},y_{n_t})$ iid from a distribution $\mathcal{D}_t$ on the space $\cX \times \mathcal Y$. \\
        \STATE The covariates are revealed to the learner.
        \STATE Learner predicts their labels using $h_t$.
        \STATE True labels are revealed to the learner.
        \STATE Learner updates its hypothesis to $h_{t+1}$ using (a part of) the revealed labelled data.
        \ENDFOR
    \end{algorithmic}
    \caption{Batched online interaction protocol for classification}\label{fig:proto} 
\end{framework}
\end{minipage}
\vspace{-5mm}
\end{figure} 

To study this challenge closely, we consider the problem of online classification under distribution shifts without explicitly modeling the evolution of such shifts. A typical protocol for online classification is presented in Framework \ref{fig:proto}. In this paper, we use the term \textbf{online learner (OL)} to mean any learner that operates under the protocol in Framework \ref{fig:proto}. There is a rich body of work \cite{besbes2015non,zhang2018adaptive} (see Sec. \ref{sec:lit} for a broader overview) that studies principled ways of handling non-stationarity under convex loss functions. However, due to the huge success of deep learning, many of the modern ML systems use deep nets where the convexity assumptions are violated. This limits the applicability of methods that only handle convex losses to relatively simple use cases such as logistic regression, SVMs or fine-tuning the linear layer of a neural network.

Moving forward, our goal will be to effectively adapt to distribution shifts without imposing convexity assumptions on losses. For online learning problems, one can continually update the parameters of the underlying network based on the new data as it sequentially arrives. For example we can use online gradient descent or continual learning algorithms such as \cite{si} in hope to control the generalization error at each round in the online protocol. However, as noted in \cite{yao2022wild}, the performance of such methods can be limiting under distribution shifts. In this paper, we provide a meta-algorithm that takes in an arbitrary OL as a black-box and produces a new algorithm that has better classification accuracy under distribution shift. The black-box nature allows us to leverage the success of deep learning while still being able to adapt to distribution shift. This makes our method more widely applicable in practice in comparison to methods that only handle convex losses.  Our key contributions are as follows:


\setlist{nolistsep}
\begin{itemize}
\itemsep0em

    \item We develop a meta-algorithm \texttt{AWE} (Accuracy Weighted Ensemble, Alg.\ref{fig:awe}) that takes a black-box OL as input and improve its performance for online classification problem under distribution shifts. The method primarily contains two parts: (1) Multi-Resolution Instances (MRI) and (2) Cross-Validation-Through-Time (CVTT). 


    

    \item We obtain strong theoretical guarantees. For the MRI design, we show that it covers at least a fourth of all datapoints from most recent distribution (Theorem \ref{thm:mri}). We also give bounds for the generalization error and dynamic regret of \texttt{AWE}(Theorems \ref{thm:gen}, \ref{thm:dynamic}).

    \item We conduct experiments on various datasets with \emph{in-the-wild} distribution shifts across image and text modalities and find that our method \emph{consistently} leads to improved performance (Sec. \ref{sec:exp}) while incurring only a logarithmic overhead in memory and time in comparison to the base OL algorithm.
    
\end{itemize}

\textbf{Notes on key technical novelties.} 

\textbf{a) black-box nature:} We remark that the idea of enhancing the performance of a black-box algorithm under non-stationarity has been proposed in \cite{daniely2015strongly} (see Appendix \ref{app:daniely} for a comparison). The algorithm in \cite{daniely2015strongly} as well as ours are both based on running multiple instances of a base learner, where each instance is trained from a unique time-point in history and combining their predictions at a future time-point. However, algorithmic components that facilitate our black-box reduction differs from theirs in two aspects: i) \textbf{Data-efficient instances.} In Appendix \ref{app:daniely1}, by taking the specific case of a piece-wise stationary distribution shift, we show an example where the Geometric Covering intervals of \cite{daniely2015strongly} fails to guarantee existence of an instance that has been trained on adequate amount of data from most recent distribution. Our MRI construction of maintaining base learner instances fixes this problem (see Theorem \ref{thm:mri}) leading to a more data-efficient way of instantiating the base learners. ii) \textbf{Faster regret rates.} Suppose in Framework \ref{fig:proto}, after each round, $N$ labelled datapoints are revealed. The scheme in \cite{daniely2015strongly} guarantees an average regret of $O(1/\sqrt{|I|})$ against the best instance in any interval $I$. However, when the data distribution is slowly varying (or constant in the best case) within interval $I$, our scheme lead to a faster average regret of $O(1/\sqrt{N |I|})$. This is attributed to the CVTT technique which pools together datapoints from similar distribution in adjacent timestamps while estimating the loss of each instance in the recent distribution. The high accuracy estimates of losses allows us to quickly learn/identify the most appropriate instance for the recent distribution leading to fast regret rates. (see Appendix \ref{app:daniely2} for further details)


\textbf{b) logarithmic overhead:} Our MRI construction requires to maintain only a pool of $O(\log T)$ OL instances while guaranteeing the existence of instances in the pool that has been trained on reasonable amount of data exclusively from the recent distribution (see Theorem \ref{thm:mri}).

\textbf{c) comparison to } \cite{mazzetto2023adaptive}\textbf{:} 
We remark that a recent break-through due to  \cite{mazzetto2023adaptive} also provides a way for adapting to distribution shifts. However, their method involves solving \emph{multiple} ERM procedures at \emph{each} timestamp which is hard to deploy in online datastreams. In contrast, we introduce novel algorithmic components (see Sec. \ref{sec:refine}) that facilitate the adaptation of any \emph{user specified black-box} OL algorithms while obviating the need to solve multiple ERM procedures. The reason why  \cite{mazzetto2023adaptive} needs to perform multiple ERM procedures is that in their algorithm they need to compute the maximum mean discrepancy wrt a large hypothesis class. Our key observation is that (under piece-wise stationary distributions) we can compress a large hypothesis class to a set of finite learnt models with at-least one model being good for making predictions for the most recent distribution (see Theorem \ref{thm:mri}). Hence it suffices to compute the maximum mean discrepancy wrt this \emph{finite} set of models thereby leading to computational savings.

\section{Related Work} \label{sec:lit}
In this section, we briefly recall the works that are most related to our study. Adapting to distribution shifts under convex losses is well studied in literature \cite{hazan2007adaptive,zhang2018adaptive,zhang2018dynamic,Cutkosky2020ParameterfreeDA,baby2021optimal,Zhao2020DynamicRO,zhao2022eff,baby2022control,baby2023secondorder,baby2023nonstationary}. The strong requirement of convexity of losses limits their applicability to deep learning based solutions. Further, none of these works aim at optimizing Eq.\eqref{eq:err}. Methods in \cite{awasthi2023drift,jain2023instanceconditional} provide a non-black-box way to adapt to distribution shift in offline problems. There are also various online learning algorithms coming from rich body of literature involving continual learning and invariant risk minimization. Examples include but not limited to  \cite{si,ewc,Zhai2023,li2016,lee2017,aljundi2018,icarl2017,Chaudhry2019OnTE,Cai2021OnlineCL}. We refer the reader to  \cite{delange2021,yang2023} for a detailed literature survey. The aforementioned algorithms can be taken as the input OL for our methodology. Examples of schemes that tackle distribution shift under limited amount of labeled data include \cite{lipton2018blackbox,bai2022label,wu2021label,baby2023online,garg2023RLSBench,rosenfeld2023almost}. However, they require structural assumptions like label shift on the distribution shift owing to scarce labelled data available and hence form a complementary direction to our work.

\ul{We emphasize that our objective is not to attain the best classification accuracy across all algorithms. Instead, our goal is to propose an effective meta-algorithm that can enhance the accuracy of any given online algorithm for classification under distribution shifts.}

\section{Problem Setting}\label{sec:setting}

In this section, we define the notations used and metrics of interest that we aim to control. We use $[T] := \{1,\ldots,T \}$ and $[a,b] := \{a,\ldots,b \} \subseteq [T]$. Suppose we are at the beginning of round $t$. Let $\cX$ denote the covariate space and $\mathcal Y$ denote the label space. Suppose that the data distribution at round $t$ is $\cD_t$. Let $i$ denote an OL instance. Let ${\text{Acc}}_t(i) := E_{(x,y) \sim \cD_t}[\mathbb I \{ i(x) = y] \}$ where $i(x)$ is the prediction of the model $i$ for covariate $x$ and $\mathbb I$ is the binary indicator function. The accuracy ${\text{Acc}}_t(i)$ is the population level accuracy of model $i$ for the data distribution at round $t$. The black-box OL we take in as input to \texttt{AWE} will focus on updating the parameters of an underlying neural network architecture. Let $\cH$ be the hypothesis class defined by the underlying neural network classifier. Let $h_t^* = \argmax_{h \in \cH} {\text{Acc}}_t(h)$  be the best classifier for making prediction at round $t$. Suppose $h_t$ is the classifier used by our algorithm to make predictions for round $t$. One of the metrics we are interested in is controlling the instantaneous regret:
\begin{align}
    \text{Err}(t)
    &= {\text{Acc}}_t(h_t^*) - {\text{Acc}}_t(h_t), \label{eq:err}
\end{align}
for the maximum number of rounds possible. Doing so implies that the accuracy of our algorithm stays close to the best attainable performance by any classifier in $\cH$ across \emph{most} rounds. However, in rounds where the data distribution is very different from the past seen distributions, any algorithm will have to pay an unavoidable price proportional to the discrepancy between the distributions.

We also provide guarantees for the dynamic regret given by:
\begin{align}
    R_{\text{dynamic}} = \sum_{t=1}^T \text{Err}(t) 
    = \sum_{t=1}^T {\text{Acc}}_t(h_t^*) - {\text{Acc}}_t(h_t). \label{eq:dyn}
\end{align}

Controlling $\text{Err}(t)$ is more challenging than controlling the dynamic regret because the former can imply later. Bounding $\text{Err}(t)$ leads to a more stringent control of accuracy at each round in the datastream. This is one of the key formalizations that differentiates our setting from \cite{daniely2015strongly}. The control over instantaneous regret is translated to our experiments (see Sec. \ref{sec:exp}) by the improved accuracy of our meta-algorithm across \emph{most} of the timestamps in the data stream (see also Remark \ref{rmk:cmp}).

\section{Algorithm}

\begin{algorithm}[t]
        \caption{AWE: inputs: 1) A black-box online learning algorithm; 2) Failure probability $\delta$ and 3) Split probability $p$.} \label{fig:awe}
        \begin{flushleft}
	\begin{algorithmic}[1]
		\STATE 
            Initialize $w_1 := \mathbf{1} \in \mathbb{R}^{\log_2 T}$.
            \STATE Enumerate set of rules $\mathbbm{r} = \left \{ r^1,r^2,..,r^m \right \}$
            \FOR{ $t \in [T]$:}
            \STATE Get $n_t$ covariates $x_t(1),\ldots,x_t(n_t)$.
            \STATE Compute $\cA_{t} := \text{ACTIVE}(t)$ (See Eq.\eqref{eq:active}).
            \FOR{ $i \in n_t$:}
            \STATE Predict a category by giving $x_t(i)$ as input to \texttt{currentModel}
            \ENDFOR  

            \STATE Get labels $y_{1},\ldots,y_{n_t}$. Let $p$ fraction of the data be allocated to a training fold and remaining to a hold-out fold. 

            \STATE Train models in $\cA_t$ with the training fold using the online learning algorithm given as input.
            
            \STATE Compute $\cA_{t+1} := \text{ACTIVE}(t+1)$.
            \STATE For each model $i \in \cA_{t+1}$, compute the accuracy estimate $\widehat {\text{Acc}}_{t+1}^{(i)} = \texttt{refineAccuracy}(i,t,\delta)$ (see Alg.\ref{fig:refine}).
            \STATE Convert the accuracy values $\{\widehat {\text{Acc}}_{t+1}^{(i)}\}_{i=1}^{|\cA_{t+1}|}$ to weights $w_{t+1} \in \mathbb R^{|\cA_{t+1}|}$ (Eq.\eqref{eq:exp_weights}).            
            \STATE Construct the model $E_{t+1}: x \rightarrow \argmax_{k \in [K]} \sum_{i \in \cA_{t+1}}  w_{t+1}(i) \text{logit}_i[k]$, where $\text{logit}_i \in \mathbb R^K$ is the logits of the model $i \in \cA_{t+1}$ for a given input covariate.
            \STATE Let $i_{t+1}^* = \argmax_i \widehat {\text{Acc}}_{t+1}^{(i)}$. 
            
            \STATE if $\texttt{refineAccuracy}(E_{t+1},t,\delta) > \texttt{refineAccuracy}(i_{t+1}^*,t,\delta)$, then set \texttt{currentModel} as $E_{t+1}$ else set $\texttt{currentModel}$ as $i_{t+1}^*$.
            \ENDFOR  
	\end{algorithmic} 

        \end{flushleft}
         \vspace{-2mm}
\end{algorithm}

\begin{figure}
\vspace{-5mm}
    \begin{minipage}{0.47\textwidth}
    \begin{algorithm}[H]
        \caption{\texttt{refineAccuracy}: Inputs: 1) An OL instance $M$; 2) A terminal time $\tau$ and 3) Failure probability $\delta$.} \label{fig:refine}
        \begin{flushleft}
	\begin{algorithmic}[1]
		\STATE  Let $n(r)$ denote the number of hold out-data points accumulated in the interval $[\tau - r+1,\ldots,\tau]$.
        \STATE $S(r,\delta) := \sqrt{\log(T/\delta)/r} + \sqrt{20 \log T/r} $; Initialize $r = 1$.
            \WHILE{$r \le \tau/2$}
            \STATE Let $u_M(r)$ be the empirical accuracy for the model $M$ estimated using hold-out data in the rounds $[\tau - r+1,\ldots,\tau]$.
            \STATE If $|u_M(r) - u_M(2r)| \le 4S(n(r),\delta)$ then $r \leftarrow 2 \cdot r$; Else return $u(r)$
            \ENDWHILE
            \STATE Return $u_M(r)$
	\end{algorithmic} 
    \end{flushleft}
\end{algorithm}
\end{minipage}
\vspace{-8mm}
\end{figure}

In this section, we present our algorithm for handling distribution shifts and elaborate on the intuition behind the design principles. A formal treatment will be presented in the next section. Along the way we also explain the challenges faced and how they are overcome via algorithmic components. The full algorithm is presented in Alg.\ref{fig:awe}.

Throughout the design of the algorithm, we assume that distribution of data at round $t$ is same as the distribution of the data at time $t+1$ for most rounds. (More precisely, the number of times the distribution switches is assumed to grow sub-linearly over time). Such an assumption can be considered as relatively weak. On the other hand, if the number of points where the distribution switches grow linearly with time, it can be shown that learning is impossible in such a regime \cite{zhang2018adaptive}. We do not assume any prior knowledge/modelling assumptions on where the switches happen.

The input to \texttt{AWE} is a user-specified OL. We would like to improve the accuracy of the given OL under distribution shifts.  To motivate our techniques informally, consider the following thought experiment. Suppose in the interval $[1,t_0]$ the data is generated from a distribution $\cD_1$ while in the interval $[t_0,T]$ it is from a sufficiently different distribution $\cD_2$. If we know that a change in distribution has happened at time $t_0$, then we can start a new OL instance from time $t_0$. We remind the reader that when a new instance is started, its internal states will also be re-initialized. However absent such knowledge, one naive idea is to start a new instance at every past timestamp in the datastream. Then combine their predictions (based on validation accuracies) at a future timestamp. Unfortunately, such a scheme can be computationally expensive making it less attractive in practice. Further the question of how to combine the predictions from various instances in a statistically efficient manner also remains unclear.


It is challenging to maintain a pool of instances such that: 1) the growth rate of the pool's size is slow and 2) An instance that has been trained on adequate amount of data from the most recent distribution exists in the pool.  In the next section, we discuss our solution to address the above problems. The solution is based on carefully adapting the idea of Multi Resolution Analysis (MRA) from wavelet theory \cite{mallat1999wavelet}.

\subsection{Multi-resolution Instances (MRI)} \label{subsec:mri}
For the sake of simplicity, let  the time horizon $T$ be a power of $2$. We define $M:= \log_2 T$ resolutions. In each resolution $i \in [1,M]$ we define two collection of intervals as follows:
\begin{itemize}
\itemsep0em
    \item R:= $\left \{ [1+(k-1) T/2^{i-1}, k  T/2^{i-1} ] \text{ for } k \in \mathbb{N} \right \}$ 
    \item B:= $\Bigg. \{ [1+(k-1) T/2^{i-1} + T/2^{i}, k  T/2^{i-1} + 3\cdot T/2^{i}] 
                \text{ for } k \in \mathbb{N} \Bigg. \}$ ,
\end{itemize}
where we disregard the timepoints in an interval that exceeds the horizon $T$. We remark that intervals in the collection $B$ are not present in the MRA defined by usual wavelet theory. However, we include them to quickly pick up distribution shifts as formalised in Theorem \ref{thm:mri}. We refer the reader to Appendix \ref{app:proof} for a specific example demonstrating the motivation for including the set $B$. See Fig.\ref{fig:mra} for a depiction of the intervals defined by the MRI construction. 

With each interval, we associate an OL instance (hence interval and its associated instance are used interchangeably moving forward). For example, associated with an interval $[a,b]$ we define an OL instance that starts its operation at round $a$ and it is only used to make predictions within the interval $[a,b]$. For making a prediction at an intermediate time $t \in [a,b]$ this OL instance will only be trained on the data that is seen in the duration $[a,t-1]$. However, we remind the reader that prediction made by this instance at round $t$ may not be the final prediction submitted by the overall meta-algorithm. We remark that the instance defined by the interval $[a,b]$ is no longer used for making predictions at rounds $t > b$.

For any round $t$, define
\begin{align}
    \text{ACTIVE}(t) := \{ u \in R \cup B | t \in u\} \label{eq:active}.
\end{align}

This defines the collection of instances that are active at round $t$. The meta-algorithm will form a prediction only based on the active models. Due to construction of the intervals, it is straight-forward to see that at any round $t \in [T]$, we have $|\text{ACTIVE}(t)| = O(\log T)$.


In Theorem \ref{thm:mri}, we show that there exists a model in the MRI pool that has seen sufficient amount of data from a new distribution if a shift happens. Consequently we are able to maintain only logarithmic number of instances while still guaranteeing that there exists at least one instance in the pool that is efficient to make predictions for the most recent distribution. We emphasize that such a property is achieved without imposing strong modelling assumptions on the evolution of the shift.

\subsection{Cross-Validation-Through-Time (CVTT)} \label{sec:refine}
Now that we have a collection of instances, we next turn our attention to address the statistical challenge of combining the instances to make predictions that can lead to high accuracy. As mentioned in Framework \ref{fig:proto}, the labels for all covariates are revealed after each round $t$. We then split the data into a training fold and a hold-out fold (line 11 in Alg.\ref{fig:awe}). Data from the training fold is fed to the OL instances in $\text{ACTIVE}(t)$ to resume their training. 

\begin{figure}
    \centering
    \includegraphics[scale=0.4]{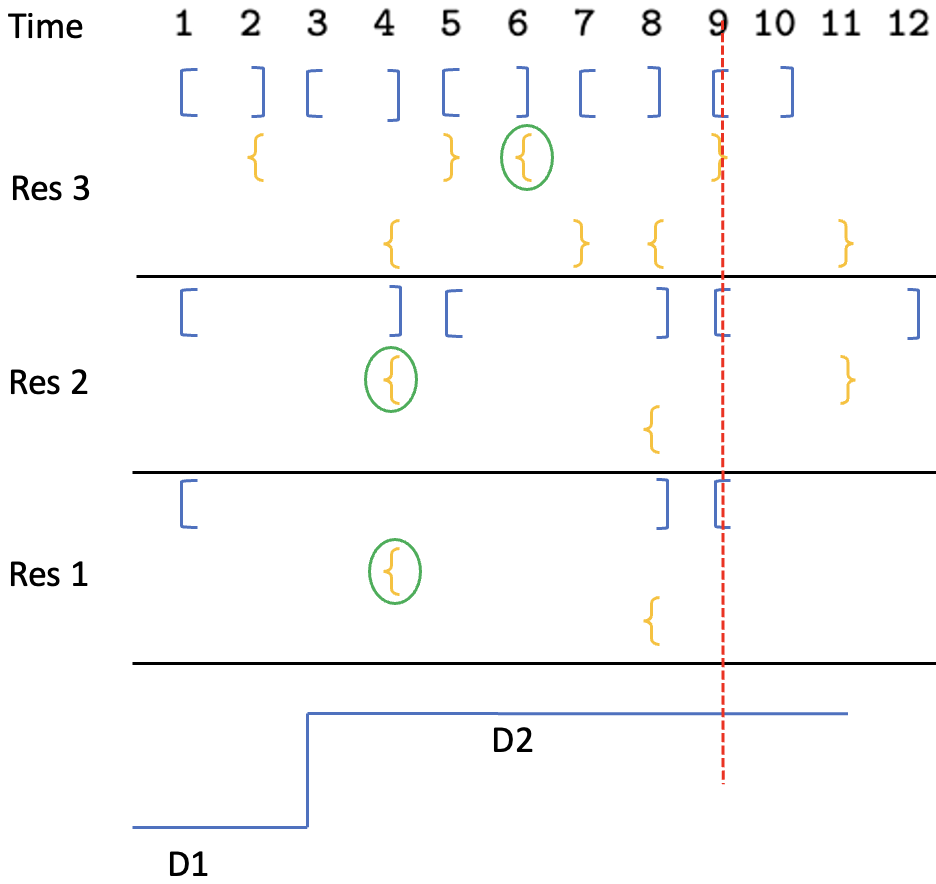}
    \caption{The figure shows the configuration of Multi Resolution Instances (MRI). Brackets of type $[$ $]$ belongs to the collection $R$ and type $\{ $ $ \}$ belongs to collection $B$ (see Sec. \ref{subsec:mri}). Consider the scenario where the data distribution has changed from timestamp $3$ and remained stable afterwards. Suppose we are at the beginning of round $9$ and after each round we get $n$ training data points. So we have seen $6n$ labelled data points from distribution $\cD_2$. $\text{ACTIVE}(9)$ corresponds to those intervals that include the timestamp $9$. The circled intervals has seen at least $3n$ data points from distribution $\cD_2$ thereby ensuring models that are present in the active set with good performance under distribution $\cD_2$. A formal result of the data utilization efficiency of the MRI construction is proved in Theorem \ref{thm:mri}.}
    \label{fig:mra}
    \vspace{-3mm}
\end{figure} 

At round $t+1$, we make predictions using the models in $\mathcal A_{t+1} := \text{ACTIVE}(t+1)$. Recall that throughout the design of our algorithm, we assume that the distribution of data at round $t$ is close to the distribution of the data at time $t+1$ for most rounds. Under such an assumption, the empirical accuracy for models in $\mathcal A_{t+1}$ computed using the hold-out data at round $t$ should be a rough estimate for the accuracy of those models for the data revealed at round $t+1$. Since we have only a limited amount of validation data, even if the distribution at time $t+1$ is identical to that at time $t$, such an estimate can be misleading. On the other-hand if the data distribution during the interval $[r,t]$ is relatively stable, then we can use all the hold-out data collected within $[r,t]$ to get a better estimate of the accuracy. To get such refined accuracy estimates (which in essence does a CVTT) for each model in $\mathcal A_{t+1}$, we use the recent advancement in \cite{mazzetto2023adaptive} adapted to our setting in Alg.\ref{fig:refine}. This idea of estimating refined accuracy for each instance that are present in a sparse pool of instances is what eliminates the need of using the techniques in \cite{mazzetto2023adaptive} that rely on expensive ERM computations. In contrast, techniques in \cite{mazzetto2023adaptive} require to compute a metric of the form $\sup_{M \in \cH} |u_M(r)- u_M(2r)|$ for a very large hypothesis class $\cH$ defined by the neural net architecture (cf. 
 Line 4 of Alg.\ref{fig:refine} for definition of $u_M(r)$). They use ERM for this purpose. However by virtue of our MRI construction and Theorem \ref{thm:mri}, we compress the large hypothesis class $\cH$ to a finite set of \emph{learnt} models (with at least one model that has seen at least a fourth of the datapoints from the current distribution). Hence it suffices only to compute such discrepancy metrics over a finite set formed by candidates for near optimal hypothesis for the current round. This facilitate the speedup over their methods. To the best of our knowledge, the idea of using techniques in \cite{mazzetto2023adaptive} to facilitate a CVTT has not been used before in literature.

Once the accuracy for each model in $\mathcal A_{t+1}$ is estimated, we form an ensemble model $E_{t+1}$ from those constituent models with weights specified by: 
\begin{align}
    w^{t+1}(i)  = \widehat {\text{Acc}}_{t+1}^{(i)}, \label{eq:exp_weights}
\end{align}
where $w^{t+1}_i$ is the weight assigned to model $i$ in round $t+1$ and $\widehat {\text{Acc}}_{t+1}^{(j)}$ is the estimated accuracy for model $j$. Then based on the refined accuracy estimate of all models (line 16), we pick a model to make predictions at line 7. This ensembling model is mainly introduced to get better performance in practice while not hurting the theoretical guarantees. It is perfectly possible to prefer any other ensembling scheme as well and still the guarantees of Theorem \ref{thm:gen} remains valid. The overall algorithm is displayed in Alg.\ref{fig:awe}.

\section{Theory} \label{sec:theory}
In this section, we present theoretical justifications of the algorithmic components of \texttt{AWE}. All proofs are deferred to Appendix \ref{app:proof}. For the sake of simplicity assume that we receive $n$ labelled training data points and $m$ hold-out data points after the end of each timestamp in Step 11 of Alg.\ref{fig:awe}.
Next theorem shows the ability of MRI to maintain instances in the pool that can lead to good predictions for the most recent distribution.

\begin{restatable}{theorem}{thmmri} \label{thm:mri}

 Suppose we are at the beginning of a timestamp $t+1$ and the data distribution has remained constant from some round $t_0 < t+1$. Let this distribution be $\cD$. We have labelled hold-out data available till round $t$. There exists at least one instance in the MRI pool that is active at a given round and satisfies at least one of the following properties:

\begin{itemize}
    \item All the training data seen from the model is from distribution $\cD$. Further it has seen at least $(t-t_0+1)n/2$ points from distribution $\cD$.


    \item The model been only trained on data from $\cD$. Further the number of points seen by the model is at least  $(t-t_0+1)n/4$.
    
\end{itemize}

\end{restatable}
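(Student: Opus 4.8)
The plan is to produce, for the round in question $\tau := t+1$, one explicit interval of $R\cup B$ that is active at $\tau$, starts no earlier than $t_0$, and by round $\tau$ has been trained only on $\cD$-data amounting to at least a quarter of what is available. This gives the second bullet outright, and since the first bullet implies the second it settles ``at least one of the two holds.'' For the reduction, put $W := t-t_0+1$ (the number of past rounds carrying $\cD$-data). The instance attached to an interval $[a,b]$ with $a\le\tau\le b$ has, at the start of round $\tau$, been fed the data of rounds $a,\dots,t$, i.e.\ $(\tau-a)n$ points, all from $\cD$ precisely when $a\ge t_0$. So it suffices to exhibit an active interval whose left endpoint $a$ obeys $a\ge t_0$ and $\tau-a\ge W/4$; writing $d:=\tau-a$, we want $d\in[W/4,\,W]$, the right inequality being just $a\ge t_0$.

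For the construction: since $1\le W\le T-1$, let $i^*$ be the finest resolution with block length $L_{i^*}=T/2^{i^*-1}\le\max(W,2)$, and set $h:=T/2^{i^*}=L_{i^*}/2$; maximality of $i^*$ (and $i^*\ge 2$) yields the two numerical facts $h\ge W/4$ and $2h-1\le W$. At resolution $i^*$ the left endpoints of the $R$- and $B$-intervals together run through $\{1+mh:m\ge 0\}$, the even-$m$ ones being $R$-blocks of length $2h$ and the odd-$m$ ones being $B$-intervals of length $4h$; in particular $\tau$ is covered. Writing $\tau-1=jh+\rho$ with integer $j\ge 1$ and $0\le\rho\le h-1$, I would split on the parity of $j$: if $j$ is odd the unique $R$-block through $\tau$ has left endpoint $1+(j-1)h$, and if $j$ is even the $B$-interval of index $j/2$ (a positive integer, since then $j\ge 2$) contains $\tau$ and also has left endpoint $1+(j-1)h$. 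Either way $d=h+\rho\in[h,\,2h-1]$, so $d\ge h\ge W/4$ and $d\le 2h-1\le W$, hence $a=\tau-d\ge t_0$. Thus this interval is active at $\tau$, all its data is from $\cD$, and it has seen $dn\ge(W/4)n$ points --- the second bullet.

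A few routine points must be dispatched: truncating an interval at the horizon only discards rounds $>T$, so it changes neither $a$ nor the training window $[a,t]\subseteq[1,T-1]$; and $j\ge 1$ always (if $j=0$ then $t<h\le W\le t$, impossible), so the two parity cases are exhaustive and the index $j/2$ is legal. The content of the \emph{first} bullet is that the same argument run with the \emph{longer} $B$-interval at resolution $i^*$ --- index $j/2-1$ if $j$ is even, $(j-1)/2$ if $j$ is odd, with $d\in[2h,\,4h-1]$ --- gives $dn\ge 2hn>(W/2)n$ whenever that interval's left endpoint still exceeds $t_0$, which holds under a mild alignment condition relating $\rho$ and $W$ to the dyadic grid; when that condition fails one retains only the quarter-coverage guarantee, which is why the theorem states a disjunction. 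I expect the main obstacle to be bookkeeping rather than ideas: pinning down exactly which member of $R$ or $B$ covers $\tau$, computing its left endpoint, and checking its index lies in $\mathbb N$ with endpoints in $[1,T]$, together with the small-$W$ and near-boundary corner cases. The conceptual heart --- and the sole reason the collection $B$ is introduced --- is the parity split above: when $\tau$ sits near the start of an $R$-block the $R$-distance $\rho$ can be as small as $0$ and useless, and it is precisely the half-shifted $B$-interval that then supplies the missing length $h\asymp W$.
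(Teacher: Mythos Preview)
Your argument is correct and, once the dust settles, cleaner than the paper's. One terminological slip: you write ``let $i^*$ be the finest resolution with $L_{i^*}\le\max(W,2)$'' and then invoke ``maximality of $i^*$'' to get $h\ge W/4$. Since $L_i$ is decreasing in $i$, the finest resolution satisfying the inequality is $i^*=M$, which gives $h=1$ and no useful lower bound. What you actually want (and what your subsequent inequalities $h\ge W/4$, $2h-1\le W$ pin down uniquely) is the \emph{coarsest} such resolution, i.e.\ the smallest index with $L_{i^*}\le\max(W,2)$; the relevant extremality is then that $L_{i^*-1}>\max(W,2)$. With that fix the rest goes through verbatim.

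The route differs from the paper's. The paper first locates the finest $R$-interval $[a,b]\supseteq[t_0,t+1]$ and then does a three-way case split on where $t_0$ and $t$ sit inside $[a,b]$, producing in one case the half-interval $[(a+b)/2+1,b]\in R$ and in the other two cases a suitably chosen $B$-interval (at the same or a finer resolution). You instead fix the resolution by the size of $W$ alone, write $\tau-1=jh+\rho$, and make a single parity split on $j$ to pick either the $R$-block or the half-shifted $B$-interval with common left endpoint $1+(j-1)h$, getting a uniform $d=h+\rho\in[h,2h-1]$. Your version is shorter and avoids the nested case analysis; the paper's version, by contrast, separates out the situations in which the stronger $(t-t_0+1)n/2$ bound (the first bullet) is attained rather than merely $(t-t_0+1)n/4$. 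Since the first bullet implies the second, your observation that proving the second bullet outright already establishes the disjunction is sound, and your concluding remarks about when the first bullet holds are optional commentary rather than part of the proof.
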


Next, we attempt to understand the statistical efficiency of \texttt{AWE}. We define some notations first. Suppose the number of classes is $2$. Let $\mathcal F := \{ E_t\} \cup \cA_{t}$. Let $f_t^* = \argmax_{f \in \cF} {\text{Acc}}_t(f)$. $\hat f_t := \argmax_{f \in \cF} \widehat {\text{Acc}}_t(f)$. Let $h_t^* = \argmax_{h \in \cH} {\text{Acc}}_t(h)$.

\begin{restatable}{theorem}{thmgen} \label{thm:gen}
Assume the notations defined in Sec. \ref{sec:setting}. Suppose we are at the beginning of round $t$ and that the data is sampled independently across timestamps. Assume that the data distribution (say $\cD$) has remained constant in $[t-r,t]$. Then  with probability at least $1-4 \delta \log T \log mT$, instantaneous regret at round $t$ for \texttt{AWE},
\begin{align}
    {\text{Acc}}_t(h_t^*) - {\text{Acc}}_t(\hat f_t)
    &= {\text{Acc}}_t(h_t^*) - {\text{Acc}}_t(f_t^*) + \tilde O \left( \sqrt{1/mr }\right),
\end{align}
where $\tilde O$ hides logarithmic factors in $T,m,r$ and $1/\delta$.
\end{restatable}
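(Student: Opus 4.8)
The plan is to reduce the claim to a uniform deviation bound for the \texttt{refineAccuracy} estimates over the finite pool $\cF=\{E_t\}\cup\cA_t$, then close the loop with the standard ``best-empirical-score wins'' argument. Abbreviate $a(f):={\text{Acc}}_t(f)$ and $\hat a(f):=\widehat{\text{Acc}}_t(f)$. Since $\hat f_t=\argmax_{f\in\cF}\hat a(f)$ and $f_t^*\in\cF$, we have $\hat a(f_t^*)-\hat a(\hat f_t)\le 0$, so
\[
a(f_t^*)-a(\hat f_t)=\bigl(a(f_t^*)-\hat a(f_t^*)\bigr)+\bigl(\hat a(f_t^*)-\hat a(\hat f_t)\bigr)+\bigl(\hat a(\hat f_t)-a(\hat f_t)\bigr)\ \le\ 2\max_{f\in\cF}\bigl|\hat a(f)-a(f)\bigr|.
\]
As ${\text{Acc}}_t(h_t^*)-{\text{Acc}}_t(\hat f_t)=\bigl({\text{Acc}}_t(h_t^*)-{\text{Acc}}_t(f_t^*)\bigr)+\bigl(a(f_t^*)-a(\hat f_t)\bigr)$ with the last term nonnegative, it is enough to establish $\max_{f\in\cF}|\hat a(f)-a(f)|=\tilde O(\sqrt{1/(mr)})$ with probability at least $1-4\delta\log T\log mT$. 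I would use two structural facts: $|\cF|=|\cA_t|+1=O(\log T)$ by the MRI construction of Sec.\ \ref{subsec:mri}; and, since each round's hold-out fold is withheld from training, conditionally on all training folds every $f\in\cA_t$ is a fixed classifier, so by independence across timestamps the per-point correctness indicators in rounds $[t-r,t]$ are i.i.d.\ $\mathrm{Bernoulli}(a(f))$ (the ensemble $E_t$ additionally depends on earlier hold-out folds and is handled by a covering argument below).

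The core of the proof is the analysis of $\texttt{refineAccuracy}(f,t,\delta)$ for a fixed classifier $f$. Write $u_j:=u_f(2^j)$ and $n_j:=n(2^j)=m\,2^j$, and condition on the good event $G_f$ that for every dyadic scale $2^j\le r+1$ (the windows that stay inside the stable region $[t-r,t]$) we have $|u_j-a(f)|\le\sqrt{\log(T/\delta)/n_j}$; by Hoeffding and a union bound over the $O(\log T)$ scales, $G_f$ fails with tiny probability. On $G_f$, whenever the next window $2^{j+1}$ is still inside $[t-r,t]$ we get $|u_j-u_{j+1}|\le 2\sqrt{\log(T/\delta)/n_j}\le 4\,S(n_j,\delta)$, so the while-loop does not halt there; hence it returns a window $\hat r$ with $\hat r\ge 2^{j_0}$, where $2^{j_0}$ is the largest dyadic scale below $\min((r{+}1)/2,\,t/2)$, so that $2^{j_0}=\Theta(r)$ (and $\hat r\le t$). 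It remains to bound $|u_f(\hat r)-a(f)|$. If $\hat r\le r$ the final window lies inside the stable region and $|u_f(\hat r)-a(f)|\le\sqrt{\log(T/\delta)/(m\hat r)}=\tilde O(\sqrt{1/(mr)})$. If instead $\hat r>r$, every doubling from a scale $2^j$ with $j\ge j_0$ passed the test $|u_j-u_{j+1}|\le 4S(n_j,\delta)$, so telescoping from $2^{j_0}$ up to $\hat r$ and using $n_j=2^{j-j_0}n_{j_0}$,
\[
\bigl|u_f(\hat r)-a(f)\bigr|\ \le\ \sqrt{\frac{\log(T/\delta)}{n_{j_0}}}+4\sum_{j\ge j_0}S(n_j,\delta)\ \le\ \sqrt{\frac{\log(T/\delta)}{n_{j_0}}}+\frac{4}{1-1/\sqrt{2}}\,S(n_{j_0},\delta)\ =\ \tilde O\!\bigl(\sqrt{1/(m\,2^{j_0})}\bigr)=\tilde O\!\bigl(\sqrt{1/(mr)}\bigr),
\]
since $S(n,\delta)=\sqrt{\log(T/\delta)/n}+\sqrt{20\log T/n}$ decays geometrically in $j$. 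Taking $G:=\bigcap_{f\in\cF}G_f$ and union-bounding over the $O(\log T)$ members of $\cF$ — and, for $E_t$, over a $\tilde\Theta(\sqrt{1/(mr)})$-net of its weight vector in $[0,1]^{|\cA_t|}$, whose log-cardinality is $O(\log T\log mT)$ — gives total failure probability at most $4\delta\log T\log mT$. Plugging $\max_{f\in\cF}|\hat a(f)-a(f)|=\tilde O(\sqrt{1/(mr)})$ into the first display completes the argument.

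I expect the main obstacle to be the second half of the \texttt{refineAccuracy} analysis: bounding the estimate when the data-dependent stopping rule \emph{overshoots} the stable window and pools in hold-out data from rounds before $t-r$. The argument sketched above handles this by observing that a \emph{passed} test $|u_j-u_{j+1}|\le 4S(n_j,\delta)$ certifies a small increment even across a change of distribution — this is exactly the Mazzetto--Upfal-style guarantee of \cite{mazzetto2023adaptive} adapted in Alg.\ \ref{fig:refine}, and it is why the threshold $S$ carries the extra $\sqrt{20\log T/n}$ slack beyond the bare Hoeffding term. The remaining nuisance is that $E_t$ is simultaneously built from and evaluated on earlier hold-out folds; neutralizing this (via the net above, or a further split of the hold-out fold) is the source of the extra $\log mT$ factor in the stated failure probability.
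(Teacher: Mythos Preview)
Your decomposition and the reduction to a uniform deviation bound $\max_{f\in\cF}|\hat a(f)-a(f)|=\tilde O(\sqrt{1/(mr)})$ is exactly how the paper proceeds (Eq.~\eqref{eq:decomp1}--\eqref{eq:decomp}). Your direct analysis of \texttt{refineAccuracy}---showing the loop does not halt while the doubled window stays inside the stable region, then telescoping the passed-test increments to control any overshoot---is a self-contained derivation of what the paper imports as a black box from \cite{mazzetto2023adaptive} (Proposition~\ref{prop:mazeto}); both routes give the same $\tilde O(\sqrt{1/(mr)})$ per-model bound, so no disagreement there.

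The one real gap is your treatment of the ensemble $E_t$. You propose to union-bound over a $\tilde\Theta(\sqrt{1/(mr)})$-net of weight vectors in $[0,1]^{|\cA_t|}$, but the map $w\mapsto\argmax_k\sum_i w_i\,\mathrm{logit}_i[k]$ is \emph{discontinuous}: two weight vectors that are $\epsilon$-close in $\ell_\infty$ can yield classifiers whose accuracies (empirical or population) differ by $\Omega(1)$, because whether the argmax flips depends on an uncontrolled margin. So proximity in weight space does not transfer to proximity in the accuracy metric, and the net does not cover $E_t$ in the sense you need. The paper sidesteps this by working over the whole class $\cG=\{x\mapsto\argmax_k\sum_i w_i\,\mathrm{logit}_i[k]:w\in\R^d\}$ with $d=|\cA_t|=O(\log T)$: since $\cG$ is a $d$-dimensional linear classifier class, the induced loss class has VC dimension $O(d)$, and by Sauer's lemma the number of distinct labelings of the $mt$ hold-out points is at most $(mt)^{O(d)}$. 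The key observation (``observation~1'' in the paper's proof) is that $\widehat{\text{Acc}}_t(g)$---including its data-dependent stopping index $n(g)$---depends on $g$ only through this labeling, so a union bound over at most $(mt)^{O(\log T)}$ equivalence classes suffices and produces the $\log T\log mT$ factor in the failure probability. Your log-cardinality count happens to match because $(1/\epsilon)^d$ and the Sauer bound share the exponent $d$, but the argument has to be the combinatorial (shattering) one, not a metric covering.
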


\begin{remark} \label{rmk:rmk1}
Theorem \ref{thm:gen} must be interpreted in the light of Theorem \ref{thm:mri}. The first term ${\text{Acc}}_t(h_t^*) - {\text{Acc}}_t(g_t^*)$ depends on the generalization behaviour of the online learning algorithm given as input to \texttt{AWE}. However, due to Theorem \ref{thm:mri}, it is guaranteed that there exists at least one model in $\text{ACTIVE}(t)$ that has seen an $\Omega(mr)$ data points from the distribution $\cD$ when the data distribution has remained constant in $[t-r,t]$. This helps to keep the first term small. However, a theoretical bound on first term will depend on the specific OL algorithm used. For example, if we use ERM as the base learner, the first term can be bound by $O(1/\sqrt{nr})$. The second term  $\tilde O (\sqrt{1/mr})$ reflects improvement in generalization obtained by adaptively using all the hold-out data from previous $r$ rounds where the distribution has remained unchanged. We remark that the prior knowledge of $r$ (or the change-point) is not required.
\end{remark}

\begin{remark}\label{rmk:cmp}
Let $i_0$ be the OL instance started from timestamp $1$. By the construction of MRI in Sec. \ref{subsec:mri}, $i_0 \in \text{ACTIVE}(t)$ for all rounds $t$. Whenever $g_t^* = i_0$, the above theorem guarantees that the accuracy of \texttt{AWE} is comparable to that of $i_0$ at the asymptotically decaying margin of $O(1/\sqrt{mr})$. On the other hand, the distribution shifts can potentially cause the existence of new models $g_t^* \in \cG$ such that ${\text{Acc}}_t(g_t^*) \gg {\text{Acc}}_t(i_0)$. In such scenarios, Theorem \ref{thm:gen} again guarantees that the accuracy of \texttt{AWE} (with map $\hat \cW$) is comparable to that of ${\text{Acc}}_t(g_t^*)$ thereby improving the performance over the instance $i_0$. This solidifies the ability of \texttt{AWE} to enhance the performance of a user-given OL.
\end{remark}

The statement of Theorem \ref{thm:gen} translates into an excess risk bound against the pool of MRI instances. If we choose the input OL to \texttt{AWE} as ERM, one can get an excess risk bound of $O(1/\sqrt{mr})$ against an entire hypothesis class across which ERM is performed. However in an online setting, running ERM at each round can be computationally prohibitive. Hence the alternate choices for the OL (for example continual learning algorithms) becomes more useful. In such a scenario, Theorem \ref{thm:gen} guarantees an excess risk bound against an optimally restarted instantiation of the chosen OL -- thanks to the data-efficiency guarantees of MRI construction (Theorem \ref{thm:mri}).

Even-though we presented our analysis for piecewise stationary distributions, the extension to slowly varying distributions is straightforward by standard discretization arguments. In the analysis we can divide the time horizon into intervals where distribution is slowly varying and apply Theorem \ref{thm:gen} to the mean distribution within the bin while paying a small additive price proportional to how much the distribution at a round deviates from the mean. In Appendix \ref{app:proof}, we characterize the dynamic regret (see Eq.\eqref{eq:dyn}) of \texttt{AWE} on slowly changing distribution shifts. Though \texttt{AWE} has the drawback of storing all the previous hold-out data points, we remark that size of hold-out data is generally much smaller than the training data set size. In practice, one can limit to store only the hold-out data from a fixed amount of past timestamps. While refined accuracy calculation necessitates a forward inference pass on this data, the efficiency of modern GPUs prevents significant performance issues.

\section{Experiments}\label{sec:exp} 
\begin{figure*}[h]
    \centering
    \subfloat[FMOW]{\includegraphics[width=0.25\linewidth]{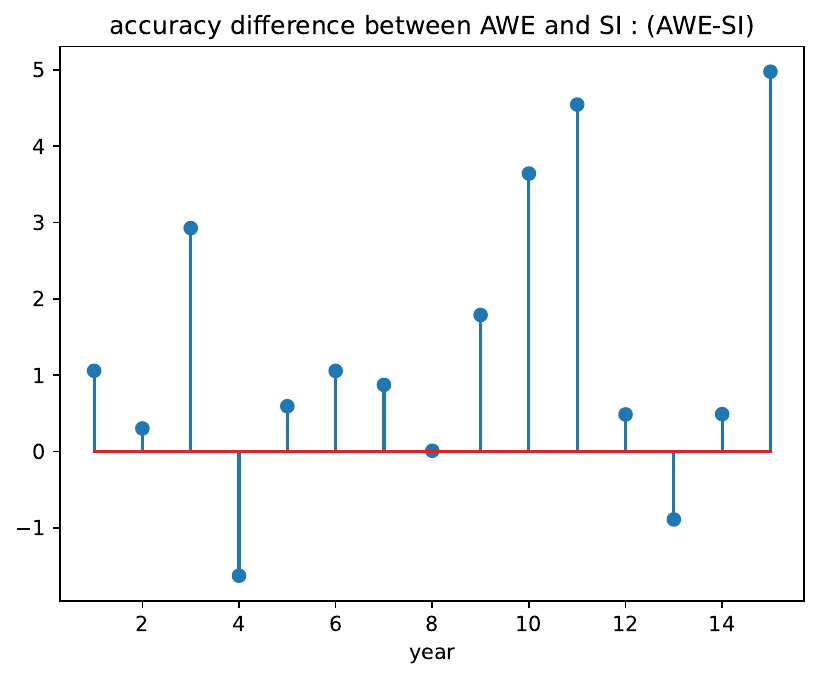}}\hfill
    \subfloat[Huffpost]{\includegraphics[width=0.25\linewidth]{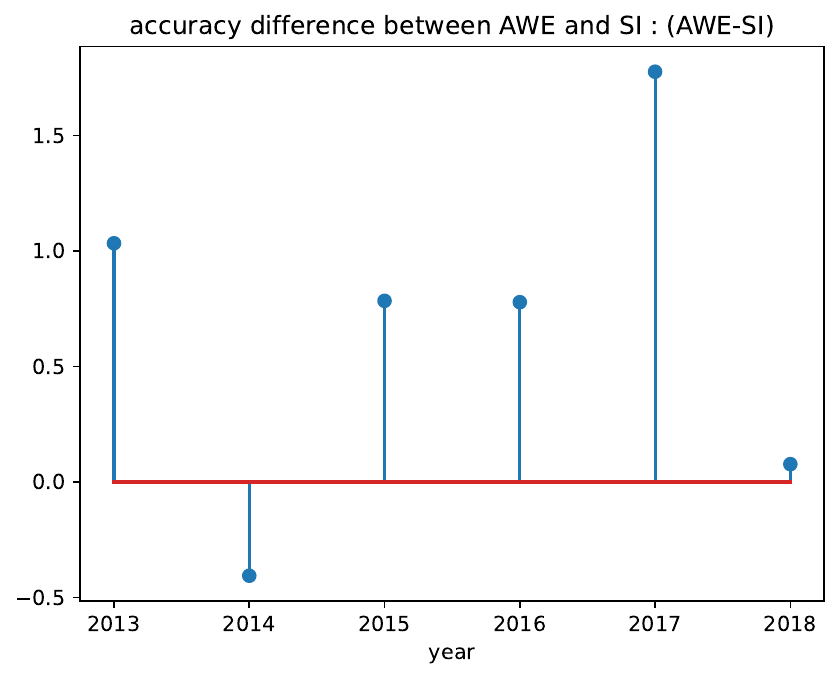}}\hfill
    \subfloat[Arxiv]{\includegraphics[width=0.25\linewidth]{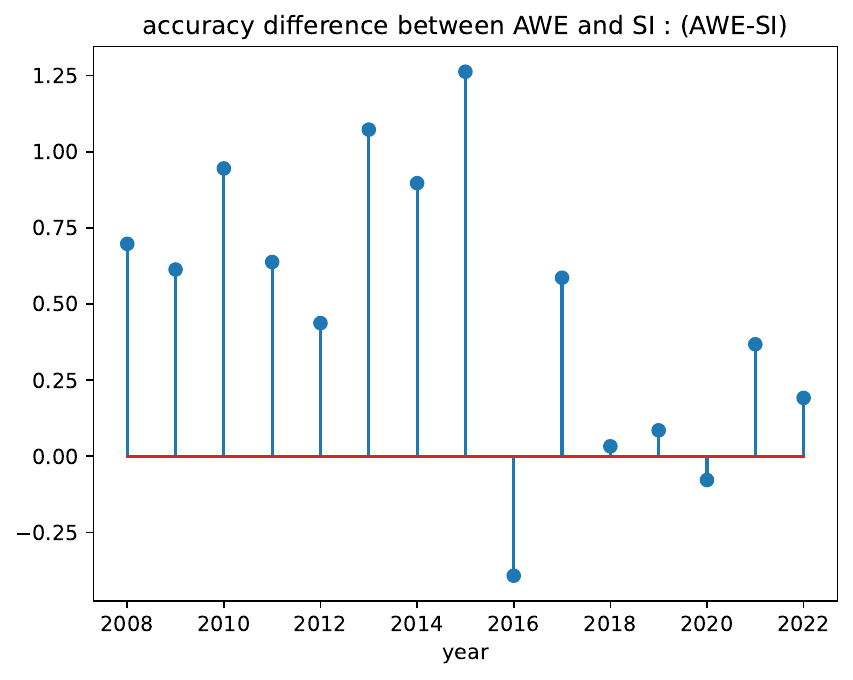}}

    \caption{$\%$ accuracy differences across various timestamps when \texttt{AWE} is run with SI as the online learning algorithm. We report similar results for other OL algorithms and the fraction of timestamps where \texttt{AWE} improves (or does not degrade) the performance of the base OL in Appendix \ref{app:exp}.}
    \label{fig:images_si}
    \vspace{-2mm}
\end{figure*}

\setlength{\tabcolsep}{8.5pt}
\begin{table*}[h] 
\tiny
\centering
\caption{Performance statistics for image (FMOW dataset) and text (Huffpost \& Arxiv datasets) modalities. We report the difference in average classification accuracy ($\%$) across all timestamps obtained by a black-box scheme minus that of the input OL. }
\begin{tabular}{c|cc|cc|cc}
\specialrule{.2em}{.1em}{.1em}
\hline
\multirow{2}{*}{\begin{tabular}[c]{@{}c@{}}\\ Input OL\end{tabular}} 
& \multicolumn{2}{c|}{FMOW} & \multicolumn{2}{c|}{Huffpost} & \multicolumn{2}{c}{ArXiv} \\ \cline{2-7} 
& \multicolumn{1}{c|}{SAOL \cite{daniely2015strongly}} & AWE  
& \multicolumn{1}{c|}{SAOL \cite{daniely2015strongly}} & AWE  
& \multicolumn{1}{c|}{SAOL \cite{daniely2015strongly}} & AWE  \\ \specialrule{.2em}{.1em}{.1em}

SI \cite{si}    
& \multicolumn{1}{c|}{\begin{tabular}[c]{@{}c@{}}$-4.19$\\ $ \pm 0.119$\end{tabular}} 
& \begin{tabular}[c]{@{}c@{}}$ \mathbf{1.52}$\\ $\mathbf{ \pm 0.067}$\end{tabular} 
& \multicolumn{1}{c|}{\begin{tabular}[c]{@{}c@{}}$-0.70$\\ $ \pm 0.068$\end{tabular}} 
& \begin{tabular}[c]{@{}c@{}}$\mathbf{0.70}$\\ $\mathbf{\pm 0.065}$\end{tabular} 
& \multicolumn{1}{c|}{\begin{tabular}[c]{@{}c@{}}$-3.97$\\ $\pm 0.159$\end{tabular}} 
& \begin{tabular}[c]{@{}c@{}}$\mathbf{0.45}$\\ $\mathbf{\pm 0.009}$\end{tabular} \\ \hline

FT  
& \multicolumn{1}{c|}{\begin{tabular}[c]{@{}c@{}}$-3.71$\\ $ \pm 0.104$\end{tabular}} 
& \begin{tabular}[c]{@{}c@{}}$\mathbf{1.83}$ \\ $\mathbf{\pm 0.073}$\end{tabular}  
& \multicolumn{1}{c|}{\begin{tabular}[c]{@{}c@{}}$0.71$\\ $\pm 0.06$\end{tabular}}    
& \begin{tabular}[c]{@{}c@{}}$\mathbf{0.72}$\\ $\mathbf{\pm 0.069}$\end{tabular} 
& \multicolumn{1}{c|}{\begin{tabular}[c]{@{}c@{}}$-3.95$\\ $\pm 0.14$\end{tabular}}  
& \begin{tabular}[c]{@{}c@{}}$\mathbf{0.56}$\\ $\mathbf{\pm 0.01}$\end{tabular}  \\ \hline

IRM \cite{irm}  
& \multicolumn{1}{c|}{\begin{tabular}[c]{@{}c@{}}$-6.16$\\ $ \pm 0.132$\end{tabular}} 
& \begin{tabular}[c]{@{}c@{}}$\mathbf{0.55}$\\ $\mathbf{ \pm 0.04}$\end{tabular}   
& \multicolumn{1}{c|}{\begin{tabular}[c]{@{}c@{}}$0.37$\\ $\pm 0.049$\end{tabular}}   
& \begin{tabular}[c]{@{}c@{}}$\mathbf{0.98}$\\ $\mathbf{\pm 0.08}$\end{tabular}  
& \multicolumn{1}{c|}{\begin{tabular}[c]{@{}c@{}}$-2.67$\\ $\pm 0.131$\end{tabular}} 
& \begin{tabular}[c]{@{}c@{}}$\mathbf{0.13}$\\ $\mathbf{\pm 0.005}$\end{tabular} \\ \hline

EWC \cite{ewc}  
& \multicolumn{1}{c|}{\begin{tabular}[c]{@{}c@{}}$-3.50$\\ $ \pm 0.101$\end{tabular}} 
& \begin{tabular}[c]{@{}c@{}}$\mathbf{2.20}$\\ $\mathbf{ \pm 0.08}$\end{tabular}   
& \multicolumn{1}{c|}{\begin{tabular}[c]{@{}c@{}}$-0.53$\\ $\pm 0.059$\end{tabular}}  
& \begin{tabular}[c]{@{}c@{}}$\mathbf{0.72}$\\ $\mathbf{\pm 0.069}$\end{tabular} 
& \multicolumn{1}{c|}{\begin{tabular}[c]{@{}c@{}}$-3.87$\\ $\pm 0.157$\end{tabular}} 
& \begin{tabular}[c]{@{}c@{}}$\mathbf{0.36}$\\ $\mathbf{\pm 0.008}$\end{tabular} \\ \hline

CORAL \cite{coral}  
& \multicolumn{1}{c|}{\begin{tabular}[c]{@{}c@{}}$-2.98$\\ $ \pm 0.093$\end{tabular}} 
& \begin{tabular}[c]{@{}c@{}}$\mathbf{3.24}$\\ $\mathbf{ \pm 0.097}$\end{tabular}  
& \multicolumn{1}{c|}{\begin{tabular}[c]{@{}c@{}}$0.18$\\ $\pm 0.034$\end{tabular}}   
& \begin{tabular}[c]{@{}c@{}}$\mathbf{1.10}$\\ $\mathbf{\pm 0.085}$\end{tabular} 
& \multicolumn{1}{c|}{\begin{tabular}[c]{@{}c@{}}$-1.16$\\ $\pm 0.087$\end{tabular}} 
& \begin{tabular}[c]{@{}c@{}}$\mathbf{1.67}$\\ $\mathbf{\pm 0.018}$\end{tabular} \\ \hline 

\specialrule{.2em}{.1em}{.1em}
\end{tabular}
\label{tab:acc}
\vspace{-3mm}
\end{table*}

\begin{figure*}[h]
    \centering
    \subfloat[FMOW]{\includegraphics[width=0.25\linewidth]{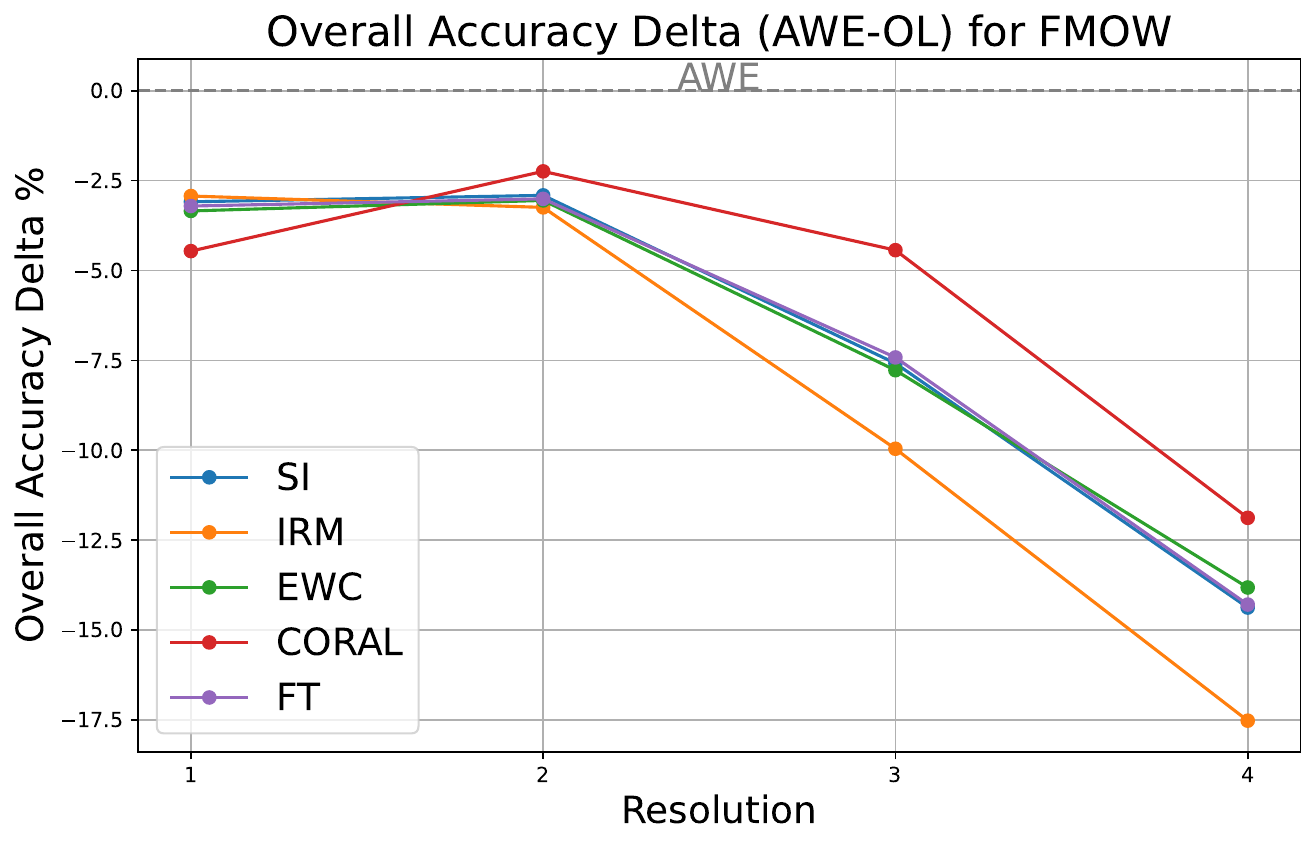}}\hfill
    \subfloat[Huffpost]{\includegraphics[width=0.25\linewidth]{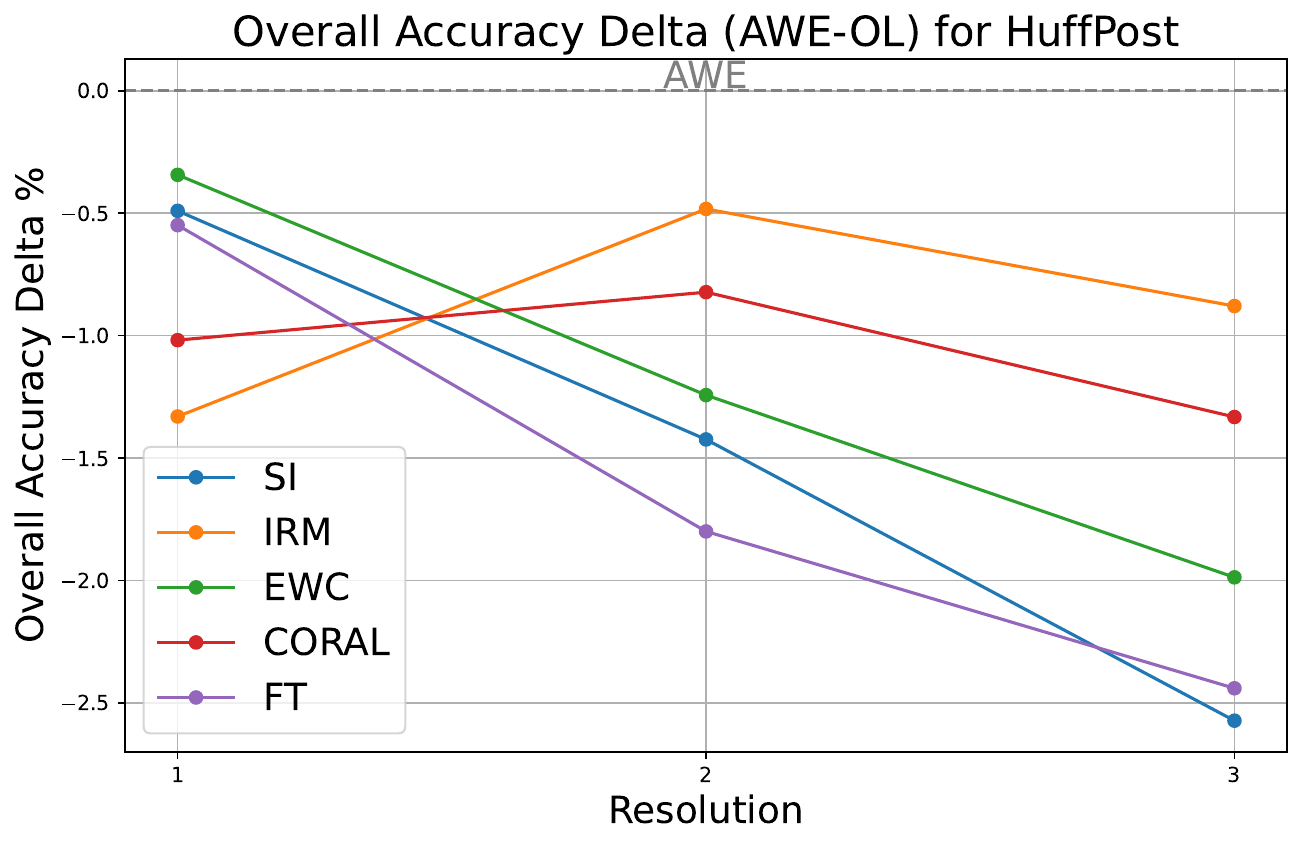}}\hfill
    \subfloat[Arxiv]{\includegraphics[width=0.25\linewidth]{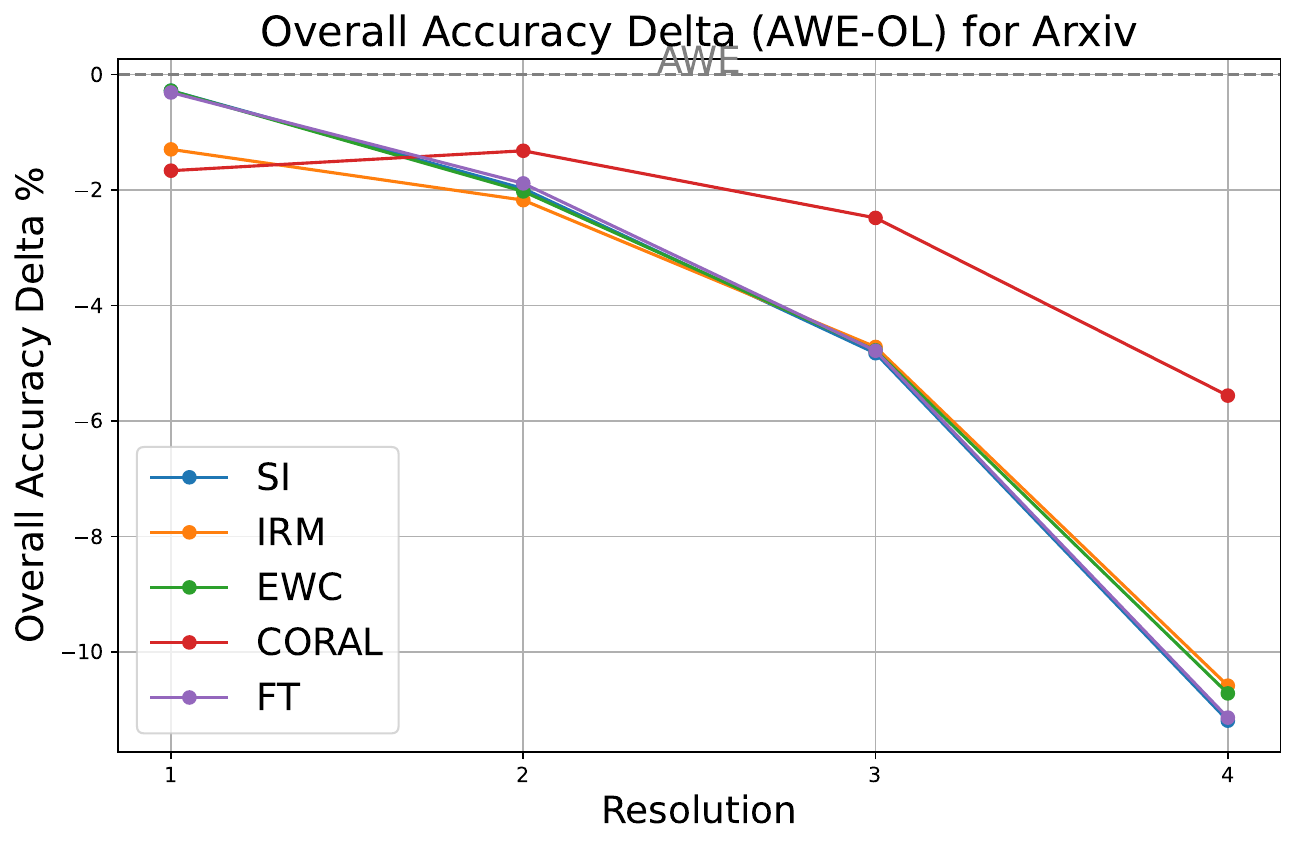}}

    \caption{Ablation study across various resolutions. We compute the overall accuracy attained by \texttt{AWE} minus that attained by using only a single resolution in the MRI pool. We see that in most cases \texttt{AWE} outperforms the single resolution counterparts. Further, by virtue of using \texttt{AWE}, the user does not need to hand-tune the optimal resolution to use in an MRI pool.}
    \label{fig:images_mri}
    \vspace{-3mm}
\end{figure*}

\subsection{Empirical Study on Real-world Datasets}

All our experiments are conducted on the WildTime benchmark \cite{yao2022wild}. WildTime constitutes a suite of datasets for classification problems across image, text and tabular modalities that exhibit natural distribution shift. In this paper, our focus is on the image and text modalities. For the image modality, the WildTime benchmark comprises the FMOW dataset, while for text, it includes the Huffpost and Arxiv datasets. We direct the reader to \cite{yao2022wild} for more elaborate details regarding the datasets. All the experiments were conducted on NVIDIA A100 GPUs.


For each of the dataset, we consider different black-box OL algorithms as input to \texttt{AWE} (Alg. \ref{fig:awe}). We then compare the accuracy of each OL and \texttt{AWE} method across every time stamp. To clarify further, at any timestamp, we compute the accuracy only using the labelled data that is revealed towards the end of that timestamp (see Framework \ref{fig:proto}). We explore the performance of five different online learning algorithms: Synaptic Intelligence (SI) \cite{si}, Invariant Risk Minimisation (IRM) \cite{irm}, FineTuning (FT), Elastic Weight Consolidation (EWC) \cite{ewc}, and CORAL \cite{coral}. We use the same neural network architectures and hyper-parameter choices for the OL as used by the eval-stream setting in \cite{yao2022wild}. We refer the reader to \cite{yao2022wild} for comprehensive information about these design choices. 

The performance compared with SI along each timestamp is shown in Fig.\ref{fig:images_si} for each dataset. Similar results for other OL algorithms are presented in Appendix \ref{app:exp}. We combine the predictions of a model at round $t$ using the accuracy estimates based on the data revealed until time $t-1$. Occasional drops in performance at certain timestamps may result from abrupt changes in the distribution. If the distribution at time $t$ is sufficiently different from that at time $t-1$, the accuracy estimates we use to combine the instances can be not useful for making predictions at round $t$. However, our algorithm demonstrates a rapid adjustment to the new distribution, as evidenced by performance improvements shortly after the timestamps where a performance decline was observed.

To statistically summarize the performance, we report the $\%$ accuracy differences (i.e., $\texttt{AWE} - \text{OL}$) across various time stamps. in Table \ref{tab:acc}. Additionally, in Appendix \ref{app:exp}, we detail the win/draw/lose numbers along the timestamps. 
We find that in well above 50\% of the timestamps across all cases, AWE can improve the performance of the base OL instance. This indicates the robustness of \texttt{AWE} to deliver improved performance under \emph{in-the-wild distribution shifts}.

We also compare \texttt{AWE} with Strongly Adaptive Online Learning (SAOL) \cite{daniely2015strongly}. To the best of our knowledge, SAOL is the only black-box adaptation scheme applicable to online non-convex setting. All runs are repeated with $3$ random seeds. We compute the difference in average accuracy (i.e \texttt{AWE} - OL or SAOL - OL) over all timestamps attained by both schemes in Table \ref{tab:acc}.
In contrast, we find that SAOL often degrades the performance of the input OL which can be alluded to the reasons presented in part (a) of notes on technical novelties in Sec. \ref{sec:intro}. 


\subsection{Ablation study} \label{sec:abl}

\textbf{Ablation on resolution:} To further substantiate the efficacy of the multi-resolution instance paradigm (Sec. \ref{subsec:mri}), we compare the performance obtained by \texttt{AWE} and an alternative where we only maintain instances from a single resolution in the MRI construction. Using a single resolution has the effect of dividing the entire time horizon into fixed-size intervals along the sets $R$ and $B$ (that are potentially overlapping; Sec. \ref{subsec:mri}). Each window is associated with an OL instance trained exclusively on the duration of that interval. We see in Fig.\ref{fig:images_mri} that in most of the cases \texttt{AWE} attains superior performance. Choosing an optimal window size (and hence the resolution number) requires prior knowledge of the type of shift. In contrast, \texttt{AWE} eliminates the need for this window-size selection by adaptively combining instances from multiple resolutions, automatically assigning a higher weight to the higher-performing resolutions during prediction. We refer the reader to Appendix \ref{app:exp} for more experimental results.

\section{Conclusion and Future Work} \label{sec:concl}
We proposed a method to enhance the performance of any user given OL for classification by merging predictions from different historical OL instances. The MRI construction for limiting the instance-pool size could be of independent interest for tackling non-stationarity. Experiments across various real world datasets indicate that our method \emph{consistently} improves the performance of user-specified OL algorithms. Future work include extending the methods to change point detection and online learning with limited feedback. Limitations can be found at Appendix \ref{limitation}.

\bibliographystyle{apalike}
\bibliography{nuerips/tf,nuerips/yx}

\section*{Checklist}

\textbf{In your paper, please delete this instructions block and only keep the Checklist section heading above along with the questions/answers below.}

 \begin{enumerate}

 \item For all models and algorithms presented, check if you include:
 \begin{enumerate}
   \item A clear description of the mathematical setting, assumptions, algorithm, and/or model. [Yes]
   \item An analysis of the properties and complexity (time, space, sample size) of any algorithm. [Yes]
   \item (Optional) Anonymized source code, with specification of all dependencies, including external libraries. [Yes. Please check the supplementary zip file.]
 \end{enumerate}

 \item For any theoretical claim, check if you include:
 \begin{enumerate}
   \item Statements of the full set of assumptions of all theoretical results. [Yes]
   \item Complete proofs of all theoretical results. [Yes]
   \item Clear explanations of any assumptions. [Yes]     
 \end{enumerate}

 \item For all figures and tables that present empirical results, check if you include:
 \begin{enumerate}
   \item The code, data, and instructions needed to reproduce the main experimental results (either in the supplemental material or as a URL). [Yes]
   \item All the training details (e.g., data splits, hyperparameters, how they were chosen). [Yes]
         \item A clear definition of the specific measure or statistics and error bars (e.g., with respect to the random seed after running experiments multiple times). [Yes]
         \item A description of the computing infrastructure used. (e.g., type of GPUs, internal cluster, or cloud provider). [Yes]
 \end{enumerate}

 \item If you are using existing assets (e.g., code, data, models) or curating/releasing new assets, check if you include:
 \begin{enumerate}
   \item Citations of the creator If your work uses existing assets. [Yes]
   \item The license information of the assets, if applicable. [Yes]
   \item New assets either in the supplemental material or as a URL, if applicable. [Yes]
   \item Information about consent from data providers/curators. [Yes]
   \item Discussion of sensible content if applicable, e.g., personally identifiable information or offensive content. [Not Applicable]
 \end{enumerate}

 \item If you used crowdsourcing or conducted research with human subjects, check if you include:
 \begin{enumerate}
   \item The full text of instructions given to participants and screenshots. [Not Applicable]
   \item Descriptions of potential participant risks, with links to Institutional Review Board (IRB) approvals if applicable. [Not Applicable]
   \item The estimated hourly wage paid to participants and the total amount spent on participant compensation. [Not Applicable]
 \end{enumerate}

 \end{enumerate}

\newpage
\appendix
\onecolumn

\section{Limitations and Further Discussion} \label{limitation}
\textbf{Limitaions.} One of the main limitations of our work is that we only try to optimize the classification accuracy at any round. However for the case of continual learning based algorithms, other metrics which measure the power of remembering old experiences to prevent catastrophic forgetting is also often optimized. Such metrics are often termed as backward transfer metrics. Our method do not explicitly optimize metrics that measure backward transfer, though it could be internally optimized by the different instances we maintain in the ensemble pool. Further, our algorithms do not take into account protecting the privacy of sensitive training data. 

\textbf{Further discussion.} Adapting to distribution shifts is a well studied problem in online learning literature \cite{zhang2018adaptive,Zhao2020DynamicRO,Baby2021OptimalDR}. However, these works are studied under convexity assumptions on the loss functions. To the best of of our knowledge adaptive regret minimization works like \cite{daniely2015strongly} are the only ones that are also applicable to the deep learning framework. Eventhough they are backed by strong theoretical guarantees, their practical performance is curiously under-investigated in literature before. Our experiments in Section \ref{sec:exp} indicate a large theory-practice gap for adaptive algorithm such as \cite{daniely2015strongly} when applied to deep learning framework. Our work is a first step towards closing this gap.

\section{Omitted Technical Details} \label{app:proof}

\textbf{An example motivating the necessity of including the B set in the MRI construction.} We included the $B$ set because it turned out to be important for proving Theorem 3. 

Fig.\ref{fig:mra} highlights a distribution shift scenario where usage of B set turns out to be useful. In the figure we are at the beginning of round $9$ and we have seen $6n$ labelled datapoints from distribution $D_2$ as explained in the caption of the figure. We can see that all the active blue brackets starts only from time $9$ and hence has not seen any data from distribution $D_2$. But the circled brackets from set B in resolutions $1$ and $2$ has seen $5n$ data points from $D_2$. This forms a particular scenario where the brackets from $B$ turns out to be useful in getting a data coverage guarantee as stipulated by Theorem \ref{thm:mri}

\thmmri*
\begin{proof}
Through out this proof we view intervals in $R \cup B$ and models in MRI synonymously since they are strongly associated with each other. We assume that time $t_0$ is not the start of any intervals in the MRI, otherwise the theorem is trivially satisfied.

Since we are the beginning of round $t+1$, we have labelled training data available for rounds in $[t_0,t]$. Consider the smallest resolution in the MRI where there exists an interval $[a,b]$ in $R$ that includes the duration $[t_0,t+1]$. Let the length of such an interval be $d$. For brevity of notations let's define $t_0' := t_0-a+1$ and $t' = t-a+1$. Since this is the interval in the smallest resolution that covers $[t_0,t+1]$, we must have that $t_0' \le d/2$.

Recall that the data distribution $\cD$ is constant  across rounds $[t_0,t+1]$. Note that $d > t'$. So if $t'-d/2 \ge d/2-t'+1$, then we can select the interval $[(a+b)/2+1,b] \in R$ that has seen at-least $(t-t_0+1)n/2$ from distribution $\cD$. Since the current timepoint $t+1 \in [a,b]$, we also have that $[(a+b)/2+1,b] \in \text{ACTIVE}(t+1)$

In the case when $t_0' \le d/4$, the interval $[a,b]$ has seen all the training data in $[t_0,t]$. Notice that the interval $[1+(3a+b)/4,(3a+b)/4+d] \in B$, covers at-least $(t-t_0+1)n/2$ of data points from the distribution $\cD$. Further this interval has seen only data from the distribution $\cD$. Since the duration of this interval is $d$, we conclude that is is active at the current round.


Now we consider the case where $d/4 < t_0' < d/2$ and $t'-d/2 < d/2-t'+1$. Consider the smallest resolution in MRI which contains a bracket $[e,f] \in R$ that fully covers the interval $[t_0,(a+b)/2]$. Clearly we must have $f = (a+b)/2$. Due to the smallest resolution property, we have that $t_0$ must be at-most $(e+f)/2$.

Since $t'-d/2 < d/2-t'+1$, we have that the number of data points seen within the interval $[(e+f)/2+1,f]$ must be at-least $(t-t_0+1)n/4$. Consequently the interval $[(e+f)/2+1,(5f-3e)/2] \in B$ must also have seen at-least $(t-t_0+1)n/4$ training data points from the distribution $\cD$. Since $t'-d/2 < d/2-t'+1$, we must have $[(e+f)/2+1,(5f-3e)/2] \in \text{ACTIVE}(t+1)$. Further since $\cD$ has remained constant across $[t+0,t+1]$, we have that all the data seen by the interval $(e+f)/2+1,(5f-3e)/2$ till now must be from $\cD$.  

\end{proof}

\begin{definition} \label{def:shatter}
Let $\cF$ be a function that maps $\mathcal Z$ to ${0,1}$. The shattering coefficient is defined as the maximum number of behaviours over $n$ points.
\begin{align}
    S(\cF,n) := \max_{z_{1:n} \in \mathcal Z} |\{(f(z_1),\ldots,f(z_n)) : f \in \cF\} |.
\end{align}
We say that subset $\cF' \subseteq \cF$ is an $n$-shattering-set if it is a smallest subset of $\cF$ such that for any $(f(z_1),\ldots,f(z_n)$ there exists some $f' \in \cF'$ such that  $(f(z_1),\ldots,f(z_n) =  (f'(z_1),\ldots,f'(z_n)$.

\end{definition}

\textbf{Notations}: We introduce some notations.  Let $d = |\text{ACTIVE}(t)|$, $\text{ACTIVE}(t) := \{M_1,\ldots,M_d \}$ and $\cG := \{ x \rightarrow \argmax_{k \in [K]} \left(\sum_{j=1}^{d} w_j \text{logit}(M_j(x))[k] \right) : w_j \in \mathbb{R} \}$ denote a hypothesis class consisting of classifiers whose logits are weighted linear combination of that in $\text{ACTIVE}(t)$. Let the distribution of data at round $t$ be $D_t$. Define $\text{Acc}_t(g) = E_{(X,Y) \sim D_t} [I\{ g(X) = Y\}]$ and $\widehat {\text{Acc}}_t(g) := \texttt{refineAccuracy}(g,t,\delta)$ for a classifier $g$.

\thmgen*
\begin{proof}

We decompose the instantaneous regret at round $t$ as

\begin{align}
    {\text{Acc}}_t(h^*) - {\text{Acc}}_t(\hat i)
    &=\underbrace{{\text{Acc}}_t(h^*) - {\text{Acc}}_t(f^*)}_{T_1} + \underbrace{{\text{Acc}}_t(f^*) - {\text{Acc}}_t(\hat f)}_{T_2}    \label{eq:decomp1}
\end{align}

We further proceed to bound $T_2$ as

\begin{align}
    T_2
    &= {\text{Acc}}_t(f^*) - \widehat {\text{Acc}}_t(f^*) + \widehat {\text{Acc}}_t(\hat f) - {\text{Acc}}_t(\hat f)\\
    &\quad \widehat {\text{Acc}}_t(f^*) - \widehat {\text{Acc}}_t(\hat f)\\
    &\le {\text{Acc}}_t(f^*) - \widehat {\text{Acc}}_t(f^*) + \widehat {\text{Acc}}_t(\hat f) - {\text{Acc}}_t(\hat f) \label{eq:decomp},
\end{align}
where the last line is due to the fact that $\hat f$ maximises the refined accuracy estimates among $\cF$.

Next, we proceed to bound $ |\widehat {\text{Acc}}_t(g) - {\text{Acc}}_t(g)|$ for any $g \in \cG$. Note that $\cF \subset \cG$. Hence such a task will directly lead to a bound on terms of the form $ |\widehat {\text{Acc}}_t(f) - {\text{Acc}}_t(f)|$ for any $f \in \cF$. The reason we follow this path is because the refined accuracy of the model $E \in \cF$ depends highly non-linearly on the past cross-validation data due to the weighted combination of models in $\text{ACTIVE}_t$ where the weights itself are based on the corresponding models' refined accuracy estimate.

Subtle care needs to be exercised when bounding terms of the form $ |\widehat {\text{Acc}}_t(g) - {\text{Acc}}_t(g)|$. Notice that we can write
\begin{align}
    \widehat {\text{Acc}}_t(g)
    := \frac{1}{n(g)} \sum_{i=1}^{n(g)} \frac{1}{m} \sum_{j=1}^{m} I\{ g(x^{(t-i+1)}_j) = y^{(t-i+1)}_j\},
\end{align}

where $x_v^n$ is the $v^{th}$ covariate revealed at round $n$ in Framework \ref{fig:proto}. Here $n(g)$ is the final value of $r$ where the call to $\texttt{refineAccuracy}(g,t,\delta)$ stops. Since it depends on the hypothesis $g \in \cG$ handling this is different from the usual way of handling the excess risk in statistical learning theory \cite{bousquet2003lt} where the number of datapoints is independent of the hypothesis.

\textbf{observation 1}: To get around this issue, consider $mt$ datapoints: $S = \{(x^1,y^1)_{1:m}, \ldots,  (x^t,y^t)_{1:m}\}$. Consider two hypothesis $g,g' \in \cG$ such that $I\{g(x) = y \} = I \{ g'(x) = y\}$ for any $(x,y) \in S$. In such a case it follows that $n(g) = n(g')$ and $\widehat {\text{Acc}}_t(g) = \widehat {\text{Acc}}_t(g')$.

Now consider the loss-class $\cR = \{(x,y) \rightarrow I\{ g(x) = y\} : g \in \cG \}$ induced by $\cG$. Let $\cR'$ be the $mt$-shattering-set (Def.\ref{def:shatter}). Thus $|\cR'| =  S(\cR,mt)$. Notice that each hypothesis in $\cG$ is associated with a loss-composed hypothesis in $\cR'$. In view of observation 1, inorder to bound the loss-composed-hypothesis random variable $ |\widehat {\text{Acc}}_t(g) - {\text{Acc}}_t(g)|$ for any $g \in \cG$ it suffices to take a union bound of Proposition \ref{prop:mazeto} across all elements in $\cR'$. This implies that

\begin{align}
    |\widehat {\text{Acc}}_t(g) - {\text{Acc}}_t(g)|
    &= \tilde O(\sqrt{\log(S(\cR,mt))/mr}), \label{eq:shatter}
\end{align}
with probability at-least $1-\delta \cdot \log(S(\cR,mt))$ for any $g \in \cG$.

Each hypothesis in $\cG$ is a $d$-dimensional linear binary classifier. So VC dimension of $\cG$ is $d$. Since the VC dimension of $\cR$ is at-most twice that of $\cG$ we have that $\log(S(\cR,mt)) = 2d$. Further since $|\text{ACTIVE}(t)| \le 2 \log T$, combined with Sauer's lemma \cite{fml} we have that $\delta \cdot \log(S(\cR,mt)) \le 4\delta \log T \log mT$.

Now putting everything together results in Theorem \ref{thm:gen}.

\end{proof}

\begin{proposition} \label{prop:mazeto} (due to Theorem 1 in \cite{mazzetto2023adaptive})
For a fixed model $g \in \cG$ we have that  with probability at-least $1-\delta$
\begin{align}
    |{\text{Acc}}_t(g) - \widehat {\text{Acc}}_t(g)|
    &= \tilde O \left( \sqrt{\frac{1}{mr} }\right).
\end{align}
\end{proposition}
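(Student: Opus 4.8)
The plan is to prove Proposition~\ref{prop:mazeto} by adapting the adaptive-window analysis of \cite{mazzetto2023adaptive} to the hold-out estimator $u_g$ of Alg.~\ref{fig:refine}. Fix the model $g$, write $\tau=t$ for the terminal time passed to \texttt{refineAccuracy}, and let $r$ be the stationarity length, i.e. $\cD_\tau=\cD_{\tau-1}=\cdots=\cD_{\tau-r+1}=:\cD$ (as in Theorem~\ref{thm:gen}). For a dyadic scale $\rho\in\{2^0,2^1,\ldots\}$ with $\rho\le\tau$, $u_g(\rho)$ is the empirical accuracy of $g$ on the $n(\rho)=m\rho$ hold-out points of rounds $[\tau-\rho+1,\tau]$; since the data are sampled independently across timestamps and i.i.d.\ within a timestamp, $u_g(\rho)$ is an average of $n(\rho)$ independent $\{0,1\}$ variables with mean $\mu_g(\rho):=\frac1\rho\sum_{s=\tau-\rho+1}^{\tau}\text{Acc}_s(g)$. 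The three ingredients are: a high-probability good event on which every dyadic $u_g(\rho)$ concentrates; a lower bound on the window $\hat r$ at which the doubling loop stops; and a telescoping bound on the bias $|\mu_g(\hat r)-\text{Acc}_\tau(g)|$.

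\textbf{Good event.} First I would apply Hoeffding's inequality to $u_g(\rho)-\mu_g(\rho)$ for each of the $\le\log_2 T$ dyadic scales $\rho\le\tau$ and take a union bound. The $\sqrt{\log(T/\delta)/\rho}$ summand of $S(\cdot,\delta)$ (evaluated at $n(\rho)=m\rho$, i.e.\ $\sqrt{\log(T/\delta)/(m\rho)}$) is exactly the per-scale confidence radius that makes the total failure probability at most $\delta$; the remaining $\sqrt{20\log T/(m\rho)}$ summand is slack that only helps. This yields an event $\cE$ with $\Pr[\cE]\ge 1-\delta$ on which $|u_g(\rho)-\mu_g(\rho)|\le S(n(\rho),\delta)$ for all dyadic $\rho\le\tau$. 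Everything below is deterministic given $\cE$.

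\textbf{Loop behaviour and telescoping.} Since $\cD$ is constant on $[\tau-r+1,\tau]$, $\mu_g(\rho)=\text{Acc}_\tau(g)$ for every dyadic $\rho\le r$. Hence whenever $2\rho\le r$, on $\cE$ the test quantity is $|u_g(\rho)-u_g(2\rho)|\le |u_g(\rho)-\mu_g(\rho)|+0+|u_g(2\rho)-\mu_g(2\rho)|\le 2S(n(\rho),\delta)\le 4S(n(\rho),\delta)$, so the doubling test in Alg.~\ref{fig:refine} passes; therefore the returned window $\hat r$ satisfies $\hat r\ge r/2$ (it can stop earlier only via the $\rho\le\tau/2$ guard, which still forces $\hat r>\tau/2\ge r/2$). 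Now let $\rho_0=1<\rho_1<\cdots<\rho_L=\hat r$ be the visited scales. For $i<L$ the test passed, so on $\cE$, using $n(\rho_{i+1})=2n(\rho_i)$ and monotonicity of $S$, $|\mu_g(\rho_i)-\mu_g(\rho_{i+1})|\le|u_g(\rho_i)-u_g(\rho_{i+1})|+S(n(\rho_i),\delta)+S(n(\rho_{i+1}),\delta)\le 6S(n(\rho_i),\delta)$. Let $j$ be the largest index with $\rho_j\le r$, so $\rho_j\ge r/2$ and $\mu_g(\rho_j)=\text{Acc}_\tau(g)$. Summing increments from $k=j$ to $L-1$ and using $n(\rho_k)=m2^k$ and $S(n,\delta)=\Theta(\sqrt{\log(T/\delta)/n})$, the sum $6\sum_{k\ge j}S(n(\rho_k),\delta)$ is geometric and dominated by its first term, giving $|\mu_g(\hat r)-\text{Acc}_\tau(g)|=O(\sqrt{\log(T/\delta)/(m\,2^j)})=O(\sqrt{\log(T/\delta)/(mr)})$ since $2^j=\rho_j\ge r/2$. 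Finally, on $\cE$, $|\widehat{\text{Acc}}_\tau(g)-\text{Acc}_\tau(g)|=|u_g(\hat r)-\text{Acc}_\tau(g)|\le|u_g(\hat r)-\mu_g(\hat r)|+|\mu_g(\hat r)-\text{Acc}_\tau(g)|\le S(n(\hat r),\delta)+O(\sqrt{\log(T/\delta)/(mr)})=\tilde O(\sqrt{1/(mr)})$, where the last step again uses $\hat r\ge r/2$. This is the claimed bound.

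\textbf{Main obstacle.} The crux is that the returned window $\hat r$ is itself a function of the hold-out data, so one cannot simply invoke concentration at a fixed window; the resolution is the uniform-over-$O(\log T)$-scales good event of Step~1 combined with the two-sided use of the doubling test (``the loop provably reaches a window of size $\Omega(r)$'' and ``each doubling it performs shifts the window-mean by only $O(S)$''), after which the geometric decay of $S(n(\rho_k),\delta)$ in $k$ collapses the accumulated bias to the scale of the single term at window $\approx r$. A secondary but genuine point is calibrating the multiplicative constants between the test threshold $4S$ and the concentration radius $S$ so that both implications hold simultaneously; the specific form of $S(\cdot,\delta)$ in Alg.~\ref{fig:refine} is engineered precisely for this.
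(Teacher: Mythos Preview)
Your argument is correct, but it takes a different route from the paper's proof. The paper does not unfold the adaptive-window analysis at all: it simply observes that for the singleton function class $\{(x,y)\mapsto \mathbb I\{g(x)=y\}\}$ the uniform-deviation hypothesis of Theorem~1 in \cite{mazzetto2023adaptive} follows from a single Hoeffding bound, invokes that theorem to obtain
\[
\|D_t-\hat D_t^{\hat r}\|_{\cF}=\tilde O\!\left(\min_{u\le t}\Big[\tfrac{1}{\sqrt{um}}+\max_{\tau<u}\|D_t-D_{t-\tau}\|_{\cF}\Big]\right),
\]
and plugs in $u=r$ (for which the drift term vanishes). By contrast, you reprove the content of that theorem from scratch: a union-bounded Hoeffding event over the $O(\log T)$ dyadic scales, the observation that the doubling test cannot fail while $2\rho\le r$ (giving $\hat r\ge r/2$), and a geometric telescoping of the mean shifts $|\mu_g(\rho_k)-\mu_g(\rho_{k+1})|$ past scale $r$. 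The paper's approach is shorter and modular; yours is self-contained and makes explicit exactly how the constant $4$ in the test threshold, the decay of $S(n(\rho_k),\delta)$, and the data-dependence of $\hat r$ interact—which is pedagogically valuable and sidesteps having to match the precise assumption format of \cite{mazzetto2023adaptive}.
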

\begin{proof}
The proof is a direct consequence of the result in \cite{mazzetto2023adaptive}. For the sake of completeness, we show how it follows from Theorem 1 in \cite{mazzetto2023adaptive}.

Let $(\cX \times \mathcal Y, \cA)$ be a measurable space. By the assumption in Theorem \ref{thm:gen}, $(X_i, Y_i) \sim D_i$ are mutually independent random variables. 
Let $D_t^r := \frac{1}{r} \sum_{\tau = t-r+1}^t D_t(A)$ for all $A \in \cA$. Let $\hat D_t^r$ be the corresponding empirical distribution defined by

\begin{align}
    \hat D_t^r 
    &:= \frac{|(X_\tau,Y_\tau) \in A: t-r+1 \le \tau \le t|}{mr},
\end{align}

where we recall that there are $m$ hold-out data-points revealed after each round.

For a fixed model $g \in \cG$ consider the singleton function class $\cF := \{(x,y):\rightarrow I\{g(x) = y \} \}$. For any distributions $P,Q$ on $(\cX \times \mathcal Y, \cA)$ define:

\begin{align}
    P(g)
    &:= E_{(X,Y) \sim P}[I\{g(X) = Y \}],
\end{align}

and
\begin{align}
    \|P - Q \|_{\cF}
    &:= |P(g) - Q(g)|.
\end{align}

Due to Hoeffding's lemma, we have that for any fixed $r \le t$ and $\delta \in (0,1)$,

\begin{align}
    \|D_t^r - \hat D_t^r \|_{\cF} = O(\sqrt{\log(1/\delta)/(mr)}),
\end{align}
with probability at-least $1-\delta$. Hence Assumption 1 in \cite{mazzetto2023adaptive} holds.

Let $\hat r$ be the final value where the $\texttt{refineAccuracy}(g,t,\delta)$ procedure (Alg.\ref{fig:refine}) stops.
Now Theorem 1 in \cite{mazzetto2023adaptive} states that the output of $\texttt{refineAccuracy}(g,t,\delta)$ satisfies with probability $1-\delta$ that

\begin{align}
    \|D_t - \hat D_t^{\hat r} \|_{\cF}
    &= \tilde O \left( \min_{u \le t} \left[ \frac{1}{\sqrt{um}} + \max_{\tau < u} \|D_t - D_{t-\tau} \|_{\cF}\right] \right). \label{eq:eq1}
\end{align}

Using the fact that the data distribution has remained constant in $[t-r,t]$ as in the assumption of Theorem \ref{thm:gen} and upper bounding the minimum across $u$ by plugging in $u=r$, we get that

\begin{align}
    |\text{Acc}_t(g) - \widehat {\text{Acc}}_t(g)|
    &= \tilde O(1/\sqrt{mr}).
\end{align}

This completes the proof.

\end{proof}

In Section \ref{sec:theory}, our analysis focused on the case where the distribution shifts were assumed to be piece-wise stationary. Next, we relax that assumption and study the dynamic regret of (see Eq.\eqref{eq:dyn}) \texttt{AWE} under the cases of slowly evolving shifts.

\begin{theorem} \label{thm:dynamic}
Assume the notations used in Section \ref{sec:theory} and Theorem \ref{thm:gen}. Consider an arbitrary partitioning $\mathcal{P}$ of the time horizon into $M$ bins as $[i_s,i_t]$ for $i=1,\ldots,M$. Define $V_{i_s:i_t} := \max_{u \in [i_s,i_t]} TV(D_{i_t},D_u)$ which is the maximum total variation (from end time-point of the bin) of the data distribution within the $i^{th}$ bin.  Define $V := \sum_{i=1}^M V_{i_s:i_t+1}$. We have the following dynamic regret bound for \texttt{AWE}:
\begin{align}
    R_{\text{dynamic}}
    &= \sum_{k=1}^T \text{Acc}_k(h_k^*) - \text{Acc}_k(\hat f_k)\\
    &= \sum_{k=1}^T \text{Acc}_k(h_k^*) - \text{Acc}_k(f_k^*) +  \min_{M} \min_{\mathcal P} \sum_{i=1}^M \tilde O(\sqrt{dn_i/m} + n_i V_{i_s:i_t})),
\end{align}
with probability at-least $1-\delta$.
\end{theorem}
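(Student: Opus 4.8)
The plan is to reduce the dynamic regret to a sum of per-bin instantaneous regret terms and then invoke a ``robust'' version of Theorem~\ref{thm:gen} on each bin, where the robustness absorbs the total variation of the distribution within the bin. First I would fix an arbitrary partition $\mathcal{P}$ into bins $[i_s,i_t]$, $i=1,\ldots,M$, and write
\begin{align}
  R_{\text{dynamic}}
  &= \sum_{k=1}^T \bigl(\text{Acc}_k(h_k^*) - \text{Acc}_k(f_k^*)\bigr) + \sum_{i=1}^M \sum_{k \in [i_s,i_t]} \bigl(\text{Acc}_k(f_k^*) - \text{Acc}_k(\hat f_k)\bigr),
\end{align}
so that it suffices to bound the second double sum by $\sum_i \tilde O(\sqrt{d n_i/m} + n_i V_{i_s:i_t})$, where $n_i = i_t - i_s + 1$ is the bin length (I am reading $n_i$ as the number of \emph{rounds} in bin $i$, matching the $\sqrt{dn_i/m}$ statistical term with $r = n_i$ in Theorem~\ref{thm:gen}).

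The core step is a per-bin analysis. Within bin $i$, the distributions $D_u$ for $u \in [i_s,i_t]$ are all within total variation $V_{i_s:i_t}$ of the endpoint distribution $D_{i_t}$. I would run the argument of Proposition~\ref{prop:mazeto} / Theorem~\ref{thm:gen} but \emph{without} assuming piecewise stationarity: the refined-accuracy guarantee \eqref{eq:eq1} from Theorem~1 of \cite{mazzetto2023adaptive} already bounds $\|D_t - \hat D_t^{\hat r}\|_{\cF}$ by $\tilde O(\min_{u \le t}[1/\sqrt{um} + \max_{\tau<u}\|D_t - D_{t-\tau}\|_{\cF}])$. Plugging $u = (\text{number of rounds in bin } i \text{ up to round } k)$ gives a statistical term $\tilde O(1/\sqrt{m n_i})$ plus a bias term $\le V_{i_s:i_t}$ for each round $k$ in the bin. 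Summing over the $n_i$ rounds in the bin contributes $\tilde O(\sqrt{n_i/m} + n_i V_{i_s:i_t})$ for the accuracy-estimation error of a single hypothesis, and then the union bound over the $mt$-shattering-set of $\cG$ (exactly as in the proof of Theorem~\ref{thm:gen}, using VC dimension $d$ and Sauer's lemma) replaces $1/\sqrt{n_i/m}$ by $\sqrt{d n_i/m}$; combining the $T_2$-type decomposition \eqref{eq:decomp} (the selection of $\hat f_k$ only costs a factor $2$) yields the claimed per-bin bound. Finally I would take the minimum over $M$ and over partitions $\mathcal{P}$, since the derivation holds for every fixed partition and the left side does not depend on $\mathcal P$.

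The main obstacle I anticipate is handling the bias term carefully when the ``stable window'' used by \texttt{refineAccuracy} straddles a bin boundary: the adaptive choice $\hat r$ in Alg.~\ref{fig:refine} may reach back into an earlier bin with a very different distribution, so one must argue that the stopping rule of \texttt{refineAccuracy} prevents $\hat r$ from being much larger than the in-bin window (this is precisely what the bias-variance tradeoff in \cite{mazzetto2023adaptive} buys us), and then control the resulting error by $V_{i_s:i_t+1}$ rather than $V_{i_s:i_t}$ — which is why the theorem statement sums $V_{i_s:i_t+1}$. A second, more routine, subtlety is bookkeeping the failure probabilities: the $\tilde O$ must hide a $\log(T/\delta)$ coming from the per-round, per-window union bounds across all $M$ bins, and one should state the final guarantee with the aggregated failure probability (the statement writes $1-\delta$, absorbing constants and logs into $\tilde O$). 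Both issues are conceptual rather than computational, and the discretization argument sketched after Theorem~\ref{thm:gen} in the main text is exactly the template to follow.
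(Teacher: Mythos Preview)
Your proposal is correct and follows essentially the same route as the paper: decompose into approximation plus estimation, invoke the Mazzetto-type bound \eqref{eq:eq1} per round with the bias term bounded by $V_{i_s:i_t}$, union-bound over $\cG$ via Sauer's lemma, sum within each bin, and take the minimum over partitions. One small slip: at round $k$ in bin $i$ the statistical term is $\tilde O(1/\sqrt{m(k-i_s+1)})$, not $\tilde O(1/\sqrt{mn_i})$, but since $\sum_{j=1}^{n_i} 1/\sqrt{mj} = O(\sqrt{n_i/m})$ the bin total is unchanged; also, your ``straddling'' worry is not a real obstacle, because the $\min_u$ in \eqref{eq:eq1} already lets you choose $u$ to stay inside the bin regardless of what $\hat r$ actually is.
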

\begin{proof}

We continue our arguments from Eq.\eqref{eq:eq1}. Let $j \in [i_s,i_t]$. For any fixed hypothesis $g \in \cG$ (see Notations section before the proof of Theorem \ref{thm:gen} for the definition of $\cG$) we get with probability at-least $1-\delta$ that

\begin{align}
    |\text{Acc}_j(g) - \widehat {\text{Acc}}_j(g)|
    &= \tilde O(1/\sqrt{mj} + V_{i_s:i_t}).
\end{align}

By taking a union bound over the hypothesis class $\cG$ similar to Eq.\eqref{eq:shatter} and noting that the metric entropy of $\cG$ is $O(d)$ we conclude that

\begin{align}
    |\text{Acc}_j(g) - \widehat {\text{Acc}}_j(g)|
    &= \tilde O(\sqrt{d/mj} + V_{i_s:i_t}).
\end{align}

Define $n_i:= i_t - i_s + 1$ which the length of the $i^{th}$ bin. Next by taking a union bound across all time points within a bin (and after re-adjusting $\delta$) and continuing from the decomposition in Eq.\eqref{eq:decomp}, we have with probability at-least $1-\delta$ that 
\begin{align}
    \sum_{j=i_s}^{i_t} \text{Acc}_j(f_j^*) - \text{Acc}_j(\hat f_j)
    &= \sum_{j=i_s}^{i_t} \tilde O(\sqrt{d/mj} + V_{i_s:i_t})\\
    &= \tilde O(\sqrt{dn_i/m} + n_i V_{i_s:i_t}))
\end{align}

Now summing across all bins, using the decomposition in Eq.\eqref{eq:decomp1} and noting that the partition was selected arbitrarily results in the theorem.

\end{proof}

To better understand how the dynamic regret relates to the variation in the evolving data distributions, we provide the following corollary to Theorem \ref{thm:dynamic}.

\begin{corollary} \label{cor:dyn}
Assume the notations of Theorem \ref{thm:dynamic}. We have with probability at-least $1-\delta$ that \texttt{AWE} satisfies
\begin{align}
    R_{\text{dynamic}}
    &=  \tilde O(T^{2/3} (V/m)^{1/3} + \sqrt{T/m}) +  \sum_{k=1}^T \text{Acc}_k(h_k^*) - \text{Acc}_k(f_k^*)
\end{align}
\end{corollary}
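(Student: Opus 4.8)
\textbf{Proof proposal for Corollary~\ref{cor:dyn}.}
The plan is to instantiate the free partition in Theorem~\ref{thm:dynamic} with an explicit balanced one and then optimize its granularity. Fix an integer $M\in\{1,\dots,T\}$ and let $\mathcal P$ be the partition of $[T]$ into $M$ consecutive bins of essentially equal length, so $n_i=\Theta(T/M)$ for all $i$. First I would bound the two pieces of the ``estimation'' term of Theorem~\ref{thm:dynamic} separately. For the variation piece, $\sum_{i=1}^M n_i V_{i_s:i_t}\le(\max_i n_i)\sum_{i=1}^M V_{i_s:i_t}=O\!\big((T/M)V\big)$, where I use the triangle inequality for total variation, $\mathrm{TV}(D_{i_t},D_u)\le\sum_{s=u}^{i_t-1}\mathrm{TV}(D_{s+1},D_s)$, so that the within-bin increments telescope over all bins into the global variation $V$ (if $V$ in the corollary is read literally as the partition-indexed quantity of Theorem~\ref{thm:dynamic}, this reduction is vacuous). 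For the statistical piece, since $d=|\text{ACTIVE}(t)|=O(\log T)$ is a logarithmic factor absorbed by $\tilde O$, we have $\sum_{i=1}^M\sqrt{dn_i/m}=\tilde O\!\big(M\sqrt{(T/M)/m}\big)=\tilde O\!\big(\sqrt{MT/m}\big)$.

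Combining these, for the chosen partition the estimation contribution is
\begin{align}
\tilde O\!\left(\sqrt{\tfrac{MT}{m}}+\tfrac{T}{M}\,V\right),
\end{align}
and it remains to minimize this over $M$. Balancing the two summands, i.e.\ $\sqrt{MT/m}\asymp (T/M)V$, gives $M\asymp V^{2/3}(Tm)^{1/3}$, and substituting back renders both summands of order $\tilde O\!\big(T^{2/3}(V/m)^{1/3}\big)$ --- a routine two-term (AM--GM) optimization. When this balancing value falls below $1$, which is exactly the small-variation regime $V=O(1/\sqrt{mT})$, I would instead take $M=1$: the bound is then $\tilde O(\sqrt{T/m})$, which also dominates the $(T/M)V$ term in that regime. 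Taking the sum of the two regime-bounds covers all admissible $M$, so the estimation part is $\tilde O\!\big(T^{2/3}(V/m)^{1/3}+\sqrt{T/m}\big)$; adding the approximation term $\sum_{k=1}^T\text{Acc}_k(h_k^*)-\text{Acc}_k(f_k^*)$ carried along in Theorem~\ref{thm:dynamic} yields the claimed identity, and the $1-\delta$ guarantee is inherited unchanged since only one partition is instantiated (no extra union bound is required, because the $\min$ in Theorem~\ref{thm:dynamic} already lives inside its high-probability event).

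I do not anticipate a genuine obstacle --- this is a specialization of Theorem~\ref{thm:dynamic} followed by a one-line optimization. The only care points are (i) rounding $M$ to an admissible integer in $\{1,\dots,T\}$ and checking that the $\lceil T/M\rceil$ discretization only moves constants hidden in $\tilde O$; (ii) cleanly passing from the partition-dependent variation of Theorem~\ref{thm:dynamic} to the scalar $V$ of the corollary through the telescoping bound above; and (iii) verifying the boundary behaviour, so that clamping $M$ to $1$ (and, if desired, to $T$) remains dominated by the stated bound under the paper's standing assumption that the number of distribution switches, and hence $V$, grows sub-linearly in $T$.
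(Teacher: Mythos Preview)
Your approach is correct and lands on the same rate, but it takes a genuinely different route from the paper. The paper does \emph{not} partition into equal-width bins and optimize over $M$; instead it fixes a threshold $\epsilon>0$ and greedily grows bins so that $V_{i_s:i_t}\le\epsilon$ while $V_{i_s:(i_t+1)}>\epsilon$. This immediately yields $\sum_i n_iV_{i_s:i_t}\le T\epsilon$ and $M\le 1+V/\epsilon$, but because bin widths are now variable it must invoke Cauchy--Schwarz to get $\sum_i\sqrt{n_i/m}\le\sqrt{MT/m}$; the final step optimizes over $\epsilon$, setting $\epsilon=(V/(mT))^{1/3}$. In short, your equal-width choice makes the statistical sum trivial but forces you to control $\sum_iV_{i_s:i_t}$ separately, whereas the paper's threshold choice makes the variation sum trivial but needs Cauchy--Schwarz for the statistical sum; both two-term tradeoffs balance to the same $T^{2/3}(V/m)^{1/3}$. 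Two minor notes: (a) the paper's $\sqrt{T/m}$ term comes from the ``$+1$'' in $M\le 1+V/\epsilon$, not from an explicit $M{=}1$ boundary case as in your argument; (b) your parenthetical that $\sum_iV_{i_s:i_t}\le V$ is ``vacuous'' under the literal reading is not quite right, since $V_{i_s:i_t}$ and $V_{i_s:i_t+1}$ anchor at different endpoints --- but a triangle-inequality argument gives $V_{i_s:i_t}\le 2V_{i_s:i_t+1}$, so the inequality holds up to a constant. Your caveat~(ii) about the circularity between the partition-dependent $V$ and the optimization variable applies equally to the paper's own proof.
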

\begin{remark}
The expression in the RHS of the bound in Corollary \ref{cor:dyn} is composed of two terms. The first term characterizes the dynamic regret. The second term characterises the approximation error of the instance pool $\{E_t\} \cup \cA_t$ (see \texttt{AWE} Alg.\ref{fig:awe} for definitions of $E_t$ and $\cA_t$) in approximating the best hypothesis $h_t^* \in \cH$. Since $E_t$ is an ensemble model which can be more expressive than individual instances of the OL algorithm trained on disjoint pieces of the history, the second term can be potentially negative as well.
\end{remark}

\begin{proof}
Note that Theorem \ref{thm:dynamic} holds for any partitioning schemes. Hence to further upperbound the dynamic regret, we can compute the bound in Theorem \ref{thm:dynamic} for any specific partitioning scheme.

Consider the following partitioning of the time horizon into $M$ bins such that $V_{i_s:i_t} \le \epsilon$ while $V_{i_s:i_t+1} > \epsilon$. Suppose that $M > 1$. We have the following bound on number of bins

\begin{align}
    V &= \sum_{i=1}^M V_{i_s:i_t+1}\\
    &\ge \epsilon M.
\end{align}

The number of bins also must be at-least 1. Hence the number of bins obeys $M \le 1 + V/\epsilon$. Instantiating Theorem \ref{thm:dynamic} with the above partitioning results in

\begin{align}
    R_{\text{dynamic}}
    &= \sum_{k=1}^T \text{Acc}_k(h_k^*) - \text{Acc}_k(\hat g_k)\\
    &\le \sum_{k=1}^T \text{Acc}_k(h_k^*) - \text{Acc}_k(g_k^*) + \sum_{i=1}^M \tilde O(\sqrt{dn_i/m} + n_i V_{i_s:i_t}))\\
    &\le_{(a)} \sum_{k=1}^T \text{Acc}_k(h_k^*) - \text{Acc}_k(g_k^*) + \tilde O(\sqrt{dMT/m}) + T\epsilon\\
    &\le \sum_{k=1}^T \text{Acc}_k(h_k^*) - \text{Acc}_k(g_k^*) + \tilde O(\sqrt{(dT/m) (1+V/\epsilon) }) + T\epsilon\\
    &\le \sum_{k=1}^T \text{Acc}_k(h_k^*) - \text{Acc}_k(g_k^*) + \tilde O(\sqrt{dT/m}) + \tilde O (\sqrt{VT/(m\epsilon)}), 
\end{align}
where line (a) is due to Cauchy Schwartz. Optimizing over $\epsilon$ and setting $\epsilon = (V/(mT))^{1/3} $ yields the theorem.

\end{proof}

Next, we compare our bound with existing dynamic regret bounds in the literature.
\begin{remark}
We note that our bound is different and not directly comparable to a dynamic regret bound of the form $O(\sqrt{T(P+1)})$ where $P = \sum_{t=2}^T TV(D_t, D_{t-1})$ \cite{zhang2018adaptive}. The reasons are as follows: 1) In the batched online setting of Fig.\ref{fig:proto}, the value of $m$ (the number of points in the hold-out data set at each round) can be substantially larger than 1 as is the case with our experiments. 2) Though the variational $V$ can be bounded above by $P$, such a bound can be very loose. So the value $V$ can be much smaller than $P$. Hence in the batched online framework of Fig.\ref{fig:proto}, the bound given by Corollary \ref{cor:dyn} can be tighter than a regret bound of the form $O(\sqrt{T(P+1)})$.
\end{remark}

\section{Comparison to \cite{daniely2015strongly}} \label{app:daniely}
In this section, we provide a close comparison between our work and that of \cite{daniely2015strongly} both of which are black-box adaptation techniques.

\subsection{Failure mode of Geometric Covering (GC) intervals} \label{app:daniely1}
GC intervals developed in \cite{daniely2015strongly} fails to satisfy a data coverage guarantee as stipulated by Theorem \ref{thm:mri}. We describe a minimalistic scenario where GC is insufficient to give a data coverage guaranteed by MRI as in Theorem 3.

Let $T=10$. The GC intervals that spans this time horizon can be split into various resolution as follows.

$Res_0 = {[1,1],[2,2],\ldots,[10,10] }$ $Res_1 = {[2,3],[4,5],[6,7],[8,9], [10,11] }$ $Res_2 = {[4,7],[8,11]}$ $Res_3 = {[8,15]}$

Suppose The distribution shift is such that data at time $1$ is generated from distribution $D_1$ and times $[2,10]$ are generated from another distribution $D_2$. Let the number of labelled examples revealed after each online round be $N$. Suppose we are the beginning of the online round $t=9$. So we have seen $7N$ labelled data points from distribution $D_2$.

We have $\text{ACTIVE}(9) = {[9,9],[8,9],[8,11],[8,15] }$. Since all the active intervals start from timepoint $8$, the experts defined by these active intervals have only seen $N$ labelled data points from distribution $D_2$ when we are at round $9$. Since $N < 7N/4 < 7N/2$, it fails to provide a data coverage guarantee as stipulated by MRI in Theorem 3. On the other hand, in this example there exists a bracket in the MRI from the $B$ set that can cover $4N$ data points from distribution $D_2$. The under-coverage effect of GC can be more exaggerated when we consider longer time horizons.

\subsection{Differences in problem setting and regret guarantees} \label{app:daniely2}
We note that the setting considered in our paper differs from that of the usual black-box model selection in the adaptive online learning literature \cite{daniely2015strongly}. We consider the setting of batched online learning (Framework \ref{fig:proto}) where the learner makes predictions for the labels of a collection of covariates (say $N$ covariates) that are revealed at each online round with true labels revealed only after all the predictive labels are submitted. This is suited for real-world usecases where it is practical to receive a collective feedback. The main message is that by using our methods, one can attain faster average interval regret guarantees than that promised by SAOL \cite{daniely2015strongly}. We explain this in detail below.

Conventional online algorithms operate in the regime of $N=1$. However, if we define the loss suffered at a round as the average loss incurred by the algorithm for all covariates, then we can reduce the setting of Framework \ref{fig:proto} to that studied in \cite{daniely2015strongly} and use the black-box model selection algorithms studied there. But this approach can lead to serious drawbacks when applied to our setting of batched online learning: Suppose in an interval $I \subseteq [T]$, the data distribution is constant. Then the Strongly Adaptive guarantees in \cite{daniely2015strongly} will lead to an average regret wrt the best model within interval $I$ as $O(1/\sqrt{|I|} + 1/\sqrt{N}) = O(1/\sqrt{|I|})$ (where for later inequality, we suppose $N > |I|$). Here regret is measured wrt population level accuracy as in Theorem \ref{thm:gen} and the term $1/\sqrt{N}$ is the artifact of concentration inequality to relate the empirical average loss at a round to its population analogue. However, by summing up the regret bound in Theorem \ref{thm:gen}, our model selection scheme lead to an average regret of $O(1/\sqrt{p \cdot N|I|})$ where $p$ is the validation-train split ratio in \texttt{AWE}. Since $p$ is selected such that $pN > 1$, such a rate leads to faster convergence than $O(1/\sqrt{|I|})$. As explained in notes on technical novelties in Section \ref{sec:intro}, this improved effect is attained by finding a weighted combination of active models with more weights allotted to models with best validation scores for the most recent distribution. Hence the distribution of the weights in our algorithm is more tilted (in comparison to that of SAOL in \cite{daniely2015strongly}) towards models with high recent validation score. The CVTT component helps to obtain high accuracy validation scores for each model.

\section{Additional Experimental Results on \texttt{AWE}} \label{app:exp}

\textbf{Experiments with \texttt{AWE}. }
We report the accuracy differences across various time stamps for different base OL algorithms. The experimental setting is same as that of the one described in Sec. \ref{sec:exp}. The results are displayed in Figures \ref{fig:images_ft},\ref{fig:images_ewc},\ref{fig:images_coral}, \ref{fig:images_irm} and Table \ref{tab:awe_win}. The trends noticed in Sec. \ref{sec:exp} remain to hold uniformly. 

\textbf{Experiments with SAOL. }
The experimental results on per-step accuracy gain with SAOL black-box scheme from \cite{daniely2015strongly} is shown in Figures \ref{fig:images_ft_gc},\ref{fig:images_ewc_gc},\ref{fig:images_coral_gc}, \ref{fig:images_irm_gc} and Table \ref{tab:other_win}.

\begin{figure*}[h]
    \centering
    \subfloat[FMOW]{\includegraphics[width=0.25\linewidth]{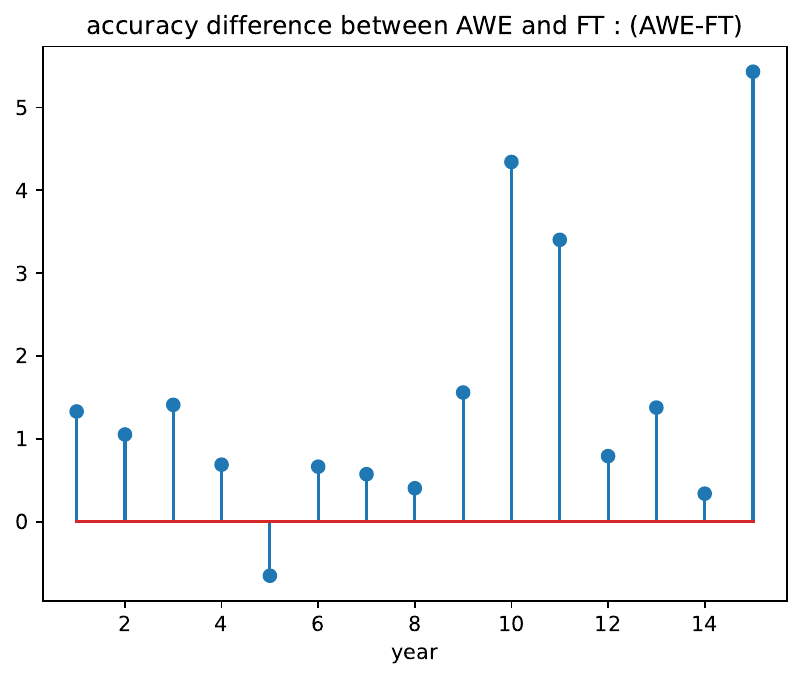}}\hfill
    \subfloat[Huffpost]{\includegraphics[width=0.25\linewidth]{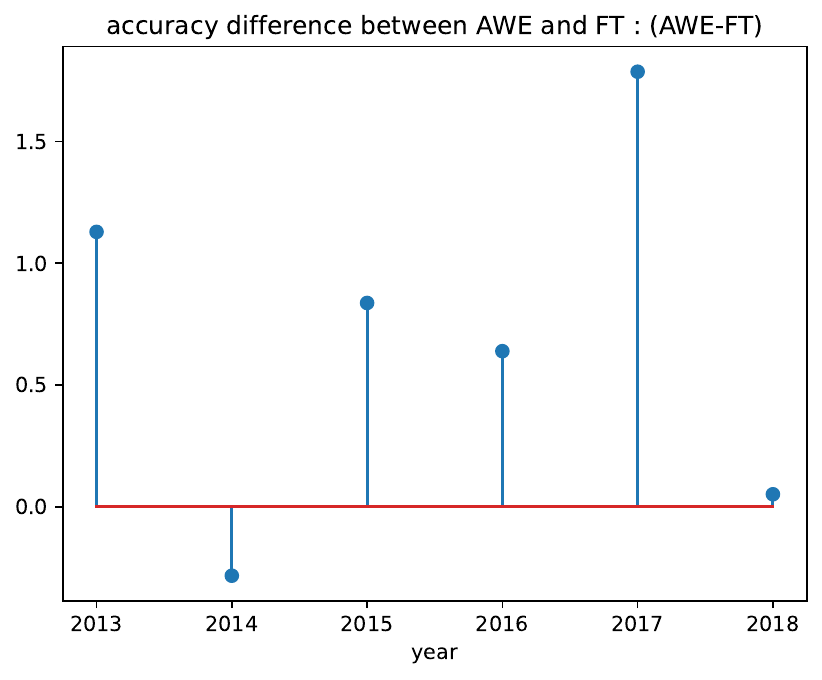}}\hfill
    \subfloat[Arxiv]{\includegraphics[width=0.25\linewidth]{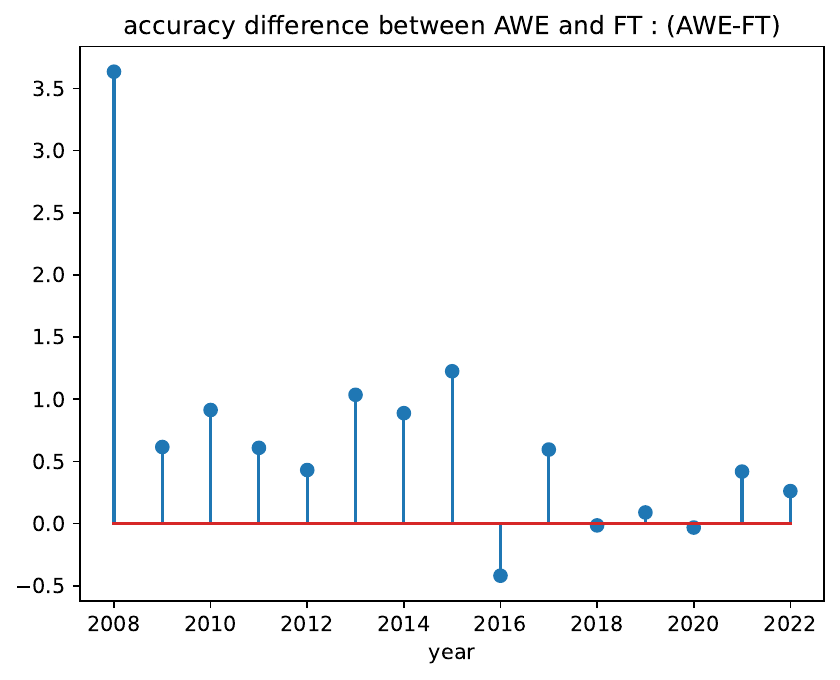}}

    \caption{$\%$ accuracy differences across various timestamps when \texttt{AWE} is run with FT as the online learning algorithm.}
    \label{fig:images_ft}
\end{figure*}

\begin{figure*}[htbp]
    \centering
    \subfloat[FMOW]{\includegraphics[width=0.25\linewidth]{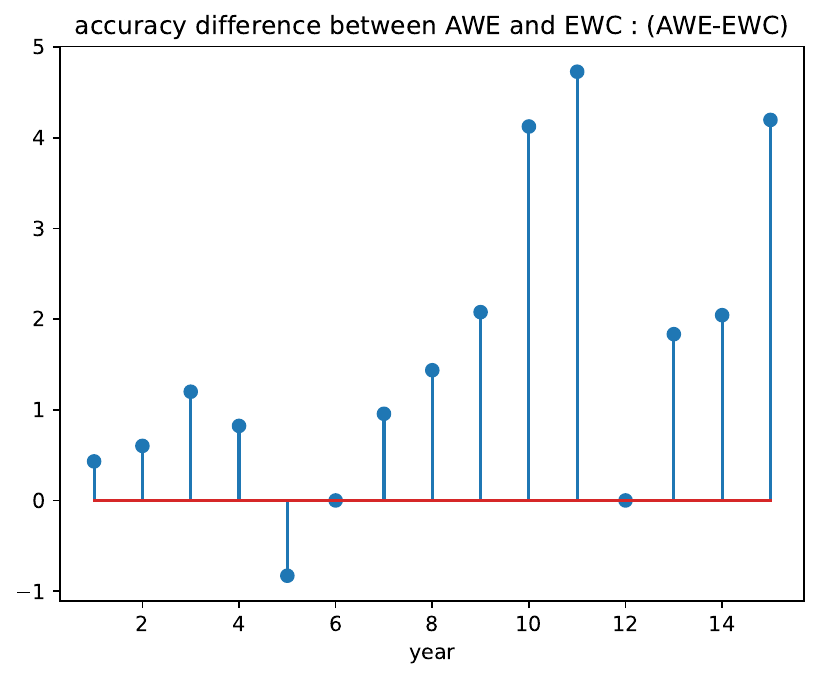}}\hfill
    \subfloat[Huffpost]{\includegraphics[width=0.25\linewidth]{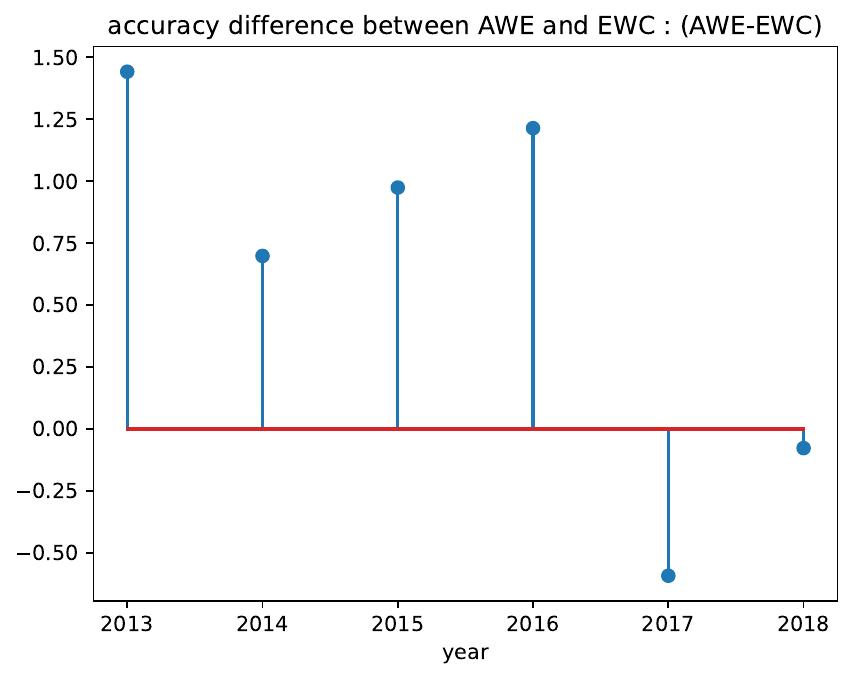}}\hfill
    \subfloat[Arxiv]{\includegraphics[width=0.25\linewidth]{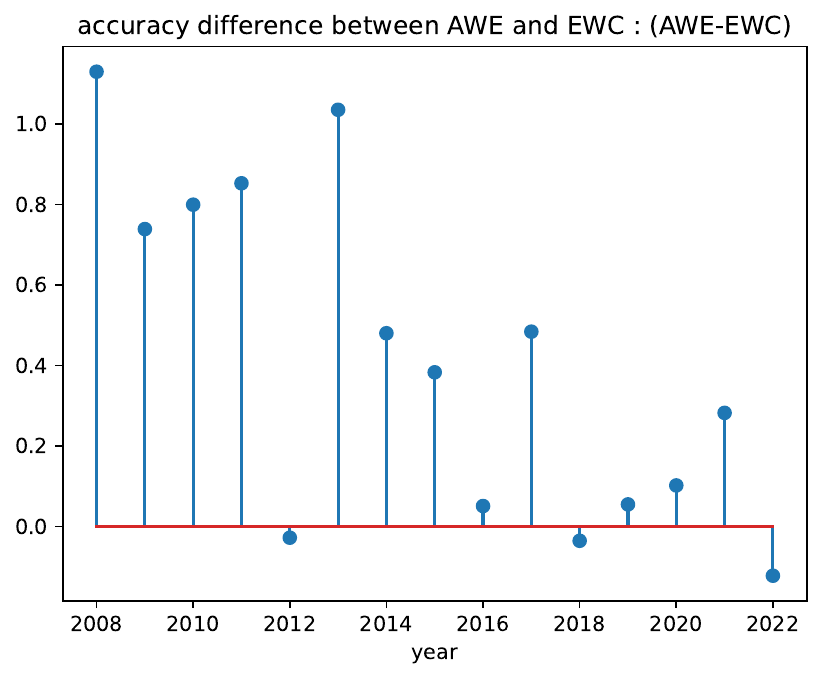}}

    \caption{$\%$ accuracy differences across various timestamps when \texttt{AWE} is run with EWC as the online learning algorithm.}
    \label{fig:images_ewc}
\end{figure*}

\begin{figure*}[htbp]
    \centering
    \subfloat[FMOW]{\includegraphics[width=0.25\linewidth]{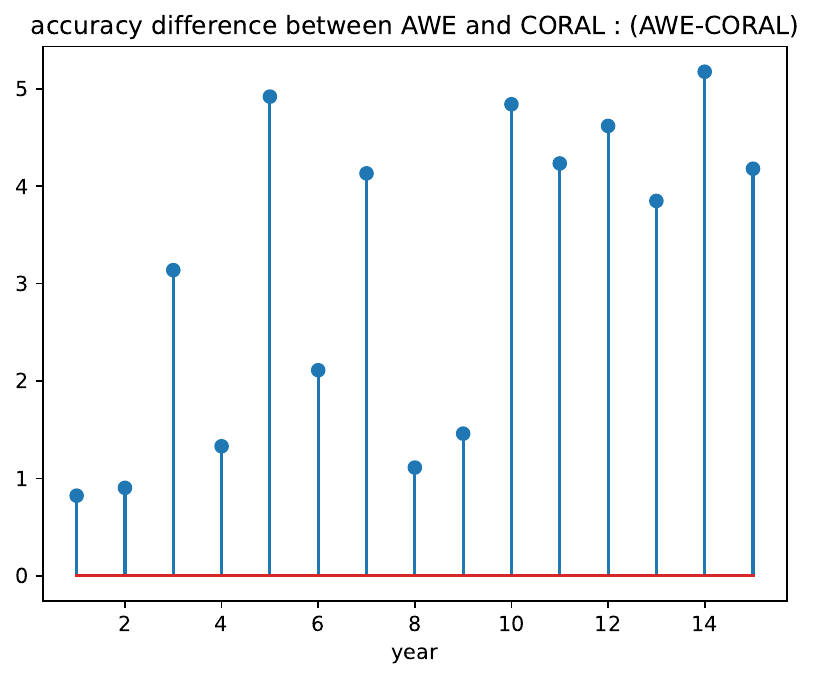}}\hfill
    \subfloat[Huffpost]{\includegraphics[width=0.25\linewidth]{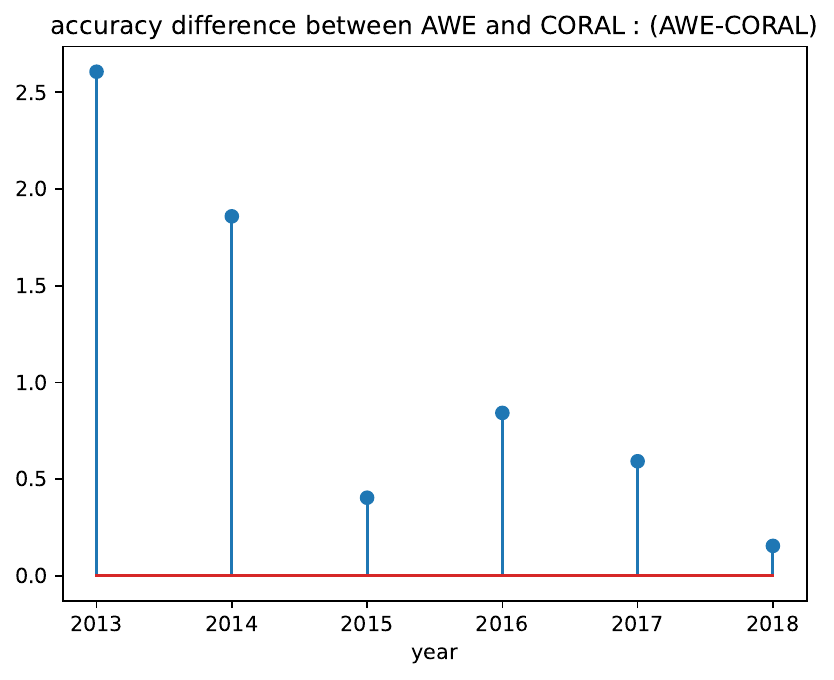}}\hfill
    \subfloat[Arxiv]{\includegraphics[width=0.25\linewidth]{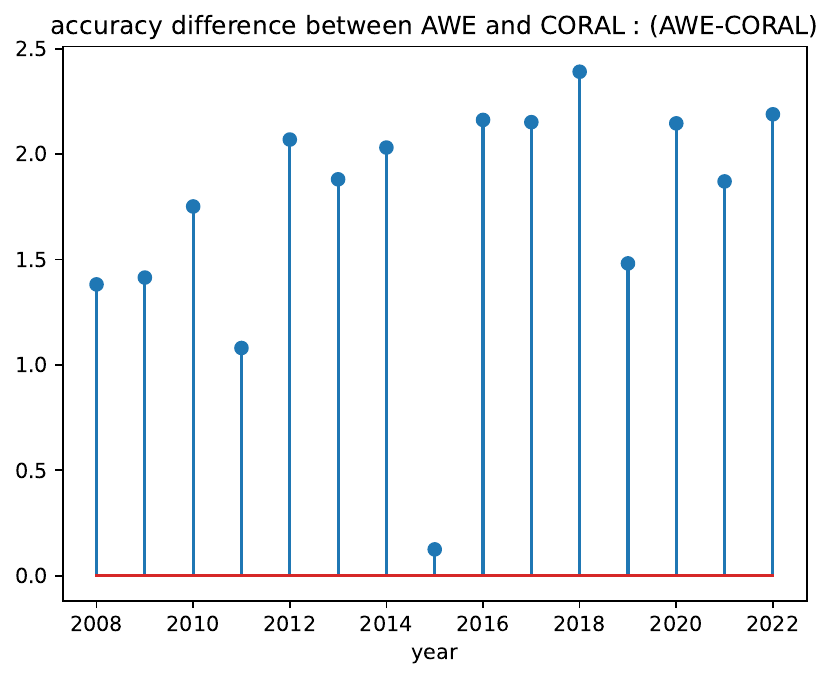}}

    \caption{$\%$ accuracy differences across various timestamps when \texttt{AWE} is run with CORAL as the online learning algorithm.}
    \label{fig:images_coral}
\end{figure*}

\begin{figure*}[htbp]
    \centering
    \subfloat[FMOW]{\includegraphics[width=0.25\linewidth]{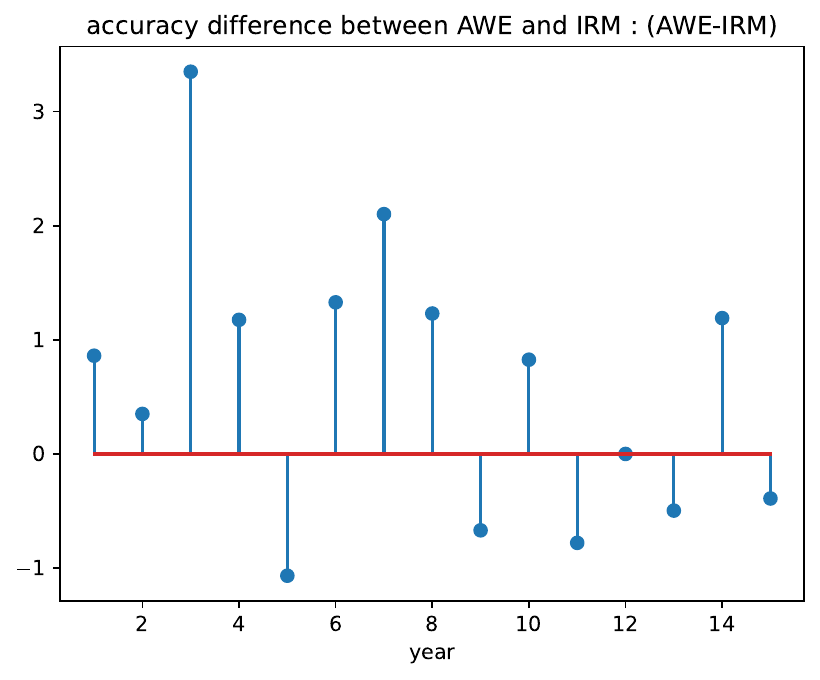}}\hfill
    \subfloat[Huffpost]{\includegraphics[width=0.25\linewidth]{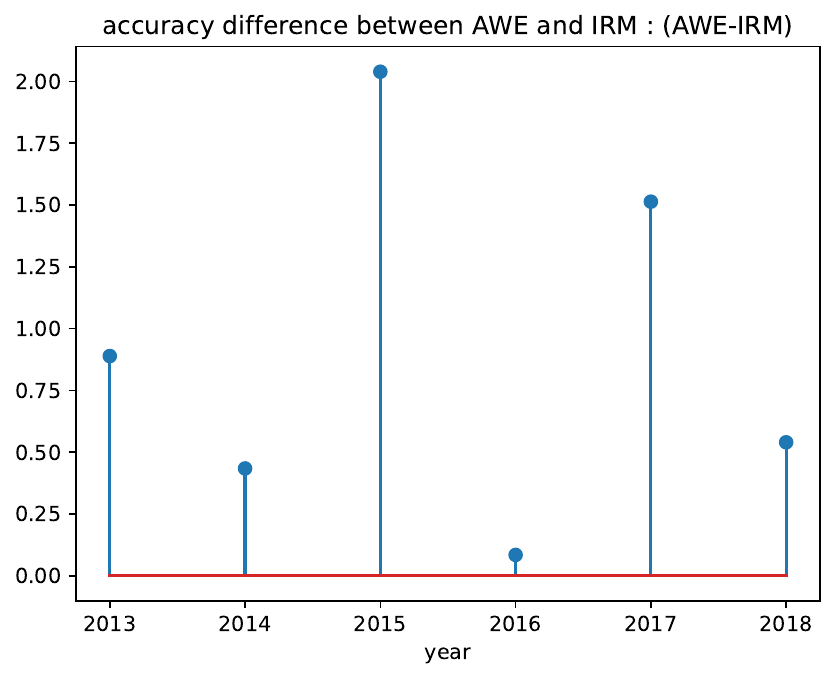}}\hfill
    \subfloat[Arxiv]{\includegraphics[width=0.25\linewidth]{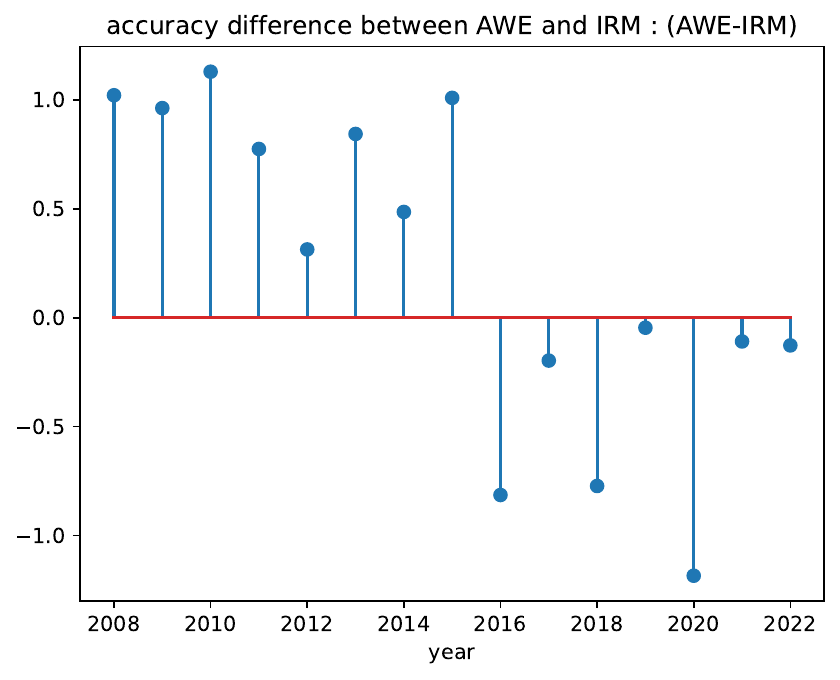}}

    \caption{$\%$ accuracy differences across various timestamps when \texttt{AWE} is run with IRM as the online learning algorithm.}
    \label{fig:images_irm}
\end{figure*}


\begin{figure*}[h]
    \centering
    \subfloat[FMOW]{\includegraphics[width=0.25\linewidth]{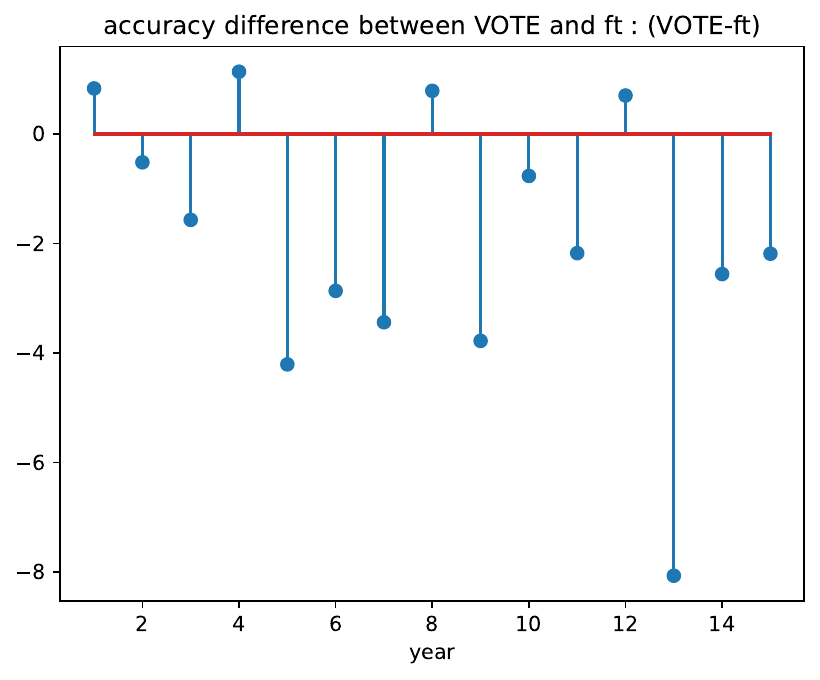}}\hfill
    \subfloat[Huffpost]{\includegraphics[width=0.25\linewidth]{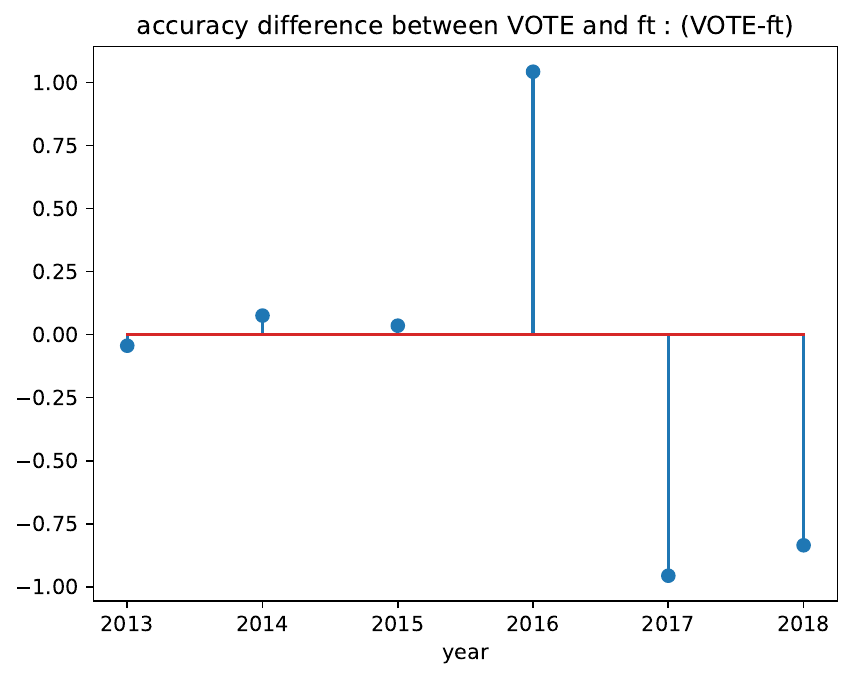}}\hfill
    \subfloat[Arxiv]{\includegraphics[width=0.25\linewidth]{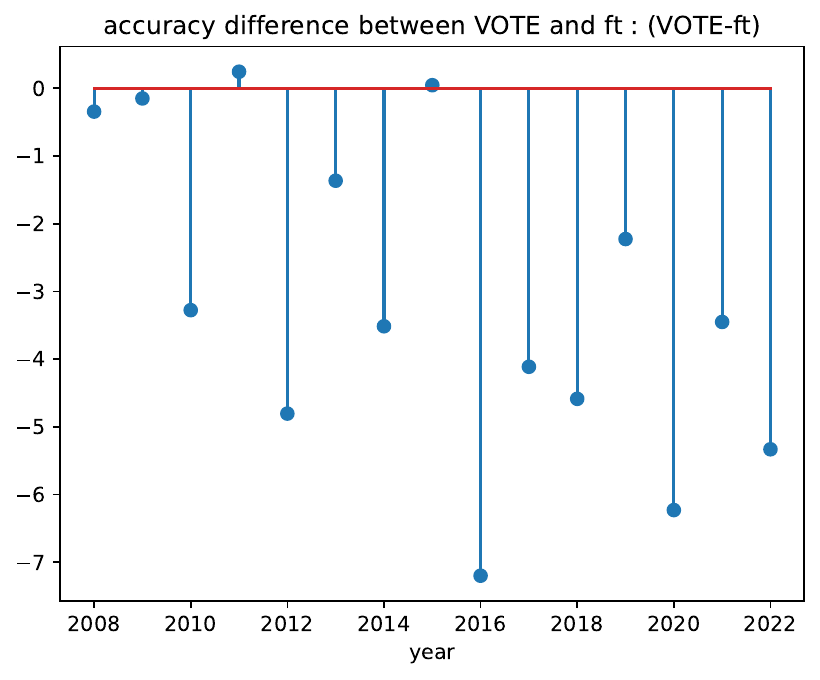}}

    \caption{$\%$ accuracy differences across various timestamps when \texttt{AWE} with majority voting is run with FT as the online learning algorithm.}
    \label{fig:images_ft_vote}
\end{figure*}

\begin{figure*}[h]
    \centering
    \subfloat[FMOW]{\includegraphics[width=0.25\linewidth]{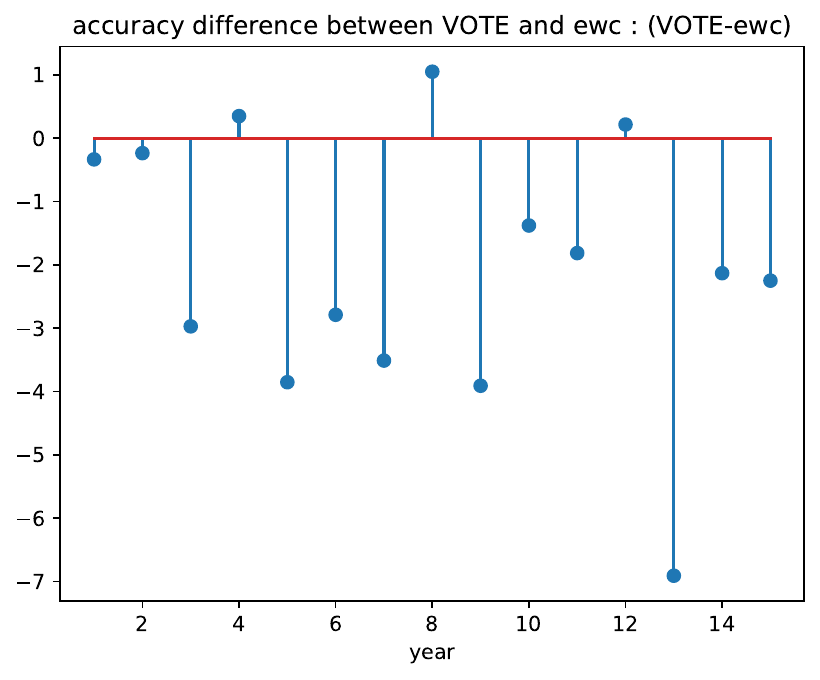}}\hfill
    \subfloat[Huffpost]{\includegraphics[width=0.25\linewidth]{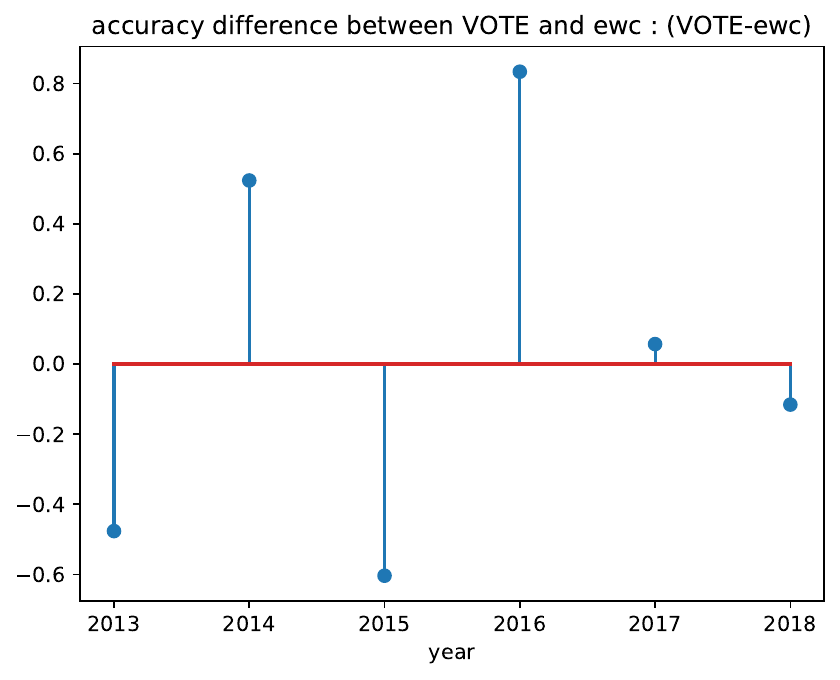}}\hfill
    \subfloat[Arxiv]{\includegraphics[width=0.25\linewidth]{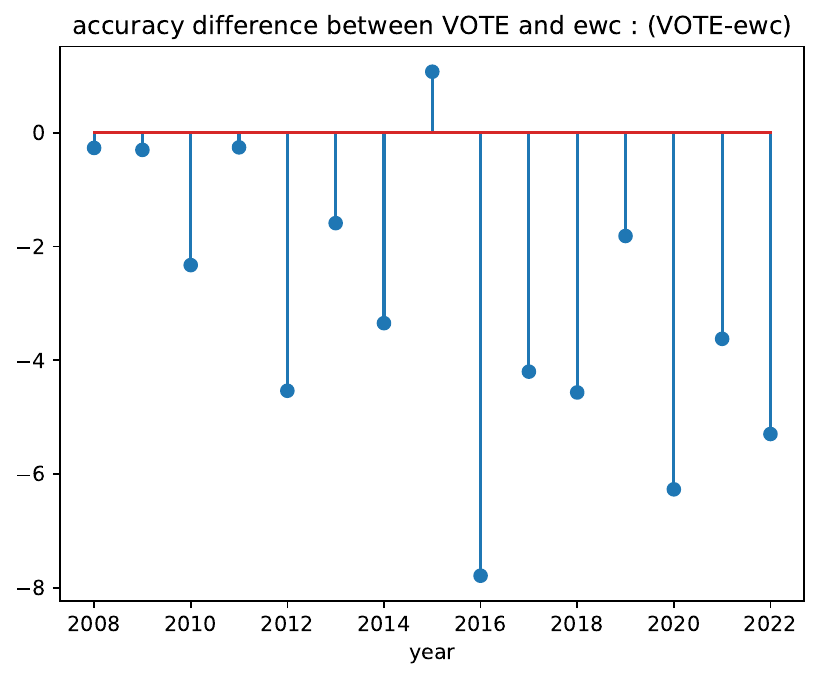}}

    \caption{$\%$ accuracy differences across various timestamps when \texttt{AWE} with majority voting is run with EWC as the online learning algorithm.}
    \label{fig:images_ewc_vote}
\end{figure*}

\begin{figure*}[h]
    \centering
    \subfloat[FMOW]{\includegraphics[width=0.25\linewidth]{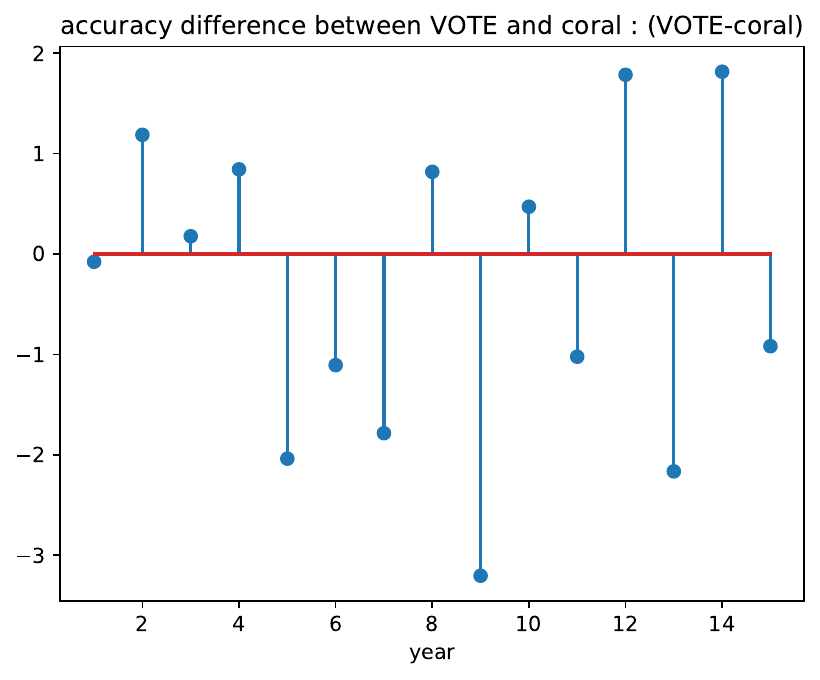}}\hfill
    \subfloat[Huffpost]{\includegraphics[width=0.25\linewidth]{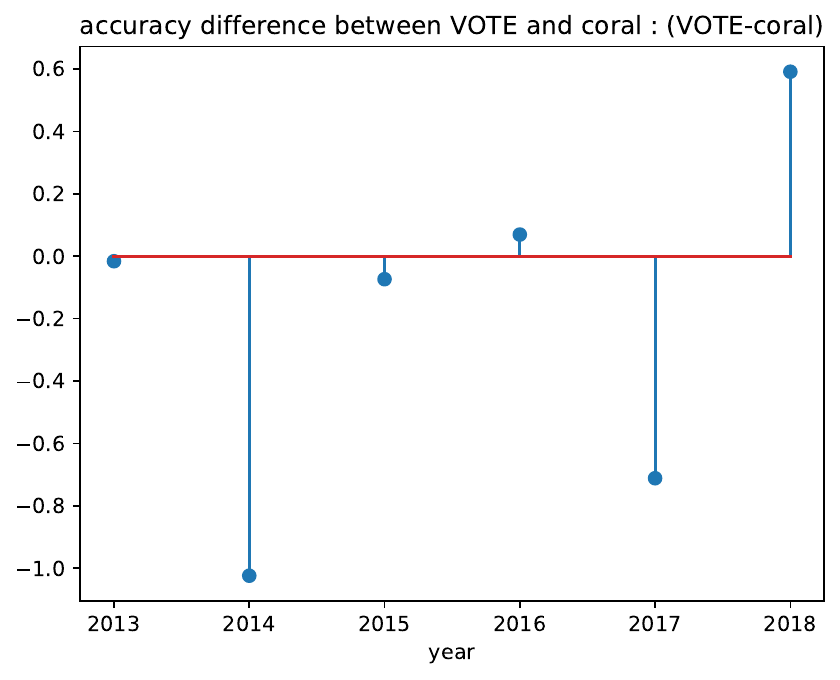}}\hfill
    \subfloat[Arxiv]{\includegraphics[width=0.25\linewidth]{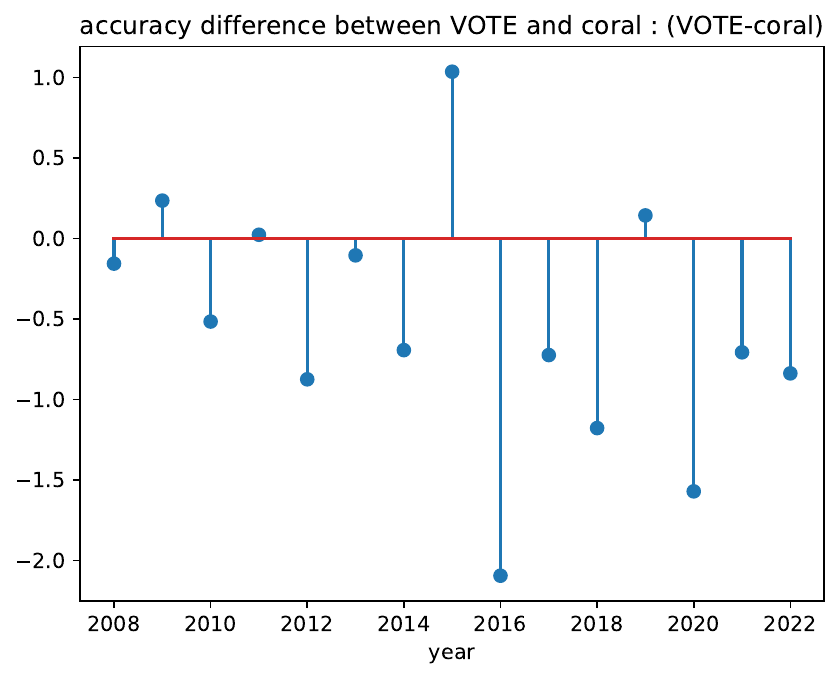}}

    \caption{$\%$ accuracy differences across various timestamps when \texttt{AWE} with majority voting is run with CORAL as the online learning algorithm.}
    \label{fig:images_coral_vote}
\end{figure*}

\begin{figure*}[h]
    \centering
    \subfloat[FMOW]{\includegraphics[width=0.25\linewidth]{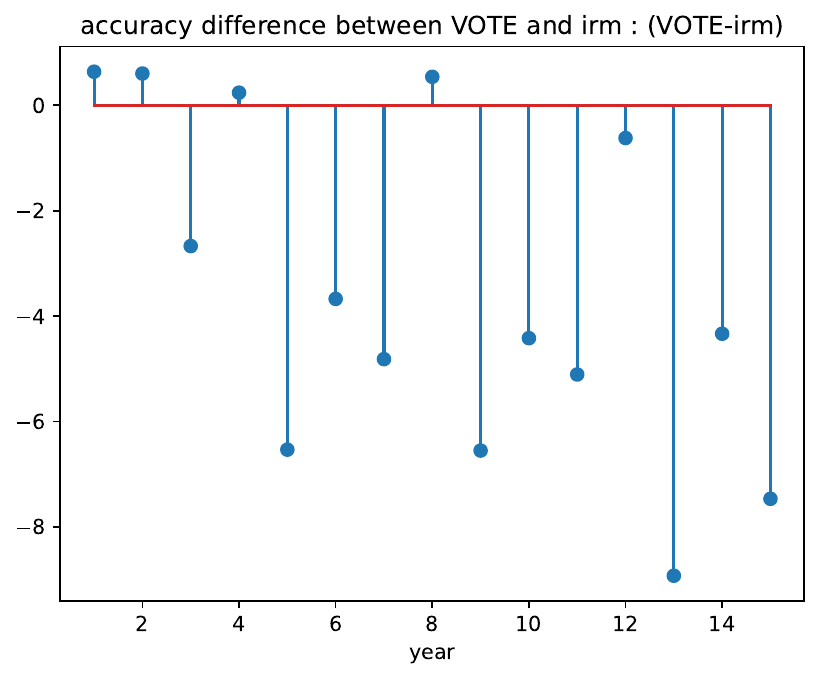}}\hfill
    \subfloat[Huffpost]{\includegraphics[width=0.25\linewidth]{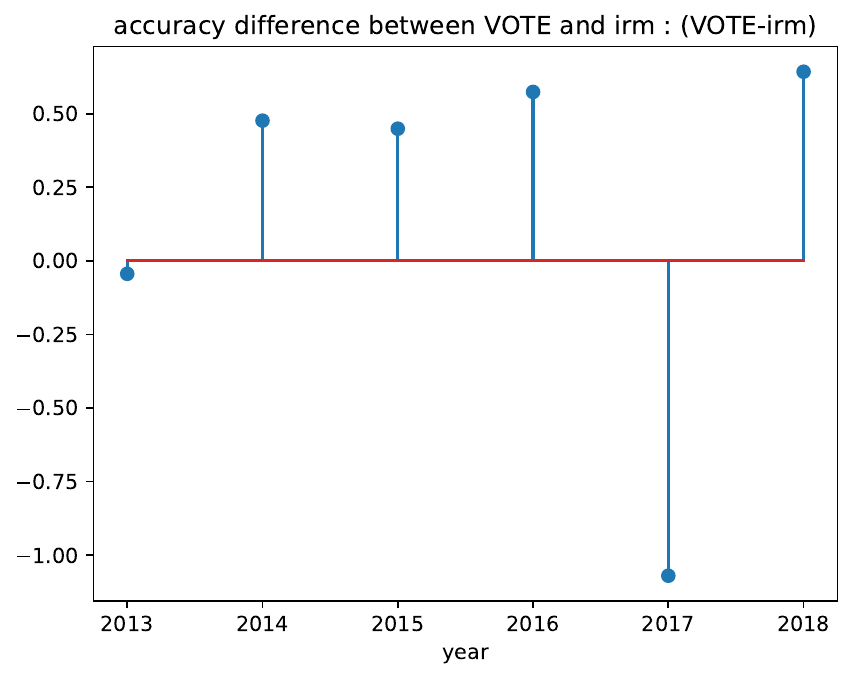}}\hfill
    \subfloat[Arxiv]{\includegraphics[width=0.25\linewidth]{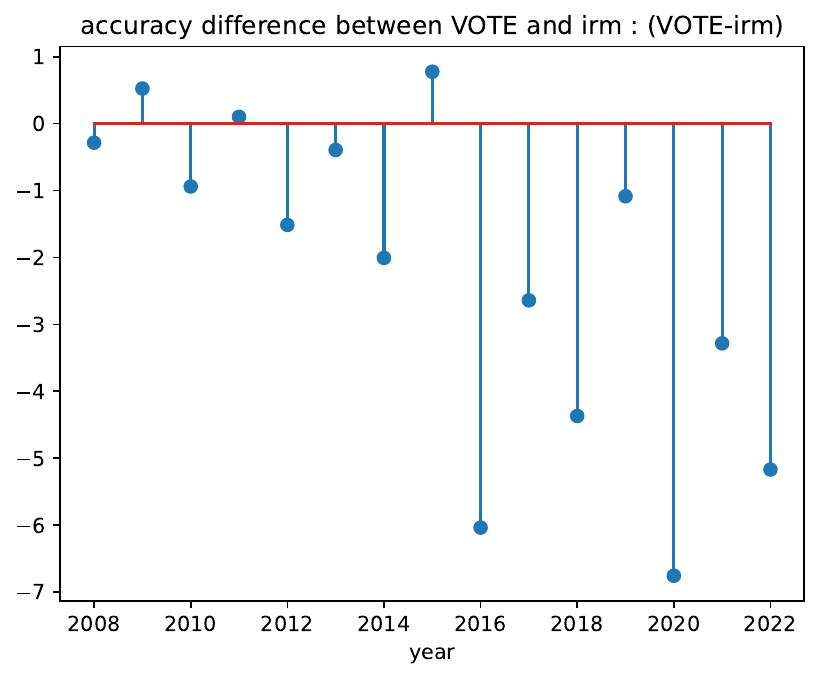}}

    \caption{$\%$ accuracy differences across various timestamps when \texttt{AWE} with majority voting is run with IRM as the online learning algorithm.}
    \label{fig:images_irm_vote}
\end{figure*}

\begin{figure*}[h]
    \centering
    \subfloat[FMOW]{\includegraphics[width=0.25\linewidth]{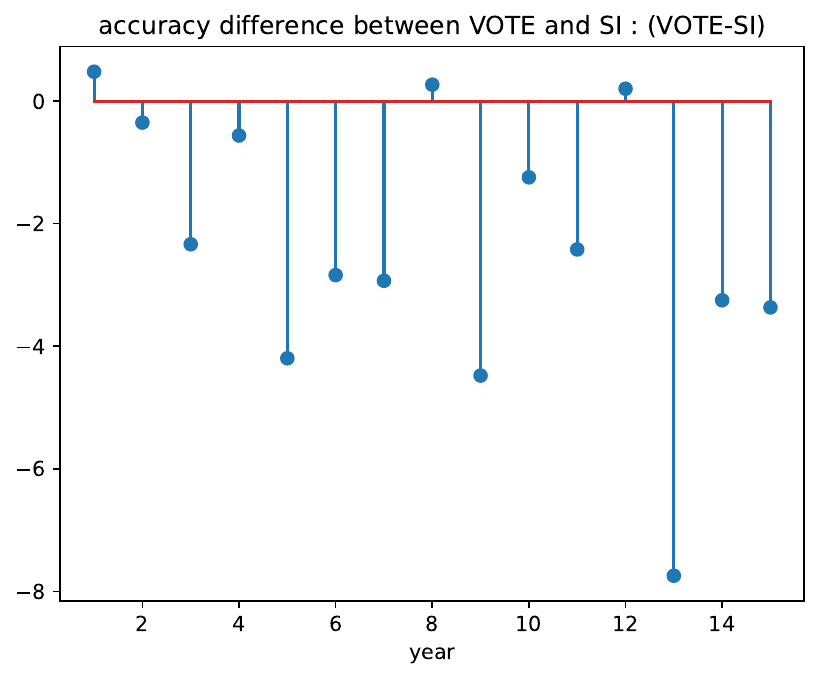}}\hfill
    \subfloat[Huffpost]{\includegraphics[width=0.25\linewidth]{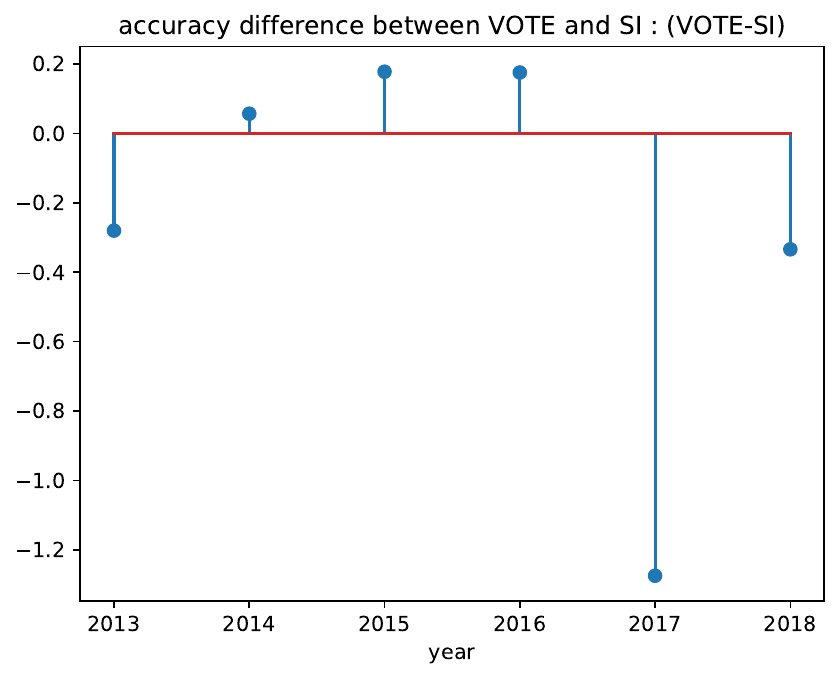}}\hfill
    \subfloat[Arxiv]{\includegraphics[width=0.25\linewidth]{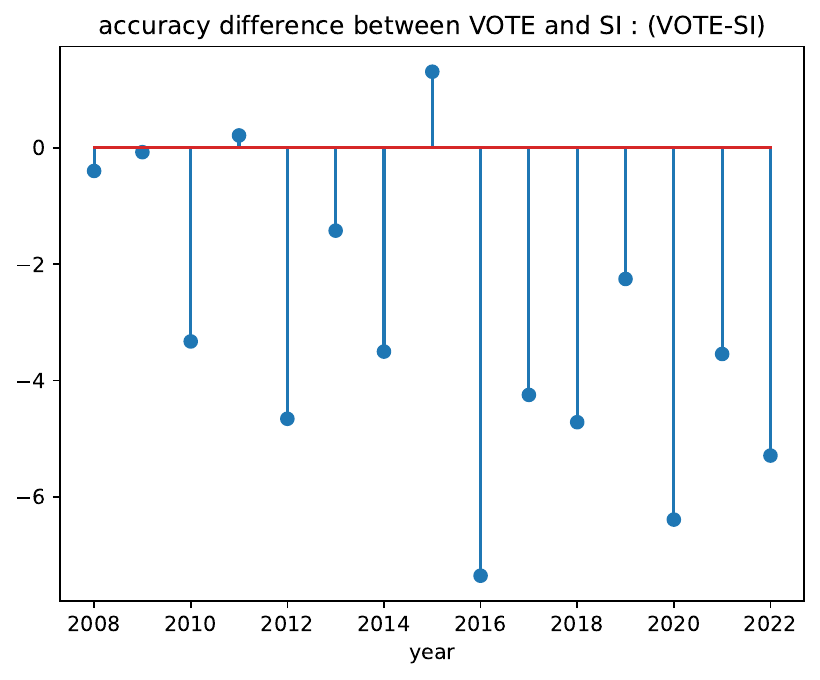}}

    \caption{$\%$ accuracy differences across various timestamps when \texttt{AWE} with majority voting is run with SI as the online learning algorithm.}
    \label{fig:images_si_vote}
\end{figure*}


\begin{figure*}[h]
    \centering
    \subfloat[FMOW]{\includegraphics[width=0.25\linewidth]{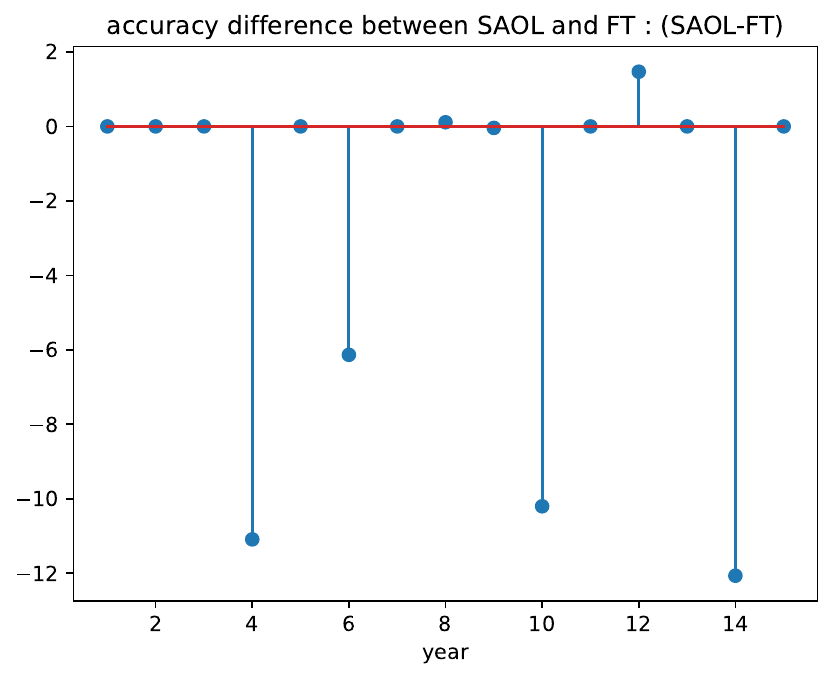}}\hfill
    \subfloat[Huffpost]{\includegraphics[width=0.25\linewidth]{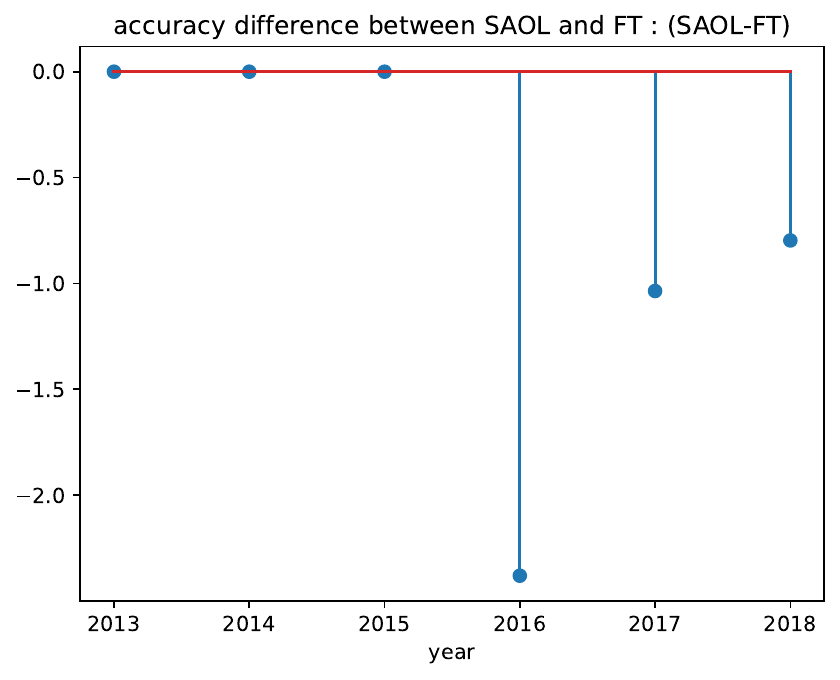}}\hfill
    \subfloat[Arxiv]{\includegraphics[width=0.25\linewidth]{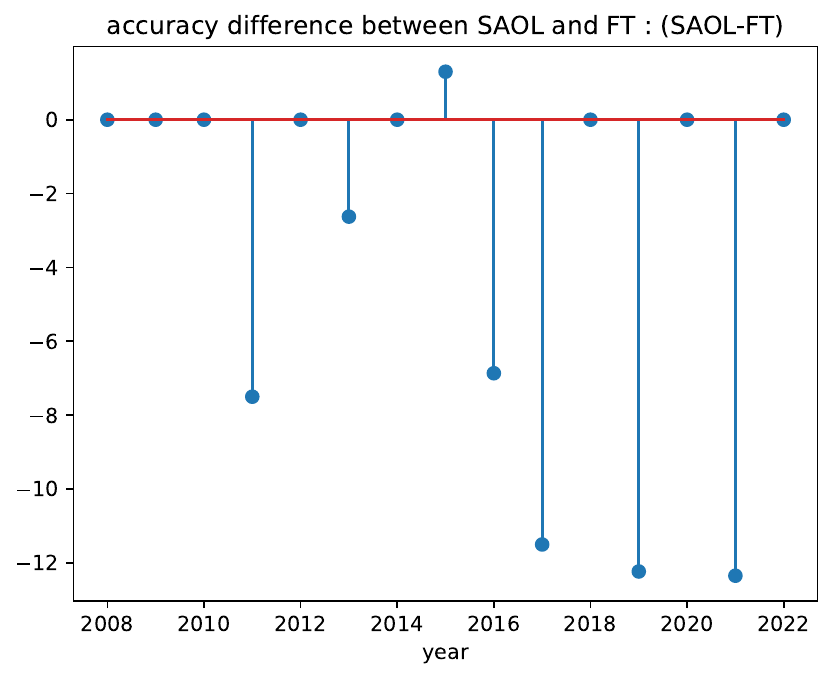}}

    \caption{$\%$ accuracy differences across various timestamps when SAOL is run with FT as the online learning algorithm.}
    \label{fig:images_ft_gc}
\end{figure*}

\begin{figure*}[h]
    \centering
    \subfloat[FMOW]{\includegraphics[width=0.25\linewidth]{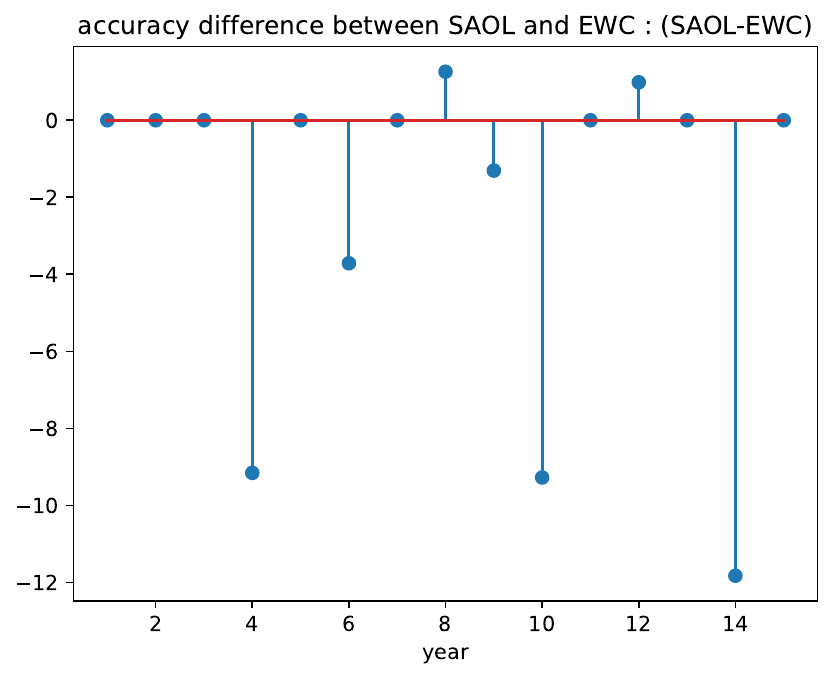}}\hfill
    \subfloat[Huffpost]{\includegraphics[width=0.25\linewidth]{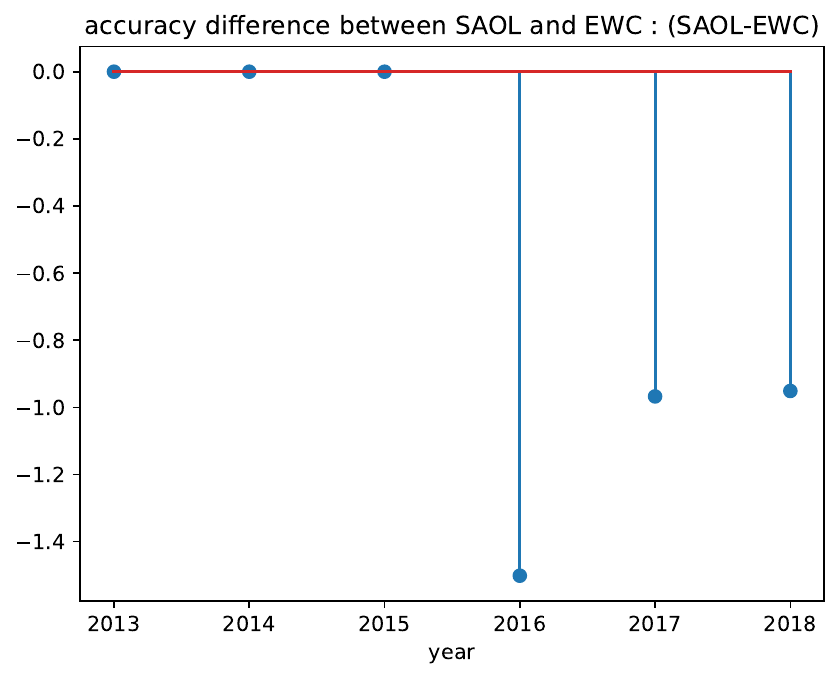}}\hfill
    \subfloat[Arxiv]{\includegraphics[width=0.25\linewidth]{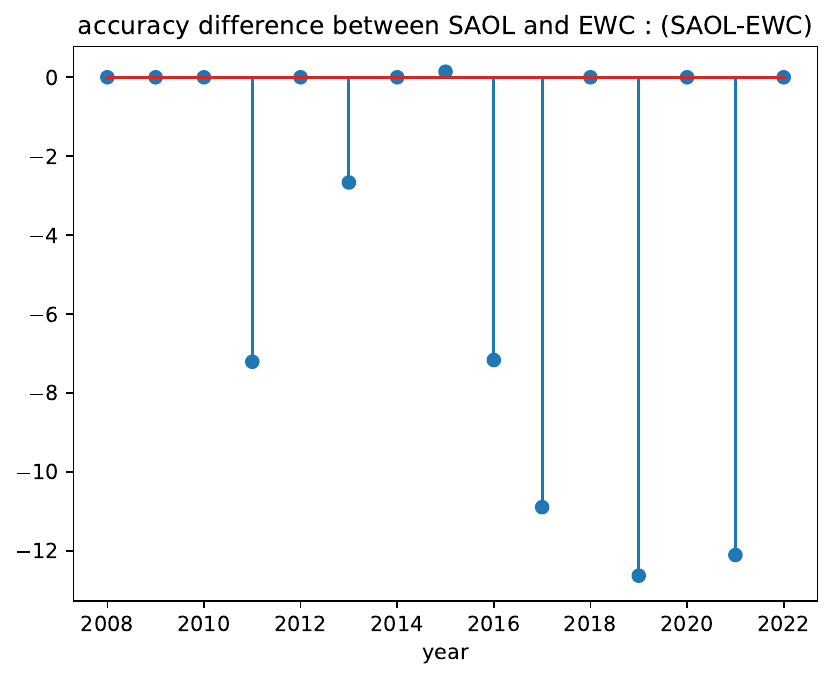}}

    \caption{$\%$ accuracy differences across various timestamps when SAOL is run with EWC as the online learning algorithm.}
    \label{fig:images_ewc_gc}
\end{figure*}

\begin{figure*}[h]
    \centering
    \subfloat[FMOW]{\includegraphics[width=0.25\linewidth]{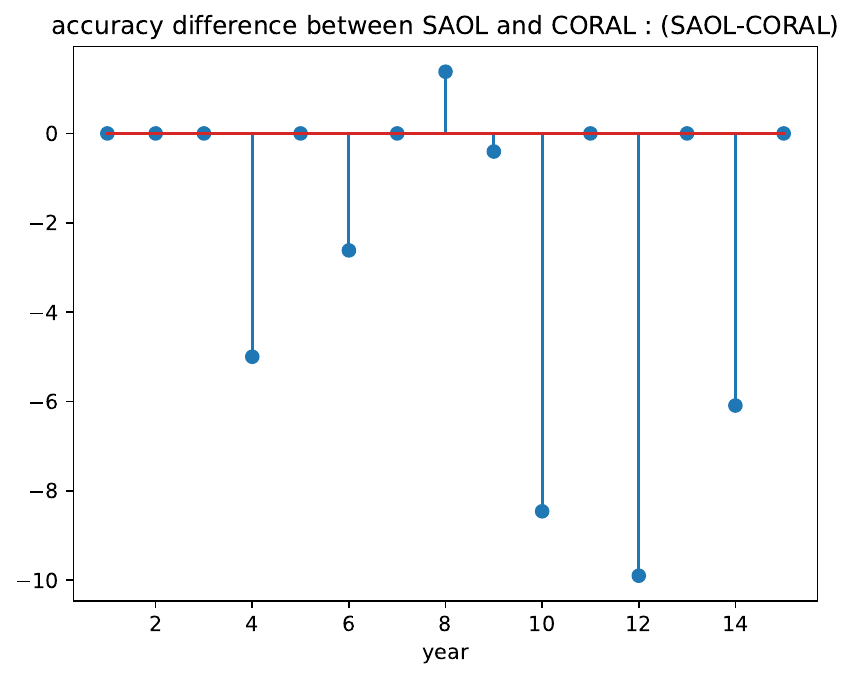}}\hfill
    \subfloat[Huffpost]{\includegraphics[width=0.25\linewidth]{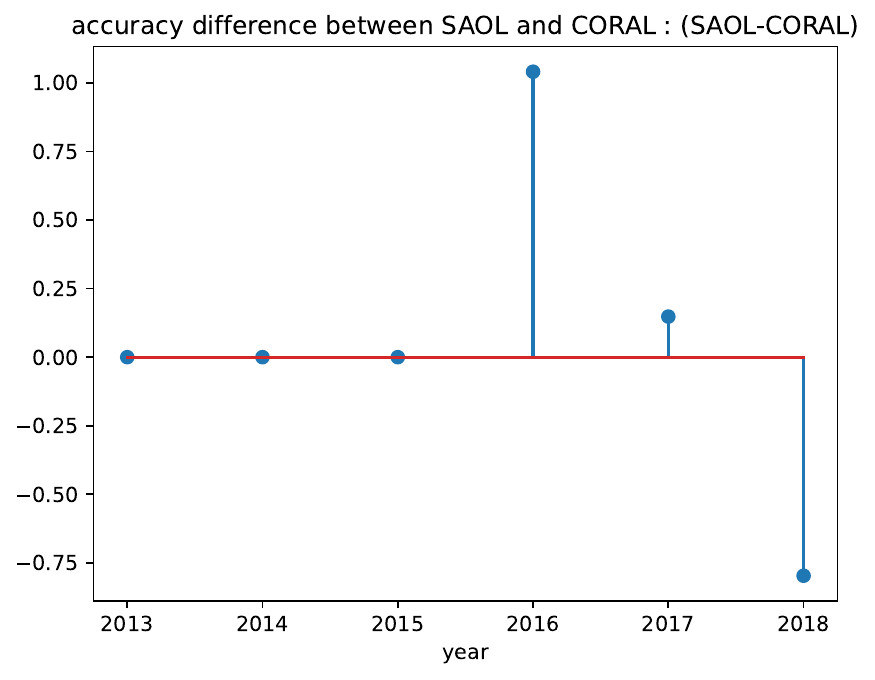}}\hfill
    \subfloat[Arxiv]{\includegraphics[width=0.25\linewidth]{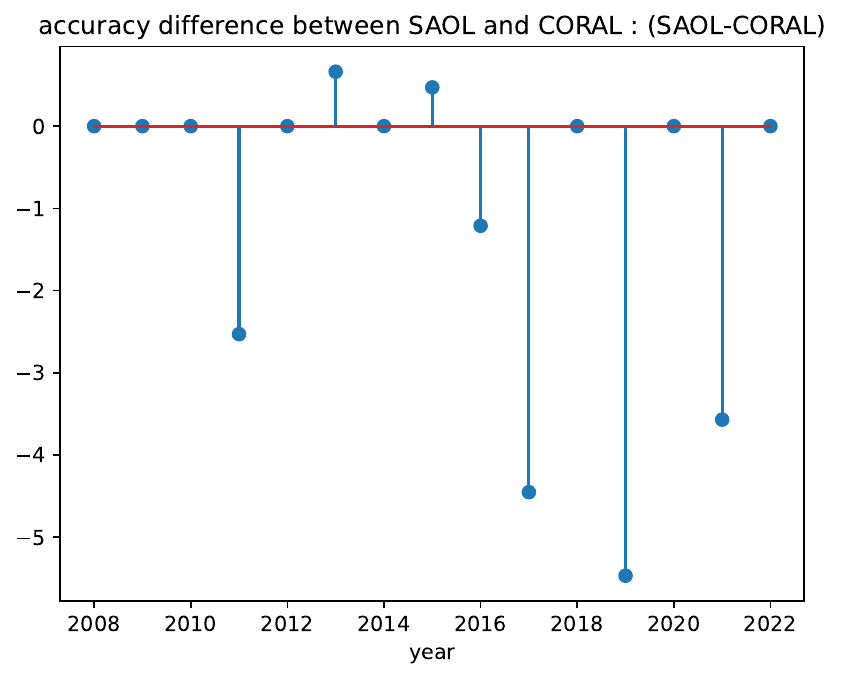}}

    \caption{$\%$ accuracy differences across various timestamps when SAOL is run with CORAL as the online learning algorithm.}
    \label{fig:images_coral_gc}
\end{figure*}

\begin{figure*}[h]
    \centering
    \subfloat[FMOW]{\includegraphics[width=0.25\linewidth]{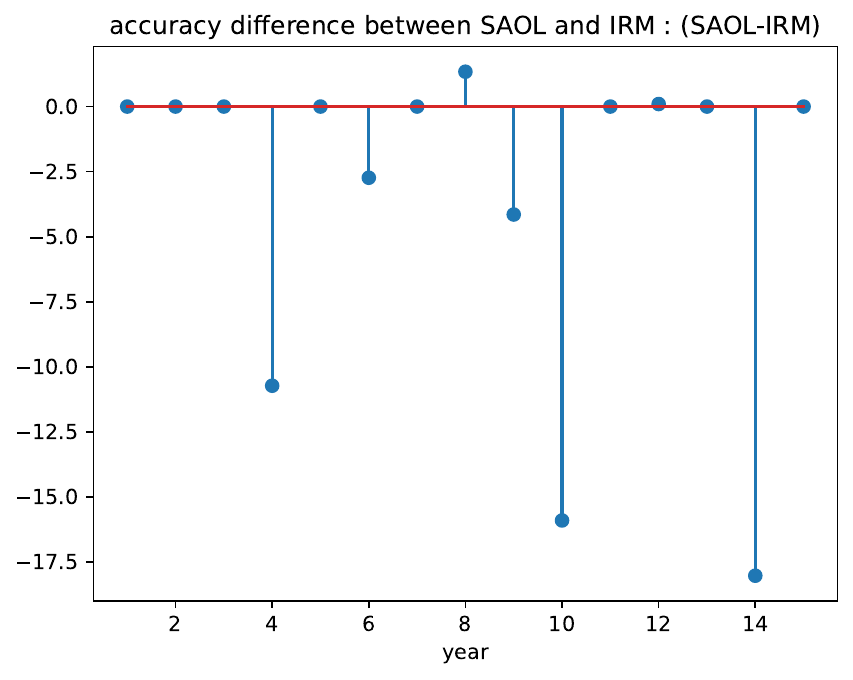}}\hfill
    \subfloat[Huffpost]{\includegraphics[width=0.25\linewidth]{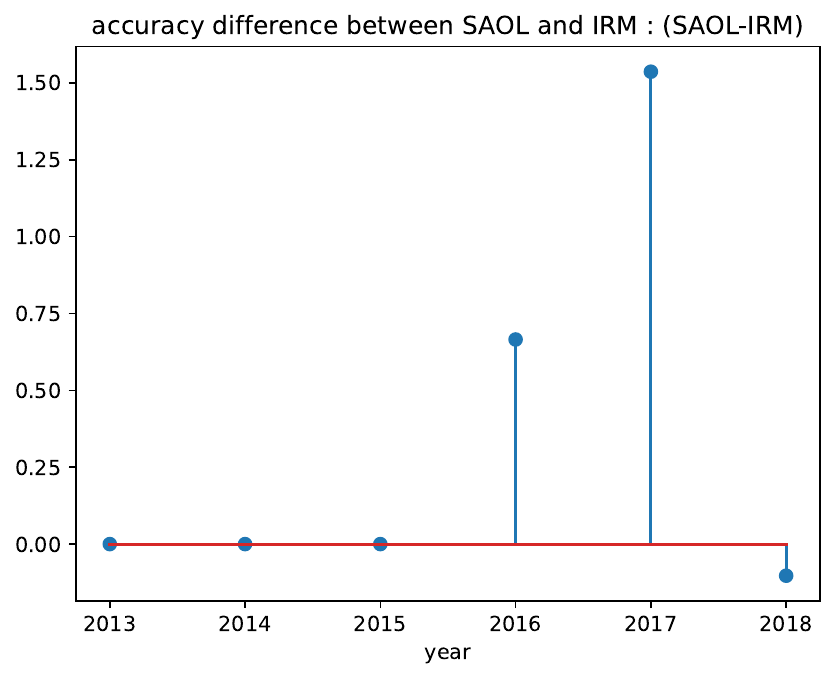}}\hfill
    \subfloat[Arxiv]{\includegraphics[width=0.25\linewidth]{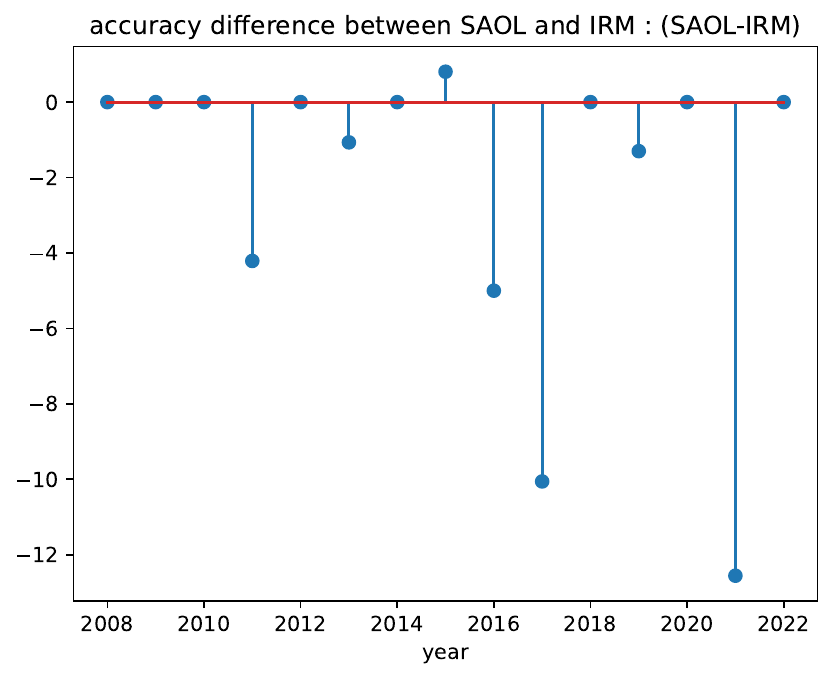}}

    \caption{$\%$ accuracy differences across various timestamps when SAOL is run with IRM as the online learning algorithm.}
    \label{fig:images_irm_gc}
\end{figure*}

\begin{figure*}[h]
    \centering
    \subfloat[FMOW]{\includegraphics[width=0.25\linewidth]{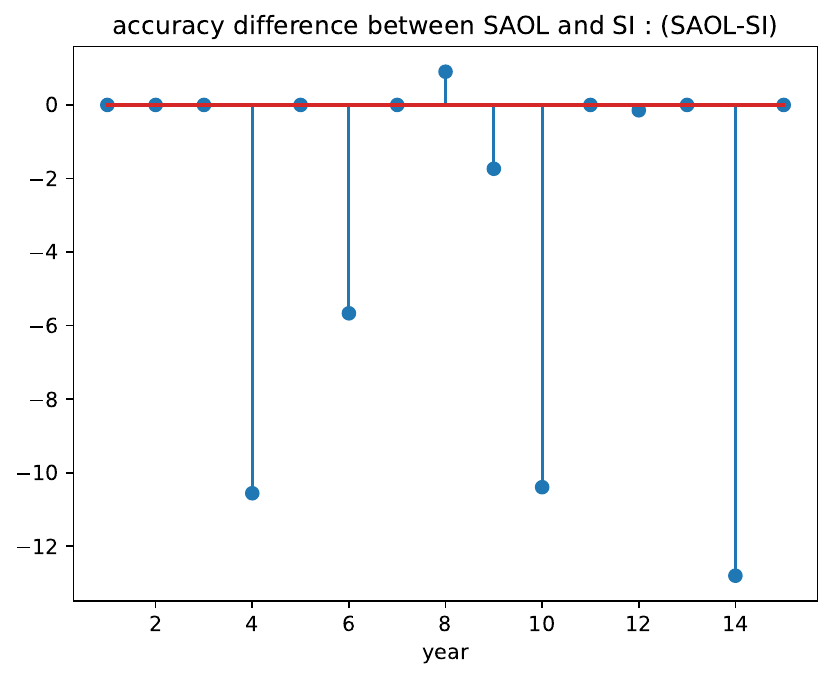}}\hfill
    \subfloat[Huffpost]{\includegraphics[width=0.25\linewidth]{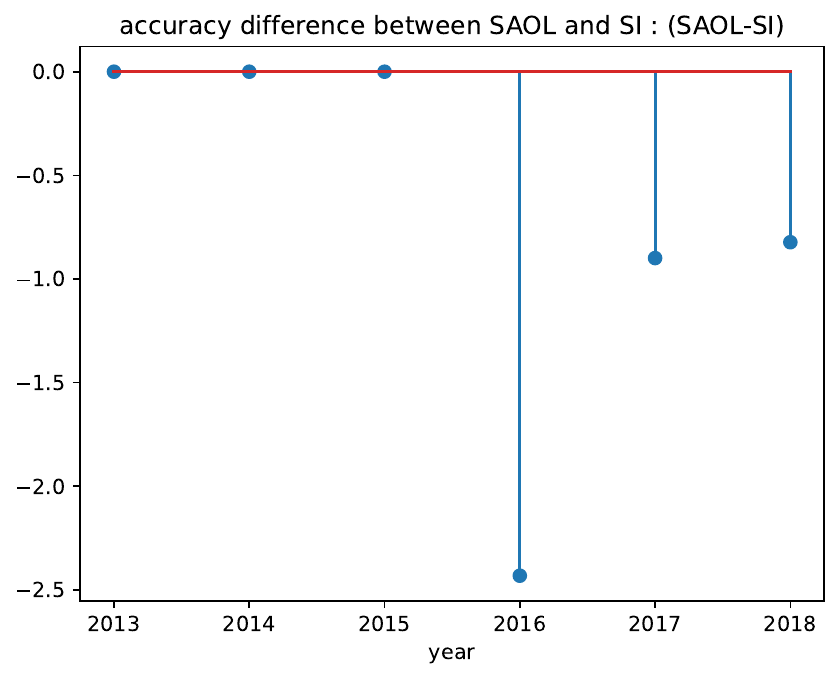}}\hfill
    \subfloat[Arxiv]{\includegraphics[width=0.25\linewidth]{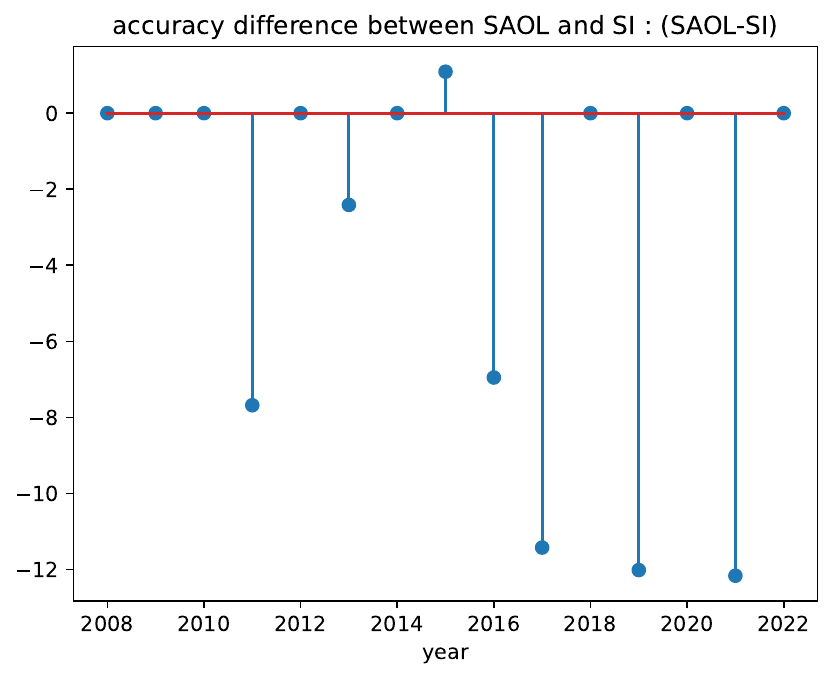}}

    \caption{$\%$ accuracy differences across various timestamps when SAOL is run with SI as the online learning algorithm.}
    \label{fig:images_si_gc}
\end{figure*}

\begin{table*}[]
\centering
\begin{tabular}{|c|c|c|c|}
\hline
\multirow{2}{*}{\begin{tabular}[c]{@{}c@{}}Input\\ OL\end{tabular}} & FMOW (SAOL)                                                   & Huffpost (SAOL)                                                & Arxiv (SAOL)                                                  \\ \cline{2-4} 
                                                                    & $\Delta$acc \%                                               & $\Delta$acc \%                                                & $\Delta$acc \%                                               \\ \hline
SI                                                                  & \begin{tabular}[c]{@{}c@{}}$-2.47$\\ $ \pm 0.082$\end{tabular} & \begin{tabular}[c]{@{}c@{}}$-0.17$\\ $ \pm 0.022$\end{tabular} & \begin{tabular}[c]{@{}c@{}}$-2.95$\\ $\pm 0.093$\end{tabular} \\ \hline
FT                                                                  & \begin{tabular}[c]{@{}c@{}}$-2$\\ $ \pm 0.077$\end{tabular}    & \begin{tabular}[c]{@{}c@{}}$0.03$\\ $\pm 0.014$\end{tabular}   & \begin{tabular}[c]{@{}c@{}}$-3.04$\\ $\pm 0.024$\end{tabular} \\ \hline
IRM                                                                 & \begin{tabular}[c]{@{}c@{}}$-4.06$\\ $ \pm 0.108$\end{tabular} & \begin{tabular}[c]{@{}c@{}}$0.18$\\ $\pm 0.034$\end{tabular}   & \begin{tabular}[c]{@{}c@{}}$-2.34$\\ $\pm 0.083$\end{tabular} \\ \hline
EWC                                                                 & \begin{tabular}[c]{@{}c@{}}$2.05$\\ $ \pm 0.078$\end{tabular}  & \begin{tabular}[c]{@{}c@{}}$0.06$\\ $\pm 0.013$\end{tabular}   & \begin{tabular}[c]{@{}c@{}}$-2.95$\\ $\pm 0.093$\end{tabular} \\ \hline
CORAL                                                               & \begin{tabular}[c]{@{}c@{}}$-0.40$\\ $ \pm 0.034$\end{tabular} & \begin{tabular}[c]{@{}c@{}}$-0.27$\\ $\pm 0.028$\end{tabular}  & \begin{tabular}[c]{@{}c@{}}$-0.48$\\ $\pm 0.038$\end{tabular} \\ \hline
\end{tabular}
\caption{Performance statistics for image (FMOW dataset) and text (Huffpost \& Arxiv datasets) modalities. We report the difference in average classification accuracy ($\%$) across all timestamps obtained by the majority voting-based black-box scheme minus that of the input OL. As discussed before, the majority voting-based variant of \texttt{AWE} can often degrade the performance, signifying the advantage of model selection via refined accuracy estimates.} 
\label{tab:vote}
\end{table*}

\begin{table*}[]
\centering
\begin{tabular}{|c|c|c|c|}
\hline
      & FMOW     & Huffpost & Arxiv    \\ \hline
SI    & $13/0/2$ & $5/0/1$  & $13/0/2$ \\ \hline
FT    & $14/0/1$ & $5/0/1$  & $12/0/3$ \\ \hline
IRM   & $9/1/5$  & $6/0/0$  & $8/0/7$  \\ \hline
EWC   & $12/2/1$ & $4/0/2$  & $12/0/3$ \\ \hline
CORAL & $15/0/0$ & $6/0/0$  & $15/0/0$ \\ \hline
\end{tabular}
\caption{The table summarizes win/draw/lose statistics for the \texttt{AWE} algorithm. We say that a win (draw/lose) happens at a time stamp if the accuracy of \texttt{AWE} is higher (equal/lower) than the input OL algorithm. We can see that in most cases, the fraction of timestamps where \texttt{AWE} does not degrade the accuracy of the base OL is well above $50\%$. This signifies the efficacy of \texttt{AWE} in optimizing the instantaneous regret at each round.} 
\label{tab:awe_win}
\end{table*}

\textbf{Experiments with Voting. }
Besides Eq.\eqref{eq:exp_weights} majority voting is a commonly used model combination scheme in stationary problems mainly due to its computational and statistical efficiencies. We run experiments where instead of using the map given by Eq.\eqref{eq:exp_weights} in \texttt{AWE}, at any round we output a prediction that is recommended by the majority of instances in $\text{ACTIVE}(t)$. The experimental results are reported in Tables \ref{tab:vote} and \ref{tab:other_win} and Figures \ref{fig:images_ft_vote}-\ref{fig:images_si_vote}. As the experimental results show, a map that does not take into account the refined accuracy estimation can often lead to performance degradation. This provides evidence on the efficacy of more nuanced aggregation methods that also take into account the accuracy of the instances as in \texttt{AWE}. We remind the readers that Theorem \ref{thm:mri} guarantees existence of at-least one model in the instance pool that has seen sufficient amount of data from the most recent distribution. However, the majority of instances can still have bad accuracy. Consequently, a majority voting strategy based model combination leads to poor performance. 

\begin{table*}[]
\centering
\begin{tabular}{|c|ccc|ccc|}
\hline
\multirow{2}{*}{} & \multicolumn{3}{c|}{Voting}                                            & \multicolumn{3}{c|}{SAOL}                                              \\ \cline{2-7} 
                  & \multicolumn{1}{c|}{FMOW}     & \multicolumn{1}{c|}{Huffpost} & Arxiv    & \multicolumn{1}{c|}{FMOW}    & \multicolumn{1}{c|}{Huffpost} & Arxiv   \\ \hline
SI                & \multicolumn{1}{c|}{$3/0/12$} & \multicolumn{1}{c|}{$3/0/3$}  & $2/0/13$ & \multicolumn{1}{c|}{$1/8/6$} & \multicolumn{1}{c|}{$0/3/3$}  & $1/8/6$ \\ \hline
FT                & \multicolumn{1}{c|}{$4/0/11$} & \multicolumn{1}{c|}{$3/0/3$}  & $2/0/13$ & \multicolumn{1}{c|}{$2/8/5$} & \multicolumn{1}{c|}{$0/3/3$}  & $1/8/6$ \\ \hline
IRM               & \multicolumn{1}{c|}{$4/0/11$} & \multicolumn{1}{c|}{$4/0/2$}  & $3/0/12$ & \multicolumn{1}{c|}{$2/8/5$} & \multicolumn{1}{c|}{$2/3/1$}  & $1/8/6$ \\ \hline
EWC               & \multicolumn{1}{c|}{$3/0/12$} & \multicolumn{1}{c|}{$3/0/3$}  & $1/0/14$ & \multicolumn{1}{c|}{$2/8/5$} & \multicolumn{1}{c|}{$0/3/3$}  & $1/8/6$ \\ \hline
CORAL             & \multicolumn{1}{c|}{$7/0/8$}  & \multicolumn{1}{c|}{$2/0/4$}  & $4/0/11$ & \multicolumn{1}{c|}{$1/8/6$} & \multicolumn{1}{c|}{$2/3/1$}  & $2/8/5$ \\ \hline
\end{tabular}
\caption{Table summarizing the win/draw/lose numbers for majority voting and SAOL algorithm from \cite{daniely2015strongly}. We see that the number of timestamps where the black-box scheme improves the performance of the base OL is significantly lower than that of Table \ref{tab:awe_win}. For the case of Voting, this signifies that a weighting strategy that does not take into account the refined accuracy estimates may not be useful in practice. For the case of SAOL, this signifies that optimizing the cumulative regret (at the rate of $1/\sqrt{|I|}$, where $I$ is the interval size for any window $I$) instead of instantaneous regret does not lead to significant gains in practice. See also section (a) of the notes of technical novelties.} 
\label{tab:other_win}
\end{table*}

\end{document}